\documentclass[letter,11pt]{article}
\usepackage[margin=1in]{geometry}
\usepackage{amsthm}
\usepackage{amsfonts}
\usepackage{amsmath,graphicx,multicol}
\usepackage{amssymb}
\usepackage[utf8]{inputenc}
\usepackage[english]{babel}
\usepackage{bm}
\usepackage{epstopdf}
\usepackage{hyperref}
\usepackage{enumitem}
\usepackage{algorithm,algorithmic}
\usepackage{cleveref}
\usepackage{xcolor}
\usepackage{subcaption}
\usepackage{url}
\newcommand{\E}{\mathbb{E}}
\newcommand{\R}{\mathbb{R}}
\usepackage{cite}
\usepackage{booktabs} 
\usepackage{mathtools}

\newtheorem{definition}{Definition}[]
\newtheorem{assumption}[]{Assumption}
\newtheorem{remark}{\textbf{Remark}}

\newtheorem{theorem}{Theorem}[]

\newtheorem{lemma}[]{Lemma}
\usepackage{multirow}
\usepackage{makecell}   
\usepackage{colortbl}   
\usepackage{hhline}
\definecolor{tabcolor}{RGB}{245,237,227} 

\title{\vspace{-10mm}\textbf{Lower Bounds and Proximally Anchored SGD for Non-Convex Minimization Under Unbounded Variance}}
\date{}
\vspace{-0.05in}
\author{
Arda Fazla$^\dagger$, Ege C. Kaya$^\dagger$, Antesh Upadhyay$^\dagger$, Abolfazl Hashemi\thanks{Authors with the School of Electrical and Computer Engineering, Purdue University, West Lafayette, IN 47907, USA. $^\dagger$ denotes equal contribution.}}
\begin{document}
\maketitle
\vspace{-5mm}
\begin{abstract}
\vspace{-7mm}
\noindent{
}
\\\\\noindent
Analysis of Stochastic Gradient Descent (SGD) and its variants typically relies on the assumption of uniformly bounded variance, a condition that frequently fails in practical non-convex settings, such as neural network training, as well as in several elementary optimization settings. While several relaxations are explored in the literature, the Blum-Gladyshev (BG-0) condition, which permits the variance to grow quadratically with distance has recently been shown to be the weakest condition. However, the study of the oracle complexity of stochastic first-order non-convex optimization under BG-0 has remained underexplored. In this paper, we address this gap and establish information-theoretic lower bounds, proving that finding an $\epsilon$-stationary point requires $\Omega(\epsilon^{-6})$ stochastic BG-0 oracle queries for smooth functions and $\Omega(\epsilon^{-4})$ queries under mean-square smoothness. These limits demonstrate an unavoidable degradation from classical bounded-variance complexities, i.e., $\Omega(\epsilon^{-4})$ and $\Omega(\epsilon^{-3})$ for smooth and mean-square smooth cases, respectively. To match these lower bounds, we consider \underline{Pr}oximally \underline{A}nchored \underline{ST}ochastic \underline{A}pproximation (PASTA), a unified algorithmic framework that couples Halpern anchoring with Tikhonov regularization to dynamically mitigate the extra variance explosion term permitted by the BG-0 oracle. We prove that PASTA achieves minimax optimal complexities across numerous non-convex regimes, including standard smooth, mean-square smooth, weakly convex, star-convex, and Polyak-Lojasiewicz functions, entirely under an unbounded domain and unbounded stochastic gradients.
\end{abstract}
\section{Introduction}
We consider the canonical unconstrained stochastic optimization problem
\begin{equation}
\min_{x \in \mathbb{R}^p} f(x) = \mathbb{E}_\xi[\tilde{f}(x,\xi)],
\end{equation}
where $f: \mathbb{R}^p \to \mathbb{R}$ is a generally non-convex stochastic objective function. Because exact global minimization of non-convex objectives is intractable in the worst case, the standard algorithmic goal relaxes to identifying an $\epsilon$-stationary point. 

The theoretical analysis of Stochastic Gradient Descent (SGD) and its variance-reduced variants generally relies on the bounded variance assumption $\mathbb{E}[\|\widetilde{\nabla} f(x) - \nabla f(x)\|^2] \le \sigma^2$ for an unbiased estimator $\widetilde{\nabla} f(x)$. While convenient, a uniformly bounded variance fails to hold even in elementary settings such as unconstrained least squares or deep neural network training, where the gradient variance inherently grows with the distance from the optimal solution. While several relaxations are explored in the literature, the Blum-Gladyshev (BG-0) condition \cite{blum1954approximation,gladyshev1965stochastic} is shown to be the weakest viable variance assumption \cite{alacaoglu2025towards}. To capture this structural growth, BG-0 posits that, for a given reference anchor $x_0$, the variance is permitted to scale quadratically: $\mathbb{E}[\|\widetilde{\nabla} f(x) - \nabla f(x)\|^2] \le B_v^2\|x - x_0\|^2 + b_v^2.$
The extra variance term $B_v^2\|x - x_0\|^2$ introduces an undesirable quadratic term in the analyses. Over unbounded domains, this extra variance term breaks the contractive mechanisms of traditional convergence proofs, which rely on the descent lemma, leading to divergence unless highly restrictive assumptions such as an artificially bounded domain are imposed. Let us formalize a generalization of the BG-0 oracle below, which is leveraged for all of our theoretical results.

\begin{assumption}[BG-0 Stochastic Oracle] \label{assump:oracle}
Let $\mathcal{X} \subset \mathbb{R}^p$ be a closed and convex set and consider some $x_0 \in \mathcal{X}$. The stochastic oracle $\widetilde{\nabla} f(x)$ is unbiased, i.e., $\E[\widetilde{\nabla} f(x)] = \bar{g}$ for some $\bar{g} \in \partial f(x)$. Also, there exist constants $B_v \ge 0$ and $b_v \ge 0$ such that
\begin{equation}
\inf_{\bar{g} \in \partial f(x)}\mathbb{E} \left[  \left\| \widetilde{\nabla} f(x)  - \bar{g} \right\|^2 \right]\le B_v^2\|x - x_0\|^2 + b_v^2 \quad \forall x \in \mathcal{X}.
\end{equation}
For a  differentiable function $f$, $\bar{g}$ reduces to the unique gradient $\nabla f(x)$ such that the variance satisfies $\mathbb{E}\left[\|\widetilde{\nabla} f(x) - \nabla f(x)\|^2\right] \le B_v^2\|x - x_0\|^2 + b_v^2.$
\end{assumption}

In convex settings, recent literature \cite{neu2024dealing,alacaoglu2025towards} has successfully leveraged Halpern anchoring \cite{halpern1967fixed} to stabilize trajectories; by drawing the current state toward a static reference point $x_0$ with an anchoring sequence $\beta_t \in (0, 1)$ alongside the step size $\eta_t > 0$, the proximal anchored update is defined by $x_{t+1} = \beta_t x_0 + (1-\beta_t)x_t - \eta_t \widetilde{\nabla} f(x_t).$
This mechanism, along with the assumption of convexity (which inherently enables controlling the growth of the distance), enables us to deal with the extra variance term $B_v^2\|x - x_0\|^2$ and to recover the classical rates under the bounded variance assumption.

However, in contrast with convex functions, the non-convex regimes generally lack global contractive mechanisms to inherently counteract the expanding noise. As all known optimal convergence rates and information-theoretic lower bounds for non-convex stochastic optimization \cite{arjevani2023lower} rely crucially on the uniform bounded-variance assumption; it has remained open whether the classical rates can even be recovered under the BG-0 condition, and more broadly, what the true information-theoretic limits of non-convex optimization are in this relaxed variance regime.

\subsection{Contributions}
Intuitively, one should expect worse complexity bounds under the BG-0 condition for non-convex minimization. If the variance scales quadratically with  distance, an adversarial landscape can force an algorithm to traverse a vast distance from the initialization before encountering the stationary region. As the optimizer traverses this distance, the noise magnitude may explode as a function of $\epsilon^{-1}$, exponentially degrading the signal-to-noise ratio, and as a consequence, the attainable oracle complexity. Somewhat interestingly, we formally leverage this intuition to prove that the celebrated bounded-variance lower bounds of \cite{arjevani2023lower} do not hold under BG-0; the  limits of performance are  worse.

To match these new  limits, we consider a unified algorithmic framework that enables the study of various non-convex regimes without bounded domain and stochastic gradient assumptions. To circumvent the incurred bias of traditional Halpern tracking, which may prevent convergence under non-convexity, we enforce a  coupling between the anchoring parameter and the step size, i.e. $\beta_t = \lambda \eta_t$ for some $\lambda >0$. Under this parameter coupling, the progression of the proximal anchored update becomes algebraically equivalent to a standard step on the strongly convex Tikhonov-regularized surrogate
$F_\lambda(x) = f(x) + \frac{\lambda}{2}\|x - x_0\|^2,$
for a sufficiently large $\lambda.$
We introduce this Tikhonov-regularized surrogate not merely as a stabilization heuristic, but a principled (and to our understanding, necessary) approach to optimize diverse non-convex functions, especially \emph{weakly convex} ones, a rich class that, despite its potentially misleading name, generalizes standard smooth non-convex functions. This specific coupling absorbs the extra variance in Assumption \ref{assump:oracle} by shifting the optimization target to the unique global minimizer of the regularized subproblem. Because targeting $F_\lambda(x)$ induces a stationary bias relative to the true objective $f$, we embed optimizing the Tikhonov-regularized surrogate as an inner loop within an epoch-based framework, sequentially updating the anchor $x_0$ to mitigate the regularization bias. We call this unified approach \underline{P}roximally \underline{A}nchored \underline{ST}ochastic \underline{A}pproximation (PASTA), which is summarized as Algorithm \ref{alg:pasta}. 
From the perspective of fixed-point methods, PASTA acts as the interpolation between two classical methodologies: configuring the algorithm for a single epoch (i.e. $S = 1$) recovers the anchored Halpern iteration \cite{halpern1967fixed} (with our specific coupling), while limiting the inner-loop to a single step (i.e. $K = 1$) reduces the dynamics to the Krasnoselskii-Mann iteration \cite{krasnosel1955two,mann1953mean}.

With this motivation and discussion, we now summarize our key theoretical contributions:

\begin{itemize}[leftmargin=*]
    \item \textbf{Information-Theoretic Lower Bounds:} We establish the oracle complexity lower bounds for finding $\epsilon$-stationary points of smooth functions under the BG-0 condition over unbounded domains. By leveraging the exploding variance in remote landscape regions, we prove that any randomized algorithm requires $\Omega(\epsilon^{-6})$ stochastic gradient evaluations, a degradation from the classical $\Omega(\epsilon^{-4})$ bounded-variance result. Furthermore, under the additional assumption of mean-square smoothness, we establish a tight $\Omega(\epsilon^{-4})$ lower bound, again a degradation from the classical $\Omega(\epsilon^{-3})$ limit.
    
    \item \textbf{Minimax Optimal Upper Bounds:} We prove that PASTA successfully matches these new, degraded lower bounds without requiring bounded domains and bounded stochastic gradient assumptions. We show that specific instantiations of PASTA achieve the optimal $\mathcal{O}(\epsilon^{-6})$ complexity for smooth functions. Moreover, by integrating variance reduction mechanisms, PASTA recovers the optimal $\mathcal{O}(\epsilon^{-4})$ complexity under mean-square smoothness.
    
    \item \textbf{Geometric Universality:} We extend the convergence analysis of PASTA across numerous non-convex regimes. Beyond standard smoothness, we provide optimal convergence guarantees under weak convexity \cite{davis2019stochastic}, star-convexity \cite{nesterov2006cubic}, and the Polyak-Lojasiewicz condition \cite{polyak1963gradient}, demonstrating the universal effectiveness of PASTA against unbounded variance in diverse non-convex stochastic optimization setups.
\end{itemize}

\noindent\textbf{Notation:} We consider a probability space $(\Omega, \mathcal{F}, \mathbb{P})$ and denote the expectation with respect to the underlying probability measure by $\mathbb{E}[\cdot]$. The conditional expectation with respect to the filtration $\mathcal{F}_{t-1}$, representing the history up to the start of iteration $t$, is denoted by $\mathbb{E}_t[\cdot] := \mathbb{E}[\cdot \mid \mathcal{F}_{t-1}]$.
\section{Related Work}\label{sec:related_work}
\noindent\textbf{Variance assumptions.} Classical analyses of SGD  assume uniformly bounded stochastic subgradients \cite{nemirovski2009robust, moulines2011non, rakhlin2012making, shamir2013stochastic}, or globally bounded variance \cite{bottou2018optimization, jain2018parallelizing}. 
To mitigate this, several relaxations have been proposed. The Expected Smoothness or ABC condition \cite{gorbunov2020unified, khaled2022better, khaled2023unified, ilandarideva2023accelerated, grimmer2019convergence} bounds the variance by a constant plus a term proportional to the gradient norm and suboptimality. While less restrictive, \cite{alacaoglu2025towards} demonstrates that the BG-0 condition is weaker and more universally applicable. The origins of the BG-0 condition trace back to the foundational asymptotic analyses of \cite{robbins1951stochastic, blum1954approximation, gladyshev1965stochastic}, and it has seen a recent resurgence in both asymptotic \cite{wang2016stochastic, cui2021analysis} and non-asymptotic contexts \cite{asi2019stochastic, jacobsen2023unconstrained, telgarsky2022stochastic, domke2023provable, neu2024dealing}. Other \emph{orthogonal, complementary relaxations} include heavy-tailed noise settings \cite{gorbunov2023high, nguyen2023improved, gurbuzbalaban2021heavy}, which we do not consider here.

\noindent\textbf{Information-theoretic lower bounds.}
In the noiseless convex setting, \cite{nemirovski1983problem} and \cite{nesterov2004introductory} established the $\Omega(\sqrt{1/\epsilon})$ bounds, while \cite{carmon2019lower_i} proved the $\Omega(\epsilon^{-2})$ bound for smooth non-convex functions. For stochastic convex optimization, a long line of literature has mapped the dimension-dependent and independent boundaries via statistical estimation reductions \cite{raginsky2011information, agarwal2012information, woodworth2016tight, foster2019complexity}.

In the stochastic non-convex regime \cite{arjevani2023lower} proved that under the strictly bounded variance assumption, any randomized algorithm requires $\Omega(\epsilon^{-4})$ queries to find an $\epsilon$-stationary point. Furthermore, under the addition of mean-square smoothness, \cite{arjevani2023lower} establish an accelerated $\Omega(\epsilon^{-3})$ lower bound. Earlier algorithm-specific bounds include \cite{drori2019complexity}, while finite-sum lower bounds were explored in \cite{arjevani2016dimension, fang2018spider}. Our lower bounds conceptually depart from \cite{arjevani2023lower} by leveraging the unbounded nature of the BG-0 oracle. Because the BG-0 variance explodes in remote regions of the landscape, the zero-chain constructions used in \cite{arjevani2023lower} lead to signal-to-noise degradation. By formally embedding a coercive traversal manifold into the zero-chain, we prove that the bounds of \cite{arjevani2023lower} degrade under the BG-0 condition, establishing strictly worse $\Omega(\epsilon^{-6})$ and $\Omega(\epsilon^{-4})$  limits.

\noindent\textbf{Stochastic non-convex and weakly convex minimization.} The study of SGD for non-convex functions gained modern prominence with the work of \cite{ghadimi2013stochastic}, who established the optimal $\mathcal{O}(\epsilon^{-4})$ oracle complexity for finding $\epsilon$-stationary points of smooth functions. Subsequent research heavily explored the capacity of SGD and its perturbed variants to escape saddle points and find local minima \cite{ge2015escaping, jin2015how, lee2016gradient, duchi2018stochastic}. 

Beyond standard smoothness, \emph{weakly convex} stochastic optimization, which encompasses a broader class of non-smooth, non-convex problems prevalent in robust statistics and deep learning, has drawn attention. The seminal work of \cite{davis2019stochastic} developed the stochastic model-based framework, establishing $\mathcal{O}(\epsilon^{-4})$ rates for the stochastic proximal point and prox-linear methods by tracking the stationarity of the Moreau envelope, a natural convergence criterion for the optimization of weakly convex functions. This paved the way for extensive studies into proximal, compositional, constrained, and distributed non-convex methods \cite{duchi2018stochastic, drusvyatskiy2019efficiency, mai2020convergence,huang2023oracle,liu2025single,upadhyay2026fedsgm}. However, the theoretical guarantees in these works implicitly or explicitly rely on globally-bounded variance or bounded domain assumptions, which we relax in this paper.

\noindent\textbf{Variance reduction and high-order methods.}
Large effort has been devoted to surpassing the standard $\mathcal{O}(\epsilon^{-4})$ SGD rate using variance reduction or higher-order information. In the variance reduction category, algorithms based on SVRG, SAGA, SPIDER, SNVRG, and PAGE leverage mean-square smoothness to attain the optimal $\mathcal{O}(\epsilon^{-3})$ rate in various settings including centralized and distributed optimization and fixed-point calculation \cite{allen2016variance, reddi2016stochastic, lei2017non, fang2018spider, zhou2020stochastic, wang2018spiderboost, cutkosky2019momentum, nguyen2017sarah, li2021page,chen2021communication,das2022faster,hashemi2024unified}. Alternatively, high-order methods utilize Lipschitz continuous Hessians to achieve rates of $\mathcal{O}(\epsilon^{-3.5})$ or better via cubic regularization or perturbed acceleration \cite{xu2018first, allen2018neon2, tripuraneni2018stochastic, fang2019sharp}. Crucially, \emph{all} of these accelerated rates depend on the uniform bounded-variance assumption. Our work extends variance reduction mechanics to the BG-0 regime for stochastic non-convex optimization under mean-square smoothness, proving that while acceleration is possible (from $\mathcal{O}(\epsilon^{-6})$ to $\mathcal{O}(\epsilon^{-4})$), the unbounded noise inherently prevents attaining the $\mathcal{O}(\epsilon^{-3})$ rates under bounded variance and mean-square smoothness.

\noindent\textbf{Halpern anchoring and fixed-point dynamics.}
As discussed, PASTA is connected to fixed-point methods. Finding a stationary point maps naturally to finding a fixed point of a non-expansive mapping. Classical approaches include the Krasnoselskii-Mann iteration \cite{krasnosel1955two,mann1953mean}, which lacks strong convergence guarantees, and the Halpern iteration \cite{halpern1967fixed}, which anchors the sequence to a fixed initial point to force strong convergence. 
Recently, Halpern anchoring has been leveraged in convex and min-max optimization to achieve accelerated rates  \cite{diakonikolas2020halpern, yoon2021accelerated}. \cite{neu2024dealing} and \cite{alacaoglu2025towards} demonstrated that Halpern-like updates can counteract the divergence of BG-0 noise in convex and bilinear min-max settings. However, traditional Halpern tracking introduces logarithmic convergence penalties due to uncoupled parameter decay. PASTA, on the other hand, couples the step size and Halpern anchoring parameter via $\beta_t = \lambda \eta_t$. This seamlessly transforms the geometric anchoring into an exact stochastic proximal step on a Tikhonov-regularized surrogate, providing the requisite strong convexity to mitigate the extra variance term in general non-convex and weakly convex landscapes without logarithmic decay artifacts.
\section{Non-Convex Lower Bounds with BG-0 Oracle}
\label{sec:lower_bounds}
In this section, we establish oracle complexity lower bounds for finding $\epsilon$-stationary points of smooth non-convex functions under the BG-0 variance condition over unbounded domains. For smooth functions, $\E[\|\nabla f(x)\|^2] \leq \epsilon^2$ serves as a natural convergence criterion. Classical lower bounds for non-convex stochastic optimization rely on the bounded variance assumption, yielding an optimal rate of $\Omega(\epsilon^{-4})$ for generally smooth functions and $\Omega(\epsilon^{-3})$ when mean-square smoothness holds \cite{arjevani2023lower}. We demonstrate that allowing the variance to grow quadratically with the distance inherently degrades the information-theoretic limits of the optimization landscape. Let us first state smoothness formally, and then we state our lower bound under Assumption \ref{assump:oracle}.

\begin{assumption}[Smoothness] \label{assump:smooth}
A function $f: \mathbb{R}^p \to \mathbb{R}$ is $L$-smooth if for all $x, y \in \mathbb{R}^p$ it holds that
$\|\nabla f(x) - \nabla f(y)\| \le L\|x - y\|.$
\end{assumption}

\begin{theorem}[Lower Bound under Smoothness] \label{thm:bg0_lower_bound}
For any $L > 0$, $\Delta > 0$, $B_v \ge 0$, $b_v \ge 0$, and $\epsilon \le \sqrt{\frac{L\Delta}{1536\ell_1}}$ for some numerical constant $\ell_1>0$, any randomized algorithm requires $\Omega\bigl( \frac{B_v^2 L \Delta^3}{\epsilon^6} + \frac{b_v^2 L \Delta}{\epsilon^4} \bigr)$ stochastic oracle queries in expectation to find a point $x$ satisfying $\mathbb{E}[\|\nabla F(x)\|] \le \epsilon$ for a function $F$ that satisfies $F(x_0) - \inf_x F(x) \le \Delta$, and Assumptions \ref{assump:oracle} and \ref{assump:smooth}.
\end{theorem}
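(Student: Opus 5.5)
The lower bound has two terms, and I will prove each separately. The second term, $\Omega(b_v^2 L\Delta\epsilon^{-4})$, needs no new work. Take $B_v=0$: the BG-0 oracle of Assumption \ref{assump:oracle} then contains every bounded-variance oracle with variance $b_v^2$. The construction of \cite{arjevani2023lower} therefore applies directly. It uses the scaled zero-chain function $\bar F_T$ with Bernoulli-masked "probability-$p$ zero-chain" gradient estimators, and gives $\Omega(b_v^2L\Delta\epsilon^{-4})$ under the condition $\epsilon\le\sqrt{L\Delta/\ell_1}$ up to constants. Since a lower bound against a subclass of oracles is a lower bound for the whole class, only the first term needs new work. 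The two terms then combine through $\max\{a,b\}\ge(a+b)/2$.

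For the term $B_v^2L\Delta^3\epsilon^{-6}$, I follow the \cite{arjevani2023lower} recipe: a hard function $F(x)=\frac{L\lambda^2}{\ell_1}\bar F_T(x/\lambda)$, where $\lambda$ is the spatial scale and $T$ the chain length, together with an oracle that reveals the next coordinate only with probability $p$. That oracle has variance of order $(1-p)/p\cdot(\text{gradient scale})^2\approx \frac{1}{p}\cdot\frac{L^2\lambda^2}{\ell_1^2}$. The standard scaling $\lambda\asymp\epsilon\ell_1/L$ and $T\asymp L\Delta/(\ell_1\epsilon^2)$ makes the gradient norm at least $\epsilon$ until the chain is discovered, and keeps $F(x_0)-\inf F\le\Delta$. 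The expected number of queries is then $\gtrsim T/p$.

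The key new idea is to use the BG-0 slack. Any point reached after progressing $t$ coordinates of the chain has $\|x-x_0\|$ at least on the order of $\lambda\sqrt{t}$, since each discovered coordinate must be moved by $\Theta(\lambda)$ to make progress. Toward the tail of the chain, $t\ge T/2$, the allowed variance is therefore $B_v^2\lambda^2T$. I would design the masking probability to depend on position: $p_t$ is small only on coordinates beyond $T/2$. This is done by letting the oracle act as a zero-chain oracle only in the region where $\|x-x_0\|^2\gtrsim\lambda^2 T$. Nearer the origin it returns the exact gradient, which costs nothing in variance.

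Setting the variance equal to the budget, $\frac{1}{p}\cdot\frac{L^2\lambda^2}{\ell_1^2}\lesssim B_v^2\lambda^2T$, gives $1/p\asymp B_v^2T\ell_1^2/L^2$. Hence the query count satisfies
\begin{equation*}
\frac{T}{p}\gtrsim \frac{B_v^2T^2}{L^2}\asymp \frac{B_v^2 L^2\Delta^2}{L^2\epsilon^4}.
\end{equation*}
This is only $\epsilon^{-4}$. To reach $\epsilon^{-6}$, the distance must scale like $\Delta/\epsilon$ rather than $\lambda\sqrt T$. So I would embed a coercive traversal direction: an extra coordinate $x_{\perp}$ along which $F$ has slope of order $\epsilon$ (for instance a smoothed ramp $\epsilon\,h(x_\perp)$ of total drop $\Delta/2$ over length $\Delta/\epsilon$). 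The zero chain is placed at the end of this ramp.

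Any $\epsilon$-stationary point must then satisfy $\|x-x_0\|\gtrsim\Delta/\epsilon$, which permits variance $B_v^2\Delta^2/\epsilon^2$ in the chain region. Redoing the calculation with this budget gives $1/p\asymp B_v^2\Delta^2/\epsilon^4$ (the gradient scale there being of order $\epsilon$). Combined with a chain of length $T\asymp L\Delta/\epsilon^2$, whose half budget $\Delta/2$ comes from splitting $\Delta$ between the ramp and the chain, this yields
\begin{equation*}
\frac{T}{p}\asymp\frac{B_v^2L\Delta^3}{\epsilon^6}.
\end{equation*}

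The final step replicates the concentration and "progress" argument of \cite{arjevani2023lower}. Progress along the chain is a sum of geometric$(p)$ waiting times, and it holds for randomized algorithms via random orthogonal rotation in high dimension. This shows that with constant probability the algorithm does not finish the chain within $cT/p$ queries, so $\mathbb{E}\|\nabla F(x)\|>\epsilon$.

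I expect the main obstacle to be the ramp and chain geometry. Smoothness must be preserved where the ramp meets the chain, with $L$ unchanged up to constants, and every point with small gradient must lie far from $x_0$. Points on the ramp have gradient at least $\epsilon$ by construction, but the junction region and the behaviour of the chain off its axis need care; I would use a product-type construction with bump cutoffs. I also need to verify that the masked oracle's variance at every point, including points near the ramp, stays below $B_v^2\|x-x_0\|^2+b_v^2$. My fix is to switch the masking on only when $x_\perp$ is beyond the ramp length. The constant $1536$ in the condition on $\epsilon$ would come from tracking these cutoff constants.
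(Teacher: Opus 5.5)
Your overall route is the paper's. It uses a one-dimensional travel coordinate $u$ with slope $-2\epsilon$ over length $D=\Delta/(4\epsilon)$, and a $C^2$ cutoff $\phi(u)$ multiplying a rescaled zero-chain $F_{\text{scaled}}$ of length $T\asymp L\Delta/\epsilon^2$. The masking probability is calibrated to the variance allowed at distance $\Theta(\Delta/\epsilon)$, so $1/p\asymp B_v^2\Delta^2/\epsilon^4$, and the argument finishes with Lemma~\ref{lem:prob-zero-chain} and a random rotation. Getting the $b_v^2$ term from a separate bounded-variance instance is also fine; the paper obtains both terms at once by calibrating $p$ to $B_v^2\Delta^2/(64\epsilon^2)+b_v^2$. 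However, the junction, which you leave open as ``the main obstacle'', is exactly where an extra idea is needed, and the fix you propose fails as stated.

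\textbf{Where masking must be on.} Masking must be active wherever $\phi>0$. Otherwise the exact gradient $\phi(u)\nabla F_{\text{scaled}}(y)$ of a partially activated chain reveals a new coordinate at every query. The chain is then solved with $O(T)$ queries inside the activation window, and the $1/p$ factor disappears.

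\textbf{Where the window must sit.} If instead $\phi$ starts rising only after the ramp has flattened at some $u^\star$, then $f_0'(u^\star)=\phi(u^\star)=\phi'(u^\star)=0$. So $(u^\star,0)$ is an exactly stationary point with no chain progress. The activation window must therefore lie inside the ramp, and masking cannot be switched on ``only beyond the ramp length''. The paper lets $\phi$ rise on $[D/2,D]$. Its estimator $\phi(u)\,2\epsilon\,\bar g_T(y/\lambda,z)$ is masked exactly for $u>D/2$, where $\|x-x_0\|^2>\Delta^2/(64\epsilon^2)$ still gives the $\Theta(B_v^2\Delta^2/\epsilon^2)$ budget.

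\textbf{The cross term.} Inside the window, the term $\phi'(u)F_{\text{scaled}}(y)$ in $\partial_u F$ can offset the ramp slope, because $\bar F_T$ takes both signs. There $\phi$, and hence the $y$-gradient, can also be small, so nothing else rules out an $\epsilon$-stationary point. The paper handles this by subtracting $\sup\bar F_T$, which gives $-\Delta/4\le F_{\text{scaled}}\le 0$. Since $\phi'\ge 0$, the cross term then reinforces the slope and $\partial_uF\le-2\epsilon$ for all $u\le D$. Alternatively, you could make $\|\phi'\|_\infty\sup|F_{\text{scaled}}|$ a small fraction of the slope by using a longer window or a smaller chain amplitude.

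\textbf{The ramp slope.} Your example ramp drops $\Delta/2$ over length $\Delta/\epsilon$, so its slope is $\epsilon/2<\epsilon$. Its points would already be $\epsilon$-stationary. The slope must exceed $\epsilon$, which is why the paper uses slope $2\epsilon$ over length $\Delta/(4\epsilon)$.
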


The proof is in Appendix \ref{sec:bg0_lower_bound}. The emergence of the $\Omega(\epsilon^{-6})$ complexity represents a degradation compared to the classical $\Omega(\epsilon^{-4})$ rate. To understand the source of this lower bound, it is instructive to contrast our hard instance construction with the seminal bounded-variance framework established by \cite{arjevani2023lower}. Under a uniform variance bound $\sigma^2$, the standard lower bound works by constructing a high-dimensional probabilistic zero-chain $F:\mathbb{R}^T \rightarrow \mathbb{R}$ (see \eqref{eq:f_unscaled} for its definition). In this chain, the true gradient signal is masked by noise, requiring $\Omega(\sigma^2/\epsilon^2)$ queries in expectation to discover each subsequent relevant coordinate. To satisfy the stationarity condition, an algorithm must activate $\Theta(\Delta L / \epsilon^2)$ such coordinates, thereby leading to the $\Omega(\Delta L \sigma^2 / \epsilon^4)$ bound.

Our construction, on the other hand, leverages the dependence of the BG-0 oracle on distances against the optimization algorithm. First, we consider a suitable scaling of the zero-chain $F_{\text{scaled}}:\mathbb{R}^T \rightarrow \mathbb{R}$, critically requiring a shift not used by \cite{arjevani2023lower} (see \eqref{eq:f_scaled} for its definition). Then, we isolate the information bottleneck by coupling a one-dimensional travel function $f_0(u)$ with the zero-chain subspace (see \eqref{eq:travel} for its definition). The composite objective $F(u, y) = f_0(u) + \phi(u) F_{\text{scaled}}(y)$ requires that the algorithm traverse  $D = \Theta(\Delta/\epsilon)$ along the travel coordinate $u$ to drop the gradient norm below the threshold $\epsilon$. We use a smooth activation function $\phi(u)$ to ensure that the gradient subspace corresponding to the high-dimensional zero-chain $F_{\text{scaled}}(y)$ remains inactive until this mandatory traversal is completed (see \eqref{eq:activation} for its definition).

By the time the algorithm reaches the active region of the zero-chain, its distance from the initialization $x_0$  satisfies $\|x - x_0\|^2 \ge \Theta(\Delta^2/\epsilon^2)$. The BG-0 condition thus dictates that the variance at this remote location is $\tilde{\sigma}^2 = \Theta(B_v^2 \Delta^2 / \epsilon^2 + b_v^2)$. By substituting this big variance budget into the standard zero-chain mechanics of \cite{arjevani2023lower}, the probability of the oracle revealing the true gradient direction shrinks from $\Theta(\epsilon^2)$ to $\Theta(\epsilon^4)$. The algorithm is thus forced to optimize the non-convex structure exactly where the signal-to-noise ratio is worst, yielding the $\Omega(\epsilon^{-6})$ limit.

We now focus on the case of having Mean-Square Smoothness (MSS), which enables improving the lower bound. The condition is defined next, followed by our second lower bound.
\begin{assumption}[Mean-Square Smoothness] \label{assump:ms-smooth}
A function $f: \mathbb{R}^p \to \mathbb{R}$ is $\bar{L}$-smooth in the mean square sense if for all $x, y \in \mathbb{R}^p$ it holds that $\mathbb{E} [\| \widetilde{\nabla} f(x)  - \widetilde{\nabla} f(y) \|^2] \le \bar{L}^2\|x - y\|^2.$
\end{assumption}

\begin{theorem}[Lower Bound under Mean-Square Smoothness] \label{thm:bg0_mss_lower_bound}
For any $\bar{L} > 0$, $\Delta > 0$, $b_v \ge 0$, $0\le B_v \le \frac{\varsigma \bar{L}}{96 \bar{\ell}_1}$ for some numerical constants $\bar{\ell}_1>0$, $\varsigma>0$, and with
\begin{equation}
0<\epsilon \le \frac{\sqrt{-2 b_v^2 + \sqrt{4 b_v^4 + \varsigma^2 \left( \frac{\varsigma^2 \bar{L}^2 \Delta^2}{9216 \bar{\ell}_1^2} - B_v^2 \Delta^2 \right)}}}{4 \varsigma},
\end{equation}
any randomized algorithm requires $\Omega\bigl( \frac{B_v \bar{L} \Delta^2}{\epsilon^4} + \frac{b_v \bar{L} \Delta}{\epsilon^3} + \frac{B_v^2 \Delta^2}{\epsilon^4} + \frac{b_v^2}{\epsilon^2} \bigr)$ stochastic oracle queries in expectation to find a point $x$ satisfying $\mathbb{E}[\|\nabla F(x)\|] \le \epsilon$ for a function $F$ that satisfies $F(x_0) - \inf_x F(x) \le \Delta$, and Assumptions \ref{assump:oracle} and \ref{assump:ms-smooth}.
\end{theorem}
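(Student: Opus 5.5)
We would prove Theorem~\ref{thm:bg0_mss_lower_bound} by reusing the composite hard instance behind Theorem~\ref{thm:bg0_lower_bound}, $F(u,y)=f_0(u)+\phi(u)F_{\text{scaled}}(y)$. The zero-chain oracle is replaced by the mean-square-smooth probabilistic zero-chain oracle of \cite{arjevani2023lower}. In that construction, coordinate $i+1$ is revealed with probability $p$ through an unbiased multiplicative estimator of the form $[\nabla F]_j\,(1+\mathbb{1}\{j=\text{prog}+1\}(z/p-1))$ with $z\sim\mathrm{Bernoulli}(p)$. Its variance is $O((1-p)/p)$ times the squared gradient magnitude on the frontier coordinate. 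Its mean-square Lipschitz constant is $O(\ell_1/\sqrt{p})$ times the scale factor, compared with the deterministic $\ell_1$.

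The plan has four steps, carried out in order.

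\emph{Step 1: scaling and traversal.} Scale the chain as $F_{\text{scaled}}(y)=\frac{L\lambda^2}{\ell_1}\bar F_T(y/\lambda)$ with $\lambda\asymp\epsilon/L$ and $T\asymp \Delta L/\epsilon^2$. Keep the travel coordinate so that the active region sits at distance $D=\Theta(\Delta/\epsilon)$ from $x_0$. Set the noise there to the full budget $\tilde\sigma^2=B_v^2D^2+b_v^2\asymp B_v^2\Delta^2/\epsilon^2+b_v^2$. The travel coordinate's oracle is exact, and $\phi$ vanishes near $x_0$. The BG-0 condition (Assumption~\ref{assump:oracle}) then reduces to checking the bound in the active region. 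We make the oracle noise there depend on $\|x-x_0\|$ only through this envelope, so that the bound is saturated once $u\ge D$.

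\emph{Step 2: two competing constraints.} The variance requirement gives roughly $\epsilon^2(1-p)/p\lesssim\tilde\sigma^2$, i.e.\ $p\gtrsim \epsilon^2/\tilde\sigma^2$. The mean-square smoothness requirement (Assumption~\ref{assump:ms-smooth}) gives roughly $\frac{L^2}{p}\lesssim\bar L^2$, where $L$ is now a free design parameter. Following \cite{arjevani2023lower}, we do not fix $L=\bar L$. Instead we choose $p$ and the effective smoothness $L$ jointly to balance the two terms:
\[
p\asymp \min\Bigl\{1,\tfrac{\epsilon\,}{\tilde\sigma}+\tfrac{\epsilon^2}{\tilde\sigma^2}\Bigr\},\qquad L\asymp \bar L\sqrt p .
\]

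\emph{Step 3: counting queries.} The number of queries is $T/p\asymp \frac{\Delta L}{\epsilon^2 p}=\frac{\Delta\bar L}{\epsilon^2\sqrt p}$. With $p\asymp\epsilon/\tilde\sigma$ this is $\frac{\Delta\bar L\tilde\sigma^{1/2}}{\epsilon^{5/2}}$, which does not match the target, so the balance must be different. The correct balance comes from minimizing over $L$. Under mean-square smoothness the effective per-coordinate probability is $p\asymp\epsilon^2/\tilde\sigma^2$, subject to $L\le\bar L\sqrt p=\bar L\epsilon/\tilde\sigma$. This gives the count $\frac{\Delta L}{\epsilon^2}\cdot\frac{\tilde\sigma^2}{\epsilon^2}=\frac{\Delta\bar L\tilde\sigma}{\epsilon^3}$. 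Inserting $\tilde\sigma\asymp B_v\Delta/\epsilon+b_v$ yields $\frac{B_v\bar L\Delta^2}{\epsilon^4}+\frac{b_v\bar L\Delta}{\epsilon^3}$.

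\emph{Step 4: the additive terms.} The terms $\frac{B_v^2\Delta^2}{\epsilon^4}+\frac{b_v^2}{\epsilon^2}=\tilde\sigma^2/\epsilon^2$ come from a separate simple instance: a one-dimensional noisy linear/quadratic function where estimating the sign of a gradient of size $2\epsilon$ under variance $\tilde\sigma^2$ needs $\Omega(\tilde\sigma^2/\epsilon^2)$ samples. This is the standard Le Cam two-point argument, and the instance trivially satisfies mean-square smoothness. Taking the maximum of the two instances gives the sum up to constants.

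\emph{Main obstacle: the admissible range of $\epsilon$.} The hard part is verifying that the choices are admissible. We need $p\le1$, $L\le\bar L$, and $T\ge1$. Mean-square smoothness must hold globally, including across the transition region of $\phi$, where the travel and chain components interact. There the stochastic chain gradient is multiplied by $\phi(u)$, and $\phi'(u)F_{\text{scaled}}$ enters the $u$-gradient, so additional $\bar L$-type terms involving $B_v$ appear. This is where the constraint $B_v\le\varsigma\bar L/(96\bar\ell_1)$ arises. The stated upper bound on $\epsilon$ is exactly the solution of the quadratic inequality in $\epsilon^2$ obtained from requiring $L=\bar L\epsilon/\tilde\sigma$ to satisfy the analogue of $\epsilon\le\sqrt{L\Delta/(1536\ell_1)}$ from Theorem~\ref{thm:bg0_lower_bound}, with $\tilde\sigma^2=B_v^2\Delta^2/\epsilon^2+b_v^2$. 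We would first verify this algebra and then bound the mixed terms using $|\phi'|\lesssim1/D$ and $\|F_{\text{scaled}}\|_\infty\lesssim\Delta$.
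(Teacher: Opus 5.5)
Your proposal is correct and, once you drop the balance $p\asymp\epsilon/\tilde\sigma$ that Step 2 proposes and Step 3 discards, it follows essentially the paper's route. The ingredients match: the same travel-plus-zero-chain instance with the mean-square-smooth estimator of \cite{arjevani2023lower}; $p\asymp\epsilon^2/\tilde\sigma^2$ obtained by saturating the BG-0 variance at the traversal boundary; $L\propto\bar L\sqrt{p}$ and $T\propto\bar L\Delta\sqrt{p}/\epsilon^2$, giving a count of order $\bar L\Delta/(\epsilon^2\sqrt{p})$; and the $\tilde\sigma^2/\epsilon^2$ mean-estimation term combined by a maximum. Three small points of difference: the stated $\epsilon$-range is exactly the paper's requirement $T\ge 2$ (equivalent to your analogue of $\epsilon\le\sqrt{L\Delta/(1536\ell_1)}$ once the exact $(1-p)/p$ form of $p$ is used), the paper invokes $B_v\le\varsigma\bar L/(96\bar\ell_1)$ to make that range nonempty rather than for the transition-region check, and your mean-square-smoothness check across the transition of $\phi$ is a step the paper leaves implicit.
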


The proof is in Appendix \ref{sec:bg0_mss_lower_bound}. Theorem \ref{thm:bg0_mss_lower_bound} formally establishes that while MSS helps improve the complexity, it cannot break the $\Omega(\epsilon^{-4})$ barrier under the BG-0 condition, deviating sharply from the seminal $\mathcal{O}(\epsilon^{-3})$ bounds attainable under uniform variance and MSS. 
The bottleneck due to the extra variance term allowed by BG-0 is manifested in two terms. The term $\Omega(B_v \bar{L} \Delta^2 / \epsilon^4)$ characterizes the sequential optimization complexity. To maintain the global MSS condition characterized by $\bar{L}$ despite the massive extra variance at distance $D$, the deterministic smoothness $L$ of the underlying zero-chain must be controlled. Doing so couples the chain length to the variance limit, forcing more sequential queries to overcome the progressive information masking. 
Conversely, the term $\Omega(B_v^2 \Delta^2 / \epsilon^4)$ represents a pure statistical estimation floor. This bound operates independently of the zero-chain. Any algorithm must evaluate gradients at the traversal boundary imposed by our construction to verify the $\epsilon$-stationarity condition. Because the oracle introduces noise with variance $\Theta(B_v^2 \Delta^2 / \epsilon^2)$, even near an $\epsilon$-stationarity solution, standard information-theoretic arguments for mean estimation furnish that simply computing a single valid gradient vector to precision $\epsilon$ requires $\Omega(\epsilon^{-4})$ independent stochastic samples. Consequently, in highly stochastic regimes where $B_v \gg \bar{L}$, the inherent variance permitted by the BG-0 oracle exceeds the sharpness of the objective function (manifested by the MSS constant $\bar{L}$), reducing the global non-convex optimization problem to a seminal statistical estimation one.
\section{Let's Cook PASTA!}\label{sec:pasta}
We now introduce PASTA, outlined in Algorithm \ref{alg:pasta}. The primary algorithmic challenge imposed by the BG-0 oracle condition is the unbounded growth of the stochastic gradient variance as the sequence of iterates drifts away from the initialization. Traditional unanchored stochastic gradient methods lack the necessary contractive mechanism to suppress this extra variance, leading to divergence over unbounded domains. PASTA mitigates this issue by embedding a Halpern-style anchoring mechanism as explored by \cite{neu2024dealing,alacaoglu2025towards}, but crucially, within a multi-epoch proximal point architecture.

Let us first talk about the update step used in the inner loop. Under MSS, classical methods such as SGD and SGD with momentum are sub-optimal, so they will also be sub-optimal under the BG-0 oracle. Optimal methods rely on variance reduction. Two well-known methods include PAGE and STORM \cite{li2021page,cutkosky2019momentum}. Focusing on the former without loss of generality in this paper, to ensure we meet the lower bound established by Theorem \ref{thm:bg0_mss_lower_bound} under MSS, the recursive estimator replacing the vanilla estimator of SGD takes the form
\begin{equation}\label{eq:page}
g_t = \begin{cases} 
\frac{1}{N_t} \sum_{i=1}^{N_t} \widetilde{\nabla} f(x_t, \xi_t^{(i)}) & \text{with probability } p_t \\
g_{t-1} + \frac{1}{b_t} \sum_{i=1}^{b_t} \left( \widetilde{\nabla} f(x_t, \zeta_t^{(i)}) - \widetilde{\nabla} f(x_{t-1}, \zeta_t^{(i)}) \right) & \text{with probability } 1 - p_t 
\end{cases}
\end{equation}
where $N_t$ is a large batch size and $b_t$ is a small secondary batch size. Note that when $p_t = 1$, the update reduces to that of simple SGD.

As mentioned, PASTA operates over $S$ consecutive epochs, each consisting of $K$ inner stochastic gradient steps. At the beginning of each epoch $s$, the algorithm establishes a fixed local anchor $x_{s-1}$. During the inner iterations, PASTA forms an update vector according to \eqref{eq:page} 
and executes the coupled update $x_{t+1} = \beta_t x_{s-1} + (1-\beta_t)x_t - \eta_t g_t.$ Note that, when $p_t = 1$, the update vector reduces to a simple mini-batch stochastic gradient $g_t$ of size $N_t$. 
In that case, this update step interpolates between the static anchor $x_{s-1}$ and standard SGD.

A key feature of PASTA, particularly for weakly convex analysis, is the coupling of the anchoring parameter $\beta_t$ and the step size $\eta_t$. By enforcing the relation $\beta_t = \lambda \eta_t$ for some regularization parameter $\lambda > 0$, the update naturally reformulates as an attempt to optimize the Tikhonov-regularized surrogate objective $F_\lambda(x) = f(x) + \frac{\lambda}{2}\|x - x_{s-1}\|^2.$
This regularization induces $\lambda$-strong convexity into the local optimization landscape. The induced strong convexity dynamically counteracts the extra variance term $B_v^2\|x - x_0\|^2$ in the BG-0 assumption, thereby bounding the maximum experienced variance.
However, optimizing the surrogate $F_\lambda(x)$ inherently introduces a stationary bias relative to the true objective $f(x)$. PASTA resolves this by systematically resetting the anchor to the final iterate of the previous epoch, $x_s = x_K$. This outer-loop mechanism mirrors the classical inexact proximal point algorithm, progressively removing the regularization bias and driving the sequence toward a true $\epsilon$-stationary point of the original non-convex objective. By configuring the epoch length $K$, the batch size $N_t$, and other parameters, PASTA can recover minimax optimal complexities across a wide spectrum of non-convex regimes, as we show in the next section.
\begin{algorithm}[t]
\caption{Proximally Anchored STochastic Approximation (PASTA)}
\label{alg:pasta}
\begin{algorithmic}[1]
\STATE \textbf{Parameters:} Initial point $x_0$, epoch length $K$, total epochs $S$, step size $\eta_t > 0$, anchoring parameter $\beta_t$, reset parameter $p_t$.
\FOR{$s = 1, 2, \dots, S$}
\FOR{$t = 0, 1, \dots, K-1$}
\STATE Compute $g_t$ according to \eqref{eq:page} by sampling the BG-0 oracle $N_t$ or $b_t$ times depending on $u \sim\mathrm{Ber}(p_t)$
\STATE Do the coupled proximal anchored update $x_{t+1} = \beta_t x_{s-1} + (1-\beta_t)x_t - \eta_t g_t.$
\ENDFOR
\STATE Set the output of epoch $s$ to the last iterate $x_s = x_K.$
\ENDFOR
\end{algorithmic}
\end{algorithm}
\section{Non-Convex Upper Bounds with BG-0 Oracle}\label{sec:upper_bounds}
Having established the lower bounds under the BG-0 condition in Section \ref{sec:lower_bounds}, we now demonstrate that PASTA achieves these limits with the right instantiation of the parameters. Beyond standard smoothness and MSS, we also consider other classes of non-convex functions.

\subsection{Optimal Complexity under Smoothness}\label{sec:upper_bounds_smooth}
Under standard $L$-smoothness, our lower bound is $\Omega(\epsilon^{-6})$. To achieve this limit, we note that we can directly control the BG-0 variance by varying the sampling budget. In particular, we use a single epoch ($S=1$) and eliminate the anchor ($\beta_t=0$), such that with $p_t=1$, PASTA simplifies to standard unanchored SGD. To prevent the extra variance term allowed by the BG-0 oracle from causing convergence issues, we scale the batch size dynamically at each step, proportional to the squared distance from the initialization. This ensures the variance of the mini-batch estimator remains bounded by a global constant, achieving the $\mathcal{O}(\epsilon^{-6})$ limit through sampling adaptation, leading to the following result.

\begin{theorem}[Convergence of PASTA via Dynamic Batching - Smooth Non-convex Case]\label{thm:pasta_dynamic}
Let Assumptions \ref{assump:oracle} and \ref{assump:smooth} hold. Let the sequence $\{x_t\}$ be generated by PASTA with $S=1$, $\beta_t = 0$, and $p_t=1$. Given a constant target variance $\sigma^2 > 0$, set batch size $N_t$, the step size $\eta$, and total iterations $K$ according to
\begin{equation}
N_t = \max\left\{1,\left\lceil \frac{B_v^2 \|x_t - x_0\|^2 + b_v^2}{\sigma^2} \right\rceil\right\}, \quad     \eta = \min\left\{\frac{1}{L}, \frac{\epsilon^2}{2 L \sigma^2}\right\}, \quad K = \left\lceil \frac{4 \Delta}{\eta \epsilon^2} \right\rceil,
\end{equation}
where $\Delta \ge f(x_0) - \inf_x f(x)$. Then, $\frac{1}{K} \sum_{t=0}^{K-1} \mathbb{E}[\|\nabla f(x_t)\|^2] \le \epsilon^2$ and the expected total stochastic oracle complexity evaluates to $\mathcal{O}(\epsilon^{-6})$.
\end{theorem}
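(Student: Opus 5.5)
The proof has two parts: a standard descent-lemma argument for mini-batch SGD with a conditionally bounded variance, and a bound on the expected distance $\mathbb{E}\|x_t-x_0\|^2$ that controls the expected batch sizes. With $S=1$, $\beta_t=0$ and $p_t=1$, the update is $x_{t+1}=x_t-\eta g_t$, where $g_t$ averages $N_t$ i.i.d. oracle calls at $x_t$. Since $N_t$ is $\mathcal{F}_{t-1}$-measurable, Assumption~\ref{assump:oracle} gives
\[
\mathbb{E}_t[g_t]=\nabla f(x_t),\qquad \mathbb{E}_t\|g_t-\nabla f(x_t)\|^2\le \frac{B_v^2\|x_t-x_0\|^2+b_v^2}{N_t}\le \sigma^2 .
\]
So the conditional variance is uniformly bounded by $\sigma^2$, even though the oracle variance itself is not.

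\textbf{Stationarity.} Apply the descent lemma from Assumption~\ref{assump:smooth}, take $\mathbb{E}_t$, and use $\eta\le 1/L$:
\[
\mathbb{E}_t f(x_{t+1})\le f(x_t)-\tfrac{\eta}{2}\|\nabla f(x_t)\|^2+\tfrac{L\eta^2\sigma^2}{2}.
\]
Take total expectation and telescope over $t=0,\dots,K-1$, using $f(x_0)-\inf f\le\Delta$:
\[
\frac1K\sum_{t<K}\mathbb{E}\|\nabla f(x_t)\|^2\le \frac{2\Delta}{\eta K}+L\eta\sigma^2 .
\]
The choices $K\ge 4\Delta/(\eta\epsilon^2)$ and $\eta\le\epsilon^2/(2L\sigma^2)$ make each term at most $\epsilon^2/2$. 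The same telescoping also gives the partial-sum bound $\sum_{j<t}\mathbb{E}\|\nabla f(x_j)\|^2\le 2\Delta/\eta+L\eta\sigma^2 t$ for every $t\le K$, which the next step uses.

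\textbf{Complexity (main step).} The expected number of oracle calls is $\sum_{t<K}\mathbb{E}N_t\le\sum_{t<K}\bigl(2+(B_v^2\mathbb{E}\|x_t-x_0\|^2+b_v^2)/\sigma^2\bigr)$. The crux is therefore a bound on $\mathbb{E}\|x_t-x_0\|^2$ that holds without any bounded-domain assumption. Write $x_t-x_0=-\eta\sum_{j<t}\nabla f(x_j)-\eta\sum_{j<t}(g_j-\nabla f(x_j))$. The noise terms form a martingale difference sequence, so their cross terms vanish in expectation. Applying Cauchy--Schwarz to the drift sum then gives
\[
\mathbb{E}\|x_t-x_0\|^2\le 2\eta^2 t\sum_{j<t}\mathbb{E}\|\nabla f(x_j)\|^2+2\eta^2 t\sigma^2 .
\]
Substituting the partial-sum bound yields $\mathbb{E}\|x_t-x_0\|^2\le 4\eta t\Delta+2L\eta^3\sigma^2t^2+2\eta^2t\sigma^2$.

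Now use $t\le K$, $\eta K=\Theta(\Delta/\epsilon^2)$ and $L\eta\sigma^2\le\epsilon^2/2$. (In the regime of interest $\eta=\epsilon^2/(2L\sigma^2)$; if instead $\eta=1/L$, then $\eta K=\Theta(\Delta/\epsilon^2)$ still holds.) The three terms become $\mathcal{O}(\Delta^2/\epsilon^2)$, $\mathcal{O}(\Delta^2/\epsilon^2)$ and $\mathcal{O}(\eta\Delta\sigma^2/\epsilon^2)=\mathcal{O}(\Delta/L)$, so
\[
\mathbb{E}\|x_t-x_0\|^2=\mathcal{O}\bigl(\Delta^2/\epsilon^2+\Delta/L\bigr)\quad\text{uniformly in } t\le K .
\]
Multiplying by $K=\mathcal{O}(L\Delta\sigma^2/\epsilon^4+L\Delta/\epsilon^2)$ gives a total oracle complexity of
\[
\mathcal{O}\Bigl(\frac{L\Delta\sigma^2}{\epsilon^4}+\frac{B_v^2L\Delta^3}{\epsilon^6}+\frac{b_v^2L\Delta}{\epsilon^4}\Bigr)
\]
(up to lower-order terms). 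This is $\mathcal{O}(\epsilon^{-6})$ and matches Theorem~\ref{thm:bg0_lower_bound}.

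The delicate point is the distance bound. One must take care that the randomness of $N_t$ does not break unbiasedness, which holds because $N_t$ is fixed before sampling. One must also avoid a circular argument: the distance bound uses the gradient-sum bound from the descent analysis, and that analysis does not itself depend on the distance, because the variance is capped at $\sigma^2$ by construction.
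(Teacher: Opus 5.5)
Your proposal is correct and follows the paper's proof essentially step for step. The dynamic batch caps the conditional variance at $\sigma^2$, the descent lemma gives the stationarity bound, and $\mathbb{E}\|x_t-x_0\|^2$ is controlled by splitting $x_t-x_0$ into a drift sum (Cauchy--Schwarz plus the telescoped gradient bound) and a martingale noise sum. The only cosmetic difference is that you handle both step-size branches at once via $L\eta\sigma^2\le\epsilon^2/2$, whereas the paper splits into the cases $\eta=\epsilon^2/(2L\sigma^2)$ and $\eta=1/L$ and in the latter uses a cruder $\mathcal{O}(\epsilon^{-4})$ distance bound; this does not change the final $\mathcal{O}(\epsilon^{-6})$ conclusion.
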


The proof is in Appendix \ref{sec:pasta_dynamic}, where the exact expected oracle complexity can be found in \eqref{eq:exact_bound_dynamic}. Theorem \ref{thm:pasta_dynamic} matches the established lower bound in Theorem \ref{thm:bg0_lower_bound}.
\subsection{Optimal Complexity under Convexity}\label{sec:upper_bounds_cvx}
While our focus is on non-convex minimization, a natural question is whether the PASTA instantiation we explored above (i.e., the good old unanchored SGD with dynamic batching) can recover the rates established in \cite{alacaoglu2025towards} for convex functions. There, \cite{alacaoglu2025towards} demonstrates that SGD with Halpern anchoring attains $\mathcal{O}(\epsilon^{-2})$ complexity under the BG-0 oracle assumption. Our next theorem below provides an affirmative answer to the preceding question. 
\begin{theorem}[Convergence of PASTA via Dynamic Batching - Smooth Convex Case]\label{thm:pasta_dynamic2}
Let Assumptions \ref{assump:oracle} and \ref{assump:smooth} hold. Furthermore, let $f$ be convex (See Assumption \ref{assump:weak_convexity} with $\rho=0$). Let the sequence $\{x_t\}$ be generated by PASTA with $S=1$, $\beta_t = 0$, and $p_t=1$. Given a constant target variance $\sigma^2 > 0$, set batch size $N_t$, the step size $\eta$, and total iterations $K$ according to
\begin{equation}
N_t = \max\left\{1,\left\lceil \frac{B_v^2 \|x_t - x_0\|^2 + b_v^2}{\sigma^2} \right\rceil\right\}, \quad \eta = \min\left\{\frac{1}{2L},\frac{\epsilon}{2\sigma^2}\right\}, \quad K = \left\lceil \frac{2\|x_0 - x^*\|^2}{\eta \epsilon} \right\rceil.
\end{equation}
Then, for the ergodic average $\bar{x}_K = \frac{1}{K}\sum_{t=0}^{K-1} x_t$, $\mathbb{E}[f(\bar{x}_K) - f(x^*)] \le \epsilon$ and the expected total stochastic oracle complexity evaluates to $\mathcal{O}(\epsilon^{-2})$.
\end{theorem}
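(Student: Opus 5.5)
The plan is to run the standard convex SGD distance-recursion, with the dynamic batch size $N_t$ chosen so that the mini-batch estimator has conditional variance at most $\sigma^2$. With $S=1$, $\beta_t=0$ and $p_t=1$, the update is $x_{t+1}=x_t-\eta g_t$, where $g_t$ averages $N_t$ i.i.d.\ oracle calls at $x_t$. Conditioning on $\mathcal{F}_{t-1}$, so that $x_t$ and hence $N_t$ are fixed, Assumption~\ref{assump:oracle} gives
\[
\E_t[g_t]=\nabla f(x_t), \qquad \E_t\|g_t-\nabla f(x_t)\|^2\le \frac{B_v^2\|x_t-x_0\|^2+b_v^2}{N_t}\le\sigma^2 .
\]
This bound is exact whatever the distance $\|x_t-x_0\|$ happens to be, so, unlike in Halpern-anchored analyses, no a priori control of the trajectory is needed for the variance.

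Next, expand $\|x_{t+1}-x^*\|^2=\|x_t-x^*\|^2-2\eta\langle g_t,x_t-x^*\rangle+\eta^2\|g_t\|^2$ and take $\E_t$. Using $\E_t\|g_t\|^2\le \|\nabla f(x_t)\|^2+\sigma^2$ and convexity, $\langle\nabla f(x_t),x_t-x^*\rangle\ge f(x_t)-f^*$. For $L$-smooth convex $f$ we also have $\|\nabla f(x_t)\|^2\le 2L(f(x_t)-f^*)$. With $\eta\le 1/(2L)$ this yields
\[
\E_t\|x_{t+1}-x^*\|^2\le\|x_t-x^*\|^2-2\eta(1-\eta L)(f(x_t)-f^*)+\eta^2\sigma^2 \le \|x_t-x^*\|^2-\eta(f(x_t)-f^*)+\eta^2\sigma^2 .
\]
Take full expectations, telescope over $t=0,\dots,K-1$, and apply Jensen to $\bar x_K$:
\[
\E[f(\bar x_K)-f^*]\le\frac{\|x_0-x^*\|^2}{\eta K}+\eta\sigma^2\le\frac{\epsilon}{2}+\frac{\epsilon}{2}.
\]
This uses $K\ge 2\|x_0-x^*\|^2/(\eta\epsilon)$ and $\eta\le\epsilon/(2\sigma^2)$. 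The iteration count is $K=\mathcal{O}(\|x_0-x^*\|^2(L+\sigma^2/\epsilon)/\epsilon)=\mathcal{O}(\epsilon^{-2})$.

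The main obstacle is the oracle complexity $\sum_t\E[N_t]\le\sum_t\bigl(2+(B_v^2\E\|x_t-x_0\|^2+b_v^2)/\sigma^2\bigr)$. This requires a bound on $\E\|x_t-x_0\|^2$ that is uniform in $t\le K$. The recursion above, after dropping the nonnegative suboptimality term, gives $\E\|x_t-x^*\|^2\le\|x_0-x^*\|^2+t\eta^2\sigma^2$. Since $t\eta^2\sigma^2\le K\eta^2\sigma^2\approx 2\|x_0-x^*\|^2\eta\sigma^2/\epsilon\le\|x_0-x^*\|^2$ by the choice $\eta\le\epsilon/(2\sigma^2)$ (up to the ceiling, which costs an additive $\eta^2\sigma^2$), we get $\E\|x_t-x^*\|^2\le 2\|x_0-x^*\|^2+\mathcal{O}(\eta^2\sigma^2)$. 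Then $\E\|x_t-x_0\|^2\le 2\E\|x_t-x^*\|^2+2\|x_0-x^*\|^2=\mathcal{O}(\|x_0-x^*\|^2)$ for all $t\le K$.

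One subtlety is that $N_t$ is random. Linearity of expectation and the tower property handle this, because the variance bound was established conditionally. Hence $\E[N_t]=\mathcal{O}(1+(B_v^2\|x_0-x^*\|^2+b_v^2)/\sigma^2)$, a constant independent of $\epsilon$, and the total expected complexity $K\cdot\mathcal{O}(1)$ is $\mathcal{O}(\epsilon^{-2})$. The stated dependence, including the $B_v^2\|x_0-x^*\|^2/\sigma^2$ factor, follows by multiplying the two bounds; choosing $\sigma^2$ of order $B_v^2\|x_0-x^*\|^2+b_v^2$ balances them.
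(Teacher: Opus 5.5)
Your proposal is correct and follows essentially the same route as the paper. In both, the dynamic batch size caps the conditional variance at $\sigma^2$, and the convex distance recursion with $\eta \le 1/(2L)$ plus Jensen gives the ergodic bound. Dropping the suboptimality term then gives $\E\|x_t-x^*\|^2 \le \|x_0-x^*\|^2 + t\eta^2\sigma^2$, which controls $\E[N_t]$. The one minor difference is at the end: you use $\eta \le \epsilon/(2\sigma^2)$ to get $K\eta^2\sigma^2 \le \|x_0-x^*\|^2 + \eta^2\sigma^2$ uniformly, so $\E[N_t]=\mathcal{O}(1)$ with no case split, whereas the paper keeps a $B_v^2K^2\eta^2$ term and handles the two step-size branches separately.
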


The proof is in Appendix \ref{sec:pasta_dynamic_cvx}, where the exact expected oracle complexity can be found in \eqref{eq:n_total} and below. Using a similar technique, we can state the following theorem for convex functions that are not smooth, the proof of which is in Appendix \ref{sec:pasta_lip_cvx}, where the exact expected oracle complexity can be found in  \eqref{eq:exact_bound_lip_cvx}.
\begin{theorem}[Convergence of PASTA under Lipschitz Continuity and Convexity]\label{thm:pasta_lipschitz_convex}
Let Assumption \ref{assump:oracle} hold and assume $f$ is convex (see Assumption \ref{assump:weak_convexity} with $\rho=0$ and $G$-Lipschitz (see Assumption \ref{assump:lipschitz}). Let the sequence $\{x_t\}$ be generated by PASTA with $S=1$, $\beta_t = 0$, and $p_t=1$. Given a constant target variance $\sigma^2 > 0$, set the batch size $N_t$, the step size $\eta$, and total iterations $K$ according to
\begin{equation}
N_t = \max\left\{1, \left\lceil \frac{B_v^2 \|x_t - x_0\|^2 + b_v^2}{\sigma^2} \right\rceil\right\}, \quad \eta = \frac{\epsilon}{G^2 + \sigma^2}, \quad K = \left\lceil \frac{(G^2 + \sigma^2)\|x_0 - x^*\|^2}{\epsilon^2} \right\rceil.
\end{equation}
Then, for the ergodic average $\bar{x}_K = \frac{1}{K}\sum_{t=0}^{K-1} x_t$, $\mathbb{E}[f(\bar{x}_K) - f(x^*)] \le \epsilon$ and the expected total stochastic oracle complexity evaluates to $\mathcal{O}(\epsilon^{-2})$.
\end{theorem}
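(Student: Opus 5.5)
The plan is the classical projection-free subgradient analysis for convex Lipschitz functions. The dynamic batch makes the conditional variance of $g_t$ at most $\sigma^2$, so the argument can proceed as if the noise were bounded. Once this is done, a separate argument controls the expected batch sizes, which reduces to bounding $\mathbb{E}\|x_t-x_0\|^2$.

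\textbf{Step 1 (variance control).} With $p_t=1$ and $\beta_t=0$, the update is $x_{t+1}=x_t-\eta g_t$, where $g_t$ is the average of $N_t$ conditionally independent oracle calls at $x_t$. Here $N_t$ is $\mathcal{F}_{t-1}$-measurable, since it depends only on $x_t$. Assumption \ref{assump:oracle} therefore gives, for some $\bar g_t\in\partial f(x_t)$,
\begin{equation*}
\mathbb{E}_t[g_t]=\bar g_t, \qquad \mathbb{E}_t\|g_t-\bar g_t\|^2\le \frac{B_v^2\|x_t-x_0\|^2+b_v^2}{N_t}\le\sigma^2 .
\end{equation*}
Assumption \ref{assump:lipschitz} gives $\|\bar g_t\|\le G$. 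Combining the two, $\mathbb{E}_t\|g_t\|^2\le G^2+\sigma^2$.

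\textbf{Step 2 (distance recursion and rate).} Expand the squared distance to a minimizer:
\begin{equation*}
\|x_{t+1}-x^*\|^2=\|x_t-x^*\|^2-2\eta\langle g_t,x_t-x^*\rangle+\eta^2\|g_t\|^2 .
\end{equation*}
Take conditional expectations and use the subgradient inequality $\langle \bar g_t,x_t-x^*\rangle\ge f(x_t)-f(x^*)$. This yields
\begin{equation*}
2\eta\,\mathbb{E}[f(x_t)-f^*]\le \mathbb{E}\|x_t-x^*\|^2-\mathbb{E}\|x_{t+1}-x^*\|^2+\eta^2(G^2+\sigma^2).
\end{equation*}
Telescope over $t=0,\dots,K-1$ and apply Jensen to $\bar x_K$:
\begin{equation*}
\mathbb{E}[f(\bar x_K)-f^*]\le \frac{\|x_0-x^*\|^2}{2\eta K}+\frac{\eta(G^2+\sigma^2)}{2}.
\end{equation*}
With $\eta=\epsilon/(G^2+\sigma^2)$ and $K\ge (G^2+\sigma^2)\|x_0-x^*\|^2/\epsilon^2$, each term is at most $\epsilon/2$.

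\textbf{Step 3 (oracle complexity).} This is the main obstacle, because $N_t$ grows with $\|x_t-x_0\|^2$. Drop the nonnegative term $2\eta\,\mathbb{E}[f(x_t)-f^*]$ in the recursion of Step 2. That leaves $\mathbb{E}\|x_{t}-x^*\|^2\le \|x_0-x^*\|^2+t\eta^2(G^2+\sigma^2)$. For $t\le K$ we have
\begin{equation*}
K\eta^2(G^2+\sigma^2)\approx\|x_0-x^*\|^2,
\end{equation*}
so $\mathbb{E}\|x_t-x^*\|^2\le 2\|x_0-x^*\|^2$ up to the ceiling. The triangle inequality then gives $\mathbb{E}\|x_t-x_0\|^2\le 6\|x_0-x^*\|^2$. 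Hence
\begin{equation*}
\mathbb{E}[N_t]\le 2+\frac{6B_v^2\|x_0-x^*\|^2+b_v^2}{\sigma^2}.
\end{equation*}
Summing over $K$ iterations bounds the total expected number of queries by
\begin{equation*}
\Bigl\lceil \tfrac{(G^2+\sigma^2)\|x_0-x^*\|^2}{\epsilon^2}\Bigr\rceil\Bigl(2+\tfrac{6B_v^2\|x_0-x^*\|^2+b_v^2}{\sigma^2}\Bigr)=\mathcal{O}(\epsilon^{-2}).
\end{equation*}
Choosing $\sigma^2$ of order $G^2$ balances the constants.

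The only delicate point is that the bound on $\mathbb{E}\|x_t-x^*\|^2$ must use the full-variance bound $\mathbb{E}_t\|g_t\|^2\le G^2+\sigma^2$, which already absorbs the BG-0 term through $N_t$. This avoids a circular dependence between the iterate distance and the noise level.
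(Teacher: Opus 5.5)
Your proposal is correct and follows the paper's proof essentially step for step: the dynamic batch caps the conditional variance at $\sigma^2$, the distance recursion with $\mathbb{E}_t\|g_t\|^2\le G^2+\sigma^2$ plus Jensen gives the rate, and dropping the suboptimality term bounds $\mathbb{E}\|x_t-x_0\|^2$, which controls $\sum_t\mathbb{E}[N_t]$. The only cosmetic difference is that you use $K\eta^2(G^2+\sigma^2)\le\|x_0-x^*\|^2+\epsilon^2/(G^2+\sigma^2)$ to get a uniform $\mathcal{O}(1)$ bound on $\mathbb{E}[N_t]$, whereas the paper sums $\sum_t t\le K^2/2$ and checks that the resulting $K^2\eta^2$ term is $\mathcal{O}(\epsilon^{-2})$; your constant $6$ silently drops the ceiling slack $\epsilon^2/(G^2+\sigma^2)$, which is harmless.
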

Note that one could avoid using the Lipschitzness assumption if, following \cite{neu2024dealing,alacaoglu2025towards} the variance condition in Assumption \ref{assump:oracle} is strengthened to $\mathbb{E} [\| \widetilde{\nabla} f(x) \|^2]\le B_v^2\|x - x_0\|^2 + b_v^2$.
\subsection{Geometric-Statistical Uncertainty Principle}
Our results in Section \ref{sec:upper_bounds_cvx}, along with those of \cite{alacaoglu2025towards}, demonstrate two distinct instantiations of PASTA to obtain optimal complexity under the BG-0 oracle and convexity. A natural question is whether these two instances are related, as one is anchored and uses a single oracle call per iteration, whereas the other is unanchored and uses dynamic batching. In what follows, we show that these two distinct instantiations are conjugate and satisfy an uncertainty principle.

The derivations in Appendices \ref{sec:pasta_dynamic_cvx} and \ref{sec:pasta_lip_cvx} highlight that the key mechanism required to guarantee convergence under the BG-0 oracle is a bounded expected drift of the form $\mathbb{E}_t[\|x_{t+1} - x^*\|^2] \le \bigl(1 - \frac{\beta_t}{2}\bigr)\|x_t - x^*\|^2 + \mathcal{E}_t$ for some residual $\mathcal{E}_t$. 

Under $\sigma^2$-bounded variance, the residual $\mathcal{E}_t$ is bounded by a global constant controlled by $\eta$ and the variance $\sigma^2$, enabling convergence.
However, the BG-0 oracle permits the variance to scale quadratically with the trajectory distance. If the residual error $\mathcal{E}_t$ retains the unbounded term $\|x_t - x_0\|^2$, convergence cannot be established. Therefore, in the following result (proof in Appendix \ref{sec:u_p}), we establish conditions to obtain a bounded residual under the BG-0 oracle.

\begin{theorem}[Geometric-Statistical Uncertainty Principle] \label{thm:uncertainty_dynamic}
Let Assumptions \ref{assump:oracle} and \ref{assump:smooth} hold. Furthermore, let $f$ be convex (See Assumption \ref{assump:weak_convexity} and set $\rho=0$). Consider the sequence $\{x_t\}$ generated by $x_{t+1} = \beta_t x_0+(1-\beta_t)x_t - \eta g_t$, where $g_t$ evaluates $N_t \ge 1$ independent stochastic queries (i.e., a PASTA instance with $S=1$ and $p_t=1$). Then, to ensure
\begin{equation}
   \mathbb{E}_t\|x_{t+1} - x^*\|^2 \le \left(1 - \frac{\beta_t}{2}\right)\|x_t - x^*\|^2 +\mathcal{E}_t,
\end{equation}
where the residual $\mathcal{E}_t =  \beta_t\|x_0 - x^*\|^2 + \frac{\eta^2 b_v^2}{N_t}$ is independent of the unbounded term $\|x_t - x_0\|^2$, the following uncertainty principle must hold
\begin{equation}
\beta_t(1 - 2\beta_t)N_t \ge \eta^2 B_v^2,\qquad \eta \le \frac{\sqrt{\beta_t}}{2 L}.
\end{equation}
\end{theorem}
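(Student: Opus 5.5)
We expand the one-step recursion directly and then read off the two conditions as sufficient conditions for the claimed contraction with residual $\mathcal{E}_t$; the principle is proved in this "sufficient" sense.

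\textbf{Step 1: split the update.} Using $x^* = \beta_t x^* + (1-\beta_t)x^*$ and $\nabla f(x^*)=0$, write
\[
x_{t+1}-x^* = \beta_t(x_0-x^*) + (1-\beta_t)(x_t-x^*) - \eta\nabla f(x_t) - \eta(g_t-\nabla f(x_t)).
\]
The noise term has conditional mean zero, because $g_t$ is an average of $N_t$ independent unbiased queries. By Assumption~\ref{assump:oracle} its conditional second moment is at most $\eta^2(B_v^2\|x_t-x_0\|^2+b_v^2)/N_t$. Hence
\[
\mathbb{E}_t\|x_{t+1}-x^*\|^2 = \|D_t\|^2 + \eta^2\,\mathbb{E}_t\|g_t-\nabla f(x_t)\|^2,
\]
where $D_t$ is the deterministic part of the update.

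\textbf{Step 2: bound the deterministic part.} Write $D_t=\beta_t(x_0-x^*)+(1-\beta_t)\big(x_t-x^*-\tfrac{\eta}{1-\beta_t}\nabla f(x_t)\big)$ and apply convexity of $\|\cdot\|^2$. This gives
\[
\|D_t\|^2\le \beta_t\|x_0-x^*\|^2+(1-\beta_t)\big\|x_t-x^*-\tfrac{\eta}{1-\beta_t}\nabla f(x_t)\big\|^2 .
\]
Expanding the last norm and using co-coercivity of the convex $L$-smooth $f$, $\langle\nabla f(x_t),x_t-x^*\rangle\ge \|\nabla f(x_t)\|^2/L$, gives
\[
(1-\beta_t)\|x_t-x^*\|^2 - \Big(\tfrac{2\eta}{L}-\tfrac{\eta^2}{1-\beta_t}\Big)\|\nabla f(x_t)\|^2 .
\]
The step-size condition $\eta\le\sqrt{\beta_t}/(2L)$ keeps the gradient coefficient nonpositive: for $\beta_t\le 1/2$ we have $\eta\le 1/(2\sqrt2 L)$, so $\eta^2/(1-\beta_t)\le 2\eta^2\le \eta/L$. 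The gradient term can therefore be dropped.

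\textbf{Step 3: absorb the BG-0 growth.} This is the crux. Bound the growth term by
\[
\|x_t-x_0\|^2\le 2\|x_t-x^*\|^2+2\|x_0-x^*\|^2 .
\]
Its coefficient on $\|x_t-x^*\|^2$ is then $2\eta^2B_v^2/N_t$. We need $(1-\beta_t)+2\eta^2B_v^2/N_t\le 1-\beta_t/2$, i.e.\ $\eta^2B_v^2\le \beta_tN_t/4$. The companion term $2\eta^2B_v^2\|x_0-x^*\|^2/N_t$ must be absorbed into $\beta_t\|x_0-x^*\|^2$.

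The constants do not close with these crude factors of $2$. The first fix to try is a weighted Young split, $\|x_t-x_0\|^2\le(1+\alpha)\|x_t-x^*\|^2+(1+1/\alpha)\|x_0-x^*\|^2$. Alternatively, keep the exact cross term $2\beta_t(1-\beta_t)\langle x_0-x^*,x_t-x^*\rangle$ from Step 2 instead of the convexity bound, so that $\beta_t(1-\beta_t)\|x_0-x_t\|^2$ appears with a negative sign. In the second approach, the exact identity
\[
\|\beta a+(1-\beta)b\|^2=\beta\|a\|^2+(1-\beta)\|b\|^2-\beta(1-\beta)\|a-b\|^2
\]
yields a negative term of order $\beta_t(1-2\beta_t)\|x_t-x_0\|^2$, once the gradient-mixing part is paid for. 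This matches the stated form $\beta_t(1-2\beta_t)N_t\ge\eta^2B_v^2$: the condition says exactly that the noise contribution $\eta^2B_v^2\|x_t-x_0\|^2/N_t$ is dominated by that negative term. The remaining $\eta^2 b_v^2/N_t$ is the second part of $\mathcal{E}_t$.

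The main obstacle is bookkeeping this cancellation so that the leftover coefficient on $\|x_t-x^*\|^2$ is exactly $1-\beta_t/2$ and not worse. The $\beta_t/2$ slack is spent on the cross term between $\nabla f(x_t)$ and $x_0-x_t$, which is handled by Young's inequality together with $\eta\le\sqrt{\beta_t}/(2L)$. Finally, necessity (the word ``must'') is argued by a scalar quadratic example, $f(x)=\tfrac{L}{2}x^2$ with the variance bound attained. In that case every inequality above is tight up to constants, and violating either condition leaves a positive multiple of $\|x_t-x_0\|^2$ in the residual.
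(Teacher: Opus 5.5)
Your second route is the right one and is essentially the paper's argument. As written, though, the proposal does not prove the claim, and your first fix cannot work.

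The obstruction is structural, not a matter of constants. The convexity-of-$\|\cdot\|^2$ bound in Step 2 discards $-\beta_t(1-\beta_t)\|x_0-x_t+\tfrac{\eta}{1-\beta_t}\nabla f(x_t)\|^2$. That is the only negative multiple of $\|x_t-x_0\|^2$ available. Already at $x_t=x^*$, your Step 2 bound plus the noise term equals $\mathcal{E}_t+\tfrac{\eta^2B_v^2}{N_t}\|x_0-x^*\|^2$, which exceeds $\mathcal{E}_t$ whenever $B_v>0$ and $x_0\neq x^*$. So no split of $\|x_t-x_0\|^2$ can rescue Step 3, whether crude or weighted Young with any $\alpha>0$.

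The exact-identity route closes in two lines, and you should carry it out rather than call it the main obstacle. Apply $\|u+v\|^2\ge(1-c)\|u\|^2-(c^{-1}-1)\|v\|^2$ with $u=x_0-x_t$, $v=\tfrac{\eta}{1-\beta_t}\nabla f(x_t)$ and $c=\tfrac{\beta_t}{1-\beta_t}$. This is admissible because the first condition forces $\beta_t\le\tfrac12$. The subtracted term then becomes exactly $-\beta_t(1-2\beta_t)\|x_t-x_0\|^2+\tfrac{(1-2\beta_t)\eta^2}{1-\beta_t}\|\nabla f(x_t)\|^2$. The total coefficient of $\|\nabla f(x_t)\|^2$ is then $2\eta^2$, which your co-coercivity step cancels as soon as $\eta\le 1/L$. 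You obtain the claim with contraction factor $1-\beta_t$, without using the $\beta_t/2$ slack at all.

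The paper does the same bookkeeping by direct expansion:
\begin{itemize}
\item it writes $-2\beta_t\langle x_t-x_0,x_t-x^*\rangle=-\beta_t\|x_t-x_0\|^2-\beta_t\|x_t-x^*\|^2+\beta_t\|x_0-x^*\|^2$;
\item it drops $-2\eta\langle\nabla f(x_t),x_t-x^*\rangle\le 0$;
\item it uses $\|\eta\nabla f(x_t)+\beta_t(x_t-x_0)\|^2\le 2\eta^2L^2\|x_t-x^*\|^2+2\beta_t^2\|x_t-x_0\|^2$;
\item it pays $2\eta^2L^2\le\beta_t/2$ out of the slack, which is where $\eta\le\sqrt{\beta_t}/(2L)$ comes from.
\end{itemize}

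Drop the necessity argument: it is false, and the paper does not attempt it. The paper's ``must'' only means ``must, for this chain of bounds to close''; what is proved is sufficiency. The computation above already shows that $\eta\le\sqrt{\beta_t}/(2L)$ is not necessary, since $\eta\le 1/L$ suffices.

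Your own scalar example also refutes necessity of the first condition. Take $f(x)=\tfrac{L}{2}x^2$, $x^*=0$, $x_0=1$, the BG-0 bound attained, $a=1-\beta_t-\eta L$ and $k=\eta^2B_v^2/N_t$. The target inequality at $x_t=x$ reads $(\beta_t+ax)^2+k(x-1)^2\le(1-\tfrac{\beta_t}{2})x^2+\beta_t$. Fix any $\beta_t\in(0,\tfrac12]$, take $\eta L\le\beta_t/2$ (so the step-size condition holds) and $k=\beta_t-\tfrac{3}{2}\beta_t^2$. A discriminant check shows the inequality then holds for every $x$. Yet this choice violates $\beta_t(1-2\beta_t)N_t\ge\eta^2B_v^2$. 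An example that is tight only up to constants cannot establish necessity of inequalities with specific constants.
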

The relation $\beta_t(1 - 2\beta_t)N_t \ge \eta^2 B_v^2$ is to be interpreted as an uncertainty principle. The left side represents PASTA's stabilization capacity, determined by the geometric parameter $\beta_t \ll 1$ and the statistical parameter $N_t$. The right side quantifies the normalized variance growth over a single step. The lower bound indicates that eliminating geometric anchoring, i.e., $\beta_t \to 0$, necessitates an unbounded expansion in statistical sampling, i.e., $N_t \to \infty$, to counteract the extra noise permitted by the BG-0 oracle (corresponding to the PASTA instantiation explored here). Conversely, using $N_t = 1$ requires setting $\beta_t \sim \eta^2 B_v^2$ to counteract the extra noise permitted by the BG-0 oracle (corresponding to the PASTA instantiation explored in \cite{alacaoglu2025towards}). Therefore, the uncertainty principle rules out the simplest instance, i.e., SGD with $\beta_t \to 0$ and $N_t = 1$, simultaneously. Interestingly, the scaling $\beta_t \sim \eta^2$ when $\eta \sim 1/\sqrt{t}$ matches the classical anchoring of Halpern iteration \cite{halpern1967fixed}.
\subsection{Optimal Complexity under Mean-Square Smoothness}\label{sec:upper_bounds_ms-smooth}
When the stochastic oracle additionally satisfies the MSS condition, the gradients exhibit sufficient correlation to employ recursive variance reduction. Under standard bounded variance, techniques such as PAGE drop the oracle complexity from $\mathcal{O}(\epsilon^{-4})$ to $\mathcal{O}(\epsilon^{-3})$. However, as dictated by our lower bound in Theorem \ref{thm:bg0_mss_lower_bound}, the best we can expect to do with variance reduction under the BG-0 oracle is $\mathcal{O}(\epsilon^{-4})$. To achieve this limit, we instantiate PASTA with $p_t <1$. This choice is essential, as demonstrated by the following key result from \cite{li2021page}, to ensure variance reduction is realized.
\begin{lemma}[PAGE Estimation Error \cite{li2021page}] \label{lem:page_variance}
Let the estimator $g_t$ be generated by \eqref{eq:page}, and define the estimation error $V_t = \|g_t - \nabla f(x_t)\|^2$. Under Assumption \ref{assump:ms-smooth}, it holds that
\begin{equation}
\mathbb{E}_t[V_t] \le p_t \mathbb{E}_t \left[ \left\| \frac{1}{N_t} \sum_{i=1}^{N_t} \widetilde{\nabla} f(x_t, \xi_t^{(i)}) - \nabla f(x_t) \right\|^2 \right] + (1 - p_t) \left( V_{t-1} + \frac{\bar{L}^2}{b_t} \|x_t - x_{t-1}\|^2 \right).
\end{equation}
\end{lemma}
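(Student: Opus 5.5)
We prove Lemma \ref{lem:page_variance} by conditioning on the Bernoulli coin $u \sim \mathrm{Ber}(p_t)$ that selects the branch of \eqref{eq:page}, bounding each branch separately, and combining them through the tower property. Writing $\mathbb{E}_t[V_t] = p_t\,\mathbb{E}_t[V_t \mid u=1] + (1-p_t)\,\mathbb{E}_t[V_t \mid u=0]$ is legitimate because $u$ is drawn independently of the fresh samples $\xi_t^{(i)}, \zeta_t^{(i)}$ and of $\mathcal{F}_{t-1}$. Note that $x_t$ and $x_{t-1}$ are $\mathcal{F}_{t-1}$-measurable, since $x_t$ is formed from $g_{t-1}$. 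We treat $V_{t-1}$ as known given $\mathcal{F}_{t-1}$.

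In the branch $u=1$ there is nothing to do: $g_t - \nabla f(x_t)$ is exactly the mini-batch deviation, and it produces the first term verbatim.

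In the branch $u=0$, set $\Delta_t := \frac{1}{b_t}\sum_{i=1}^{b_t}\bigl(\widetilde{\nabla} f(x_t,\zeta_t^{(i)}) - \widetilde{\nabla} f(x_{t-1},\zeta_t^{(i)})\bigr)$ and decompose
\begin{equation*}
g_t - \nabla f(x_t) = \bigl(g_{t-1} - \nabla f(x_{t-1})\bigr) + \bigl(\Delta_t - (\nabla f(x_t) - \nabla f(x_{t-1}))\bigr).
\end{equation*}
By unbiasedness of the oracle (Assumption \ref{assump:oracle}), the second bracket has zero conditional mean given $\mathcal{F}_{t-1}$. The first bracket is $\mathcal{F}_{t-1}$-measurable, so the cross term vanishes in expectation and we obtain
\begin{equation*}
\mathbb{E}_t[V_t \mid u=0] = V_{t-1} + \mathbb{E}_t\bigl\|\Delta_t - (\nabla f(x_t)-\nabla f(x_{t-1}))\bigr\|^2.
\end{equation*}
The summands of $\Delta_t$ are i.i.d. and centered at $\nabla f(x_t)-\nabla f(x_{t-1})$, so the variance of their average is $1/b_t$ times the variance of a single summand. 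A single summand's variance is at most its second moment $\mathbb{E}\|\widetilde{\nabla} f(x_t,\zeta) - \widetilde{\nabla} f(x_{t-1},\zeta)\|^2$, because variance is bounded by the raw second moment. Assumption \ref{assump:ms-smooth} bounds that second moment by $\bar{L}^2\|x_t-x_{t-1}\|^2$, which yields the second term.

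The only delicate step is the orthogonality claim in the $u=0$ branch. It requires the same sample $\zeta_t^{(i)}$ to be used at both $x_t$ and $x_{t-1}$, and it requires the samples to be drawn fresh, independently of $\mathcal{F}_{t-1}$; both hold by the construction in \eqref{eq:page}. The BG-0 growth term plays no role here, since it only enters later when the first term is bounded by $(B_v^2\|x_t-x_0\|^2+b_v^2)/N_t$.
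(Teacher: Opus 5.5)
Your proof is correct and is essentially the standard argument of \cite{li2021page}. The paper cites that argument rather than reproducing it: condition on the Bernoulli coin, and in the recursive branch use the $\mathcal{F}_{t-1}$-measurability of $g_{t-1}-\nabla f(x_{t-1})$ plus unbiasedness to remove the cross term, then bound the $b_t$-sample average's variance by $\frac{1}{b_t}$ times the raw second moment and apply Assumption \ref{assump:ms-smooth}. You are also right that this uses the unbiasedness in Assumption \ref{assump:oracle}, which is implicit in the lemma's setting even though only mean-square smoothness is named.
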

The primary challenge under the BG-0 oracle is that recursive estimators bound variance using the distance between consecutive iterates, while the BG-0 noise scales with the absolute distance to the origin. We thus ensure PASTA utilizes dynamic large batches, an idea akin to what was explored in Theorem \ref{thm:pasta_dynamic}, leading to the following result, the proof of which is in Appendix \ref{sec:page_bg0_upper_bound}, where one can find the exact expected oracle complexity in \eqref{eq:exact_bound_mss}. 

\begin{theorem}[Convergence of PASTA under MSS]\label{thm:page_bg0_upper_bound}
Let Assumptions \ref{assump:oracle} and \ref{assump:ms-smooth} hold.  Let the sequence $\{x_t\}$ be generated by PASTA with
$\beta_t = 0$, $S=1$. Set
\begin{equation}
    N_{\text{ref}} = \max\left\{1, \left\lceil \frac{8 B_v^2 \Delta^2}{\epsilon^4} + \frac{2 b_v^2}{\epsilon^2} \right\rceil\right\}, \quad b = \left\lceil \sqrt{N_{\text{ref}}} \right\rceil, \quad p = \frac{b}{N_{\text{ref}}}, \quad \eta = \frac{1}{4\bar{L}},\quad K = \left\lceil \frac{16 \Delta \bar{L}}{\epsilon^2} + \frac{1}{p} \right\rceil,
\end{equation}
where $f(x_0) - \inf_x f(x) \le \Delta$. Set batch size $N_0$ and set batch size $N_t$ for $t \ge 1$ as
\begin{equation}
    N_0 = \max\left\{1, \lceil 2 b_v^2 \epsilon^{-2} \rceil\right\}, \quad N_t = \max \left\{ N_{\text{ref}}, \left\lceil \frac{2(B_v^2 \|x_t - x_0\|^2 + b_v^2)}{\epsilon^2} \right\rceil \right\}.
\end{equation}
Then, $\frac{1}{K} \sum_{t=0}^{K-1} \mathbb{E}[\|\nabla f(x_t)\|^2] \le \epsilon^2$ and the expected total stochastic oracle complexity evaluates to $\mathcal{O}\left(\epsilon^{-4}\right).$
\end{theorem}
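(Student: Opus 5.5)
The plan is a PAGE-style Lyapunov argument for unanchored iterations ($\beta_t=0$). Under MSS, Jensen gives $\|\nabla f(x)-\nabla f(y)\|^2\le \mathbb{E}\|\widetilde\nabla f(x)-\widetilde\nabla f(y)\|^2\le \bar L^2\|x-y\|^2$, so $f$ is $\bar L$-smooth. Write $x_{t+1}=x_t-\eta g_t$. The standard descent inequality from \cite{li2021page} then reads
\begin{equation*}
f(x_{t+1})\le f(x_t)-\frac{\eta}{2}\|\nabla f(x_t)\|^2-\Bigl(\frac{1}{2\eta}-\frac{\bar L}{2}\Bigr)\|x_{t+1}-x_t\|^2+\frac{\eta}{2}V_t .
\end{equation*}
It follows by expanding $\langle \nabla f(x_t),x_{t+1}-x_t\rangle$ via $g_t$ and the polarization identity.

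Next, bound the large-batch term in Lemma~\ref{lem:page_variance}. By Assumption~\ref{assump:oracle} and the choice of $N_t$, the mini-batch variance is at most $(B_v^2\|x_t-x_0\|^2+b_v^2)/N_t\le \epsilon^2/2$. The same holds at $t=0$ with $N_0$. This dynamic batching is the key device: it removes the unbounded $\|x_t-x_0\|^2$ term without any anchor. Lemma~\ref{lem:page_variance} then gives
\begin{equation*}
\mathbb{E}_t[V_t]\le p\,\tfrac{\epsilon^2}{2}+(1-p)\Bigl(V_{t-1}+\tfrac{\bar L^2}{b}\|x_t-x_{t-1}\|^2\Bigr).
\end{equation*}
Use the potential $\Phi_t=f(x_t)+\frac{\eta}{2p}V_t$. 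Adding $\frac{\eta}{2p}\times$ the recursion to the descent inequality cancels the $V$ terms. The leftover coefficient on $\|x_{t+1}-x_t\|^2$ is $-\frac{1}{2\eta}+\frac{\bar L}{2}+\frac{\eta(1-p)\bar L^2}{2pb}$. Since $b\ge\sqrt{N_{\rm ref}}$ and $p=b/N_{\rm ref}$, we get $pb\ge 1$, i.e. $1/(pb)\le 1$. With $\eta=1/(4\bar L)$ the coefficient is then nonpositive. This leaves $\mathbb{E}\Phi_{t+1}\le\mathbb{E}\Phi_t-\frac{\eta}{2}\mathbb{E}\|\nabla f(x_t)\|^2+\frac{\eta\epsilon^2}{4}$.

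Telescoping uses $\Phi_0\le f(x_0)+\frac{\eta}{2p}\cdot\frac{\epsilon^2}{2}$ and $\Phi_K\ge\inf f$. This gives
\begin{equation*}
\frac1K\sum_t\mathbb{E}\|\nabla f(x_t)\|^2\le \frac{2\Delta}{\eta K}+\frac{\epsilon^2}{2pK}+\frac{\epsilon^2}{2}.
\end{equation*}
The choice $K\ge 16\Delta\bar L/\epsilon^2+1/p$ makes this at most $\epsilon^2$.

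The main obstacle is the oracle complexity, because $N_t$ depends on $\|x_t-x_0\|^2$, and that distance must be controlled in expectation. The expected cost per step is $pN_t+(1-p)2b$, and $pN_{\rm ref}=b$. The extra cost is $p\,\frac{2B_v^2}{\epsilon^2}\mathbb{E}\|x_t-x_0\|^2$. To bound this I would use $\|x_t-x_0\|^2\le t\sum_{j<t}\|x_{j+1}-x_j\|^2$. The retained negative term in the potential (keep $\eta\le 1/(4\bar L)$ slack, about $-\bar L\|x_{j+1}-x_j\|^2$) bounds $\sum_j\mathbb{E}\|x_{j+1}-x_j\|^2\lesssim(\Delta+\eta\epsilon^2K)/\bar L\lesssim \Delta/\bar L$. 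Hence $\mathbb{E}\|x_t-x_0\|^2\lesssim K\Delta/\bar L\sim \Delta^2/\epsilon^2$, consistent with the $\Delta^2/\epsilon^2$ traversal scale in $N_{\rm ref}$. So $N_t$ is in expectation $O(N_{\rm ref})$, and the per-step cost is $O(b)=O(\sqrt{N_{\rm ref}})=O(B_v\Delta/\epsilon^2+b_v/\epsilon)$. Multiplying by $K=O(\bar L\Delta/\epsilon^2+N_{\rm ref}/b)$ and adding $N_0$ gives $O(\epsilon^{-4})$, matching Theorem~\ref{thm:bg0_mss_lower_bound}. If the crude $t\sum$ bound loses a factor, I would instead track $\mathbb{E}\|x_t-x_0\|^2$ directly through the summed displacement bound.
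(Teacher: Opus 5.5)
Your proposal is correct and follows the paper's proof essentially step for step. The shared steps are $\bar L$-smoothness from MSS via Jensen, the PAGE recursion with the dynamic-batch bound $\epsilon^2/2$ on the large-batch variance, and the Lyapunov function $f(x_t)+\frac{\eta}{2p}V_t$ with $pb\ge 1$ and $\eta=\frac{1}{4\bar L}$. For the complexity, the paper uses the same crude drift bound $\mathbb{E}\|x_t-x_0\|^2\le t\eta^2\sum_{j<t}\mathbb{E}\|g_j\|^2$, controlled by the retained negative term; your $-\bar L\|x_{t+1}-x_t\|^2$ is exactly the paper's $-\frac{\eta}{4}\|g_t\|^2$. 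That crude bound already suffices, so you do not need the fallback. One small imprecision: your shorthands $\sum_j\mathbb{E}\|x_{j+1}-x_j\|^2\lesssim\Delta/\bar L$ and $K\lesssim\Delta\bar L/\epsilon^2$ drop the $1/p$ contributions (e.g.\ $\epsilon^2/(p\bar L^2)$ and $1/p$). These are $O(1)$ and $O(\epsilon^{-2})$ respectively, so they do not change the $\mathcal{O}(\epsilon^{-4})$ count.
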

\subsection{Optimal Complexity under Global Curvature}\label{sec:pl_star_upper_bounds}
We now shift our focus to optimization landscapes characterized by a degree of global curvature toward the optimal set, specifically those satisfying the Polyak-Lojasiewicz (PL) condition and star-convexity formalized next. We note that given the additional structure, $\E[ f(x)-f(x^\ast)] \leq \epsilon$ serves as our natural convergence criterion. 
\begin{assumption}[Polyak-Lojasiewicz Condition] \label{assump:pl_smooth}
A function $f: \mathbb{R}^p \to \mathbb{R}$ satisfies the Polyak-Lojasiewicz (PL) condition with constant $\mu > 0$, if $\|\nabla f(x)\|^2 \ge 2\mu(f(x) - \min f)$ for all $x$.
\end{assumption} 
\begin{assumption}[Star-Convexity] \label{assump:star}
A function $f: \mathbb{R}^p \to \mathbb{R}$ is  $\mu$-star-convex on $\mathbb{R}^p$ with coefficient $\mu > 0$, if $\langle \nabla f(x), x - x^* \rangle \ge f(x) - f(x^*) + \frac{\mu}{2}\|x - x^*\|^2,$
for some global minimizer $x^*$ and for all $x$.
\end{assumption}
In these regimes, the extra variance allowed by the BG-0 oracle does not degrade the asymptotic convergence rates.
The intuition underlying this recovery relies on the Quadratic Growth (QG) property. Both the PL condition and star-convexity imply QG \cite{karimi2016linear} (with star-convexity also implying PL), which posits $\|x - x^*\|^2 \le \frac{2}{\mu}(f(x) - f(x^*))$. Expanding the variance, we can write $\|x_t - x_0\|^2 \le 2\|x_t - x^*\|^2 + 2\|x_0 - x^*\|^2$. By invoking QG, we map this variance directly back to the functional suboptimality. Consequently, the extra noise injected by the oracle is upper-bounded by a combination of the current optimization error and the initialization error. Because the gradient signal in PL and star-convex landscapes provides a linear contraction, it naturally absorbs the multiplicative variance amplification all by itself, leading to the following results.

\begin{theorem}[Convergence of PASTA under PL Condition]\label{thm:pasta_pl}
Let Assumptions \ref{assump:oracle}, \ref{assump:smooth}, and \ref{assump:pl_smooth}  hold. Let the sequence $\{x_t\}$ be generated by PASTA with $N_t=1$, $S=1$, $p_t =1$, and $\beta_t = 0$.
Set
\begin{equation}
\eta = \min\left\{ \frac{1}{L}, \frac{\mu \epsilon}{3 L b_v^2}, \frac{\mu^2 \epsilon}{12 L B_v^2 \Delta } \right\}, \quad K = \left\lceil \frac{2}{\eta\mu} \left[\log\left(\frac{3\Delta }{\epsilon}\right)\right]_+ \right\rceil, \text{ where } [a]_+ := \max\{a,0\}
\end{equation}
where $\Delta \ge f(x_0) - \inf_x f(x)$. Then, $\mathbb{E}[f(x_K) - f(x^*)] \le \epsilon$, evaluating to a total stochastic oracle complexity of $\mathcal{O}(\epsilon^{-1} \log(\epsilon^{-1}))$.
\end{theorem}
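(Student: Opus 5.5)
The approach is a one-step descent inequality for plain SGD ($N_t=1$, $\beta_t=0$, so $x_{t+1}=x_t-\eta g_t$). In it, the BG-0 variance is converted into function suboptimality through quadratic growth, and the resulting linear recursion for $r_t:=\mathbb{E}[f(x_t)-f(x^*)]$ is then unrolled.

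\textbf{Descent step.} By $L$-smoothness and unbiasedness,
\[
\mathbb{E}_t[f(x_{t+1})]\le f(x_t)-\eta\|\nabla f(x_t)\|^2+\tfrac{L\eta^2}{2}\bigl(\|\nabla f(x_t)\|^2+B_v^2\|x_t-x_0\|^2+b_v^2\bigr).
\]
With $\eta\le 1/L$ the gradient terms combine into $-\frac{\eta}{2}\|\nabla f(x_t)\|^2\le -\eta\mu(f(x_t)-f^*)$ by Assumption \ref{assump:pl_smooth}.

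\textbf{Variance control.} Next I bound $\|x_t-x_0\|^2\le 2\|x_t-x^*\|^2+2\|x_0-x^*\|^2$, where $x^*$ is the projection of the point onto the optimal set. Quadratic growth, which PL implies \cite{karimi2016linear}, gives $\|x-x^*\|^2\le \frac{2}{\mu}(f(x)-f^*)$. One subtlety is that the projections of $x_t$ and $x_0$ may differ. I would handle this by using $\mathrm{dist}$ to the optimal set together with a fixed minimizer closest to $x_0$, or simply assume a unique minimizer as in the paper's intuition. The result is
\[
\|x_t-x_0\|^2\le \tfrac{4}{\mu}(f(x_t)-f^*)+\tfrac{4}{\mu}\Delta.
\]
Taking full expectations yields
\[
r_{t+1}\le\Bigl(1-\eta\mu+\tfrac{2L\eta^2B_v^2}{\mu}\Bigr)r_t+\tfrac{2L\eta^2B_v^2\Delta}{\mu}+\tfrac{L\eta^2b_v^2}{2}.
\]

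\textbf{Step-size conditions and unrolling.} The step-size restriction $\eta\le \frac{\mu^2\epsilon}{12LB_v^2\Delta}$ together with $\epsilon\lesssim\Delta$ (otherwise $K=0$ and the claim is trivial) ensures $\frac{2L\eta B_v^2}{\mu}\le \mu/2$, so the contraction factor is at most $1-\eta\mu/2$. Unrolling gives
\[
r_K\le(1-\eta\mu/2)^K\Delta+\frac{2}{\eta\mu}\Bigl(\tfrac{2L\eta^2B_v^2\Delta}{\mu}+\tfrac{L\eta^2 b_v^2}{2}\Bigr).
\]
The second term equals $\frac{4L\eta B_v^2\Delta}{\mu^2}+\frac{L\eta b_v^2}{\mu}$, which is at most $\epsilon/3+\epsilon/3$ by the two $\epsilon$-dependent caps on $\eta$. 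The first term is at most $\Delta e^{-\eta\mu K/2}\le \epsilon/3$ by the choice of $K$. Summing gives $r_K\le\epsilon$. Since one oracle call is made per iteration, the complexity is $K=\mathcal{O}(\frac{1}{\eta\mu}\log(\Delta/\epsilon))=\mathcal{O}(\epsilon^{-1}\log\epsilon^{-1})$, because $1/\eta=\mathcal{O}(\epsilon^{-1})$.

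\textbf{Main obstacle.} The hardest step is the constant bookkeeping in the variance-control step. The multiplicative variance term $\frac{2L\eta^2B_v^2}{\mu}r_t$ must be absorbed by the contraction using only the stated caps on $\eta$. The cap $\frac{\mu^2\epsilon}{12LB_v^2\Delta}$ controls it only when $\epsilon\le\Delta$, so I would treat $\epsilon>\Delta$ (where $K=0$) separately. If the constants fail to close, I would first try the sharper split $\|x_t-x_0\|^2\le(1+\alpha)\|x_t-x^*\|^2+(1+\alpha^{-1})\|x_0-x^*\|^2$ to adjust the factors.
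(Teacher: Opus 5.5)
Your proposal is correct and follows the paper's proof essentially line by line. The shared steps are the descent lemma with PL giving the factor $(1-\eta\mu)$, the split $\|x_t-x_0\|^2\le 2\|x_t-x^*\|^2+2\|x_0-x^*\|^2$ combined with quadratic growth, absorbing $\frac{2L\eta^2B_v^2}{\mu}\le\frac{\eta\mu}{2}$, and unrolling into three $\epsilon/3$ terms. There are two small corrections:
\begin{itemize}
\item \emph{Case threshold.} The cap $\frac{\mu^2\epsilon}{12LB_v^2\Delta}$ yields the absorption condition whenever $\epsilon\le 3\Delta$, not only $\epsilon\le\Delta$. Conversely, $\epsilon\ge 3\Delta$ is exactly when $K=0$. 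So the case split you make explicit, which the paper leaves implicit, should sit at $3\Delta$.
\item \emph{Non-unique minimizers.} Your fix of using a ``fixed minimizer closest to $x_0$'' does not work, because quadratic growth only controls the distance from $x_t$ to the solution set, not to a fixed point. Like the paper, you therefore effectively need a single common $x^*$, for example a unique minimizer.
\end{itemize}
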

The proof is in Appendix \ref{sec:pasta_pl}, where the exact oracle complexity can be found in \eqref{eq:exact_bound_pl}. 
We now establish an analogous result for star-convexity directly. While star-convexity also implies QG, the descent analysis operates directly on the sequence $\mathbb{E}[\|x_t - x^*\|^2]$ rather than the functional sequence $\mathbb{E}[f(x_t) - f(x^*)]$, the proof of which is in Appendix \ref{sec:pasta_star},  where the exact oracle complexity can be found in \eqref{eq:exact_bound_star}.

\begin{theorem}[Convergence of PASTA under Star-Convexity]\label{thm:pasta_star}
Let Assumptions \ref{assump:oracle}, \ref{assump:smooth}, and \ref{assump:star} hold. Let the sequence $\{x_t\}$ be generated by PASTA with $N_t=1$, $S=1$, $p_t =1$, and $\beta_t = 0$. Set
\begin{equation}
\eta = \min\left\{ \frac{1}{L}, \frac{\mu}{4 B_v^2}, \frac{\mu^2 \epsilon}{12 L B_v^2 \Delta }, \frac{\mu \epsilon}{3 L b_v^2} \right\}, \qquad K = \left\lceil \frac{2}{\eta\mu} \left[\log\left( \frac{3 L \Delta }{\mu \epsilon} \right)\right]_+ \right\rceil,
\end{equation}
where $\Delta \ge f(x_0) - \inf_x f(x)$. Then, $\mathbb{E}[f(x_K) - f(x^*)] \le \epsilon$, evaluating to a total stochastic oracle complexity of $\mathcal{O}(\epsilon^{-1} \log(\epsilon^{-1}))$.
\end{theorem}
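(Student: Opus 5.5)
The plan is to run a one-step contraction on the distance $r_t := \mathbb{E}\|x_t - x^*\|^2$. Star-convexity gives the inner-product lower bound and smoothness controls the squared gradient norm. The BG-0 variance is then absorbed by bounding $\|x_t-x_0\|^2 \le 2\|x_t-x^*\|^2 + 2\|x_0-x^*\|^2$. At the end we convert distance to function gap through smoothness, $f(x_K)-f(x^*) \le \frac{L}{2}\|x_K-x^*\|^2$, which holds because $\nabla f(x^*)=0$. The initial distance is controlled by quadratic growth, $\|x_0-x^*\|^2 \le \frac{2}{\mu}\Delta$, which star-convexity implies: plug $x=x^*$ direction or use $\langle\nabla f(x),x-x^*\rangle\ge \frac{\mu}{2}\|x-x^*\|^2$ together with PL-type reasoning, as cited via \cite{karimi2016linear}.

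\textbf{Step 1: one-step expansion.} With $N_t=1$ and $\beta_t=0$ the update is $x_{t+1}=x_t-\eta g_t$. Unbiasedness gives
\[
\mathbb{E}_t\|x_{t+1}-x^*\|^2=\|x_t-x^*\|^2-2\eta\langle\nabla f(x_t),x_t-x^*\rangle+\eta^2\|\nabla f(x_t)\|^2+\eta^2\mathbb{E}_t\|g_t-\nabla f(x_t)\|^2 .
\]
Star-convexity bounds the inner product below by $f(x_t)-f^*+\frac{\mu}{2}\|x_t-x^*\|^2$. Smoothness together with $\nabla f(x^*)=0$ gives $\|\nabla f(x_t)\|^2\le 2L(f(x_t)-f^*)$, and since $\eta\le 1/L$ the term $\eta^2\|\nabla f\|^2$ is dominated by $2\eta(f(x_t)-f^*)$. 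The function-gap terms therefore cancel or are nonpositive. This leaves
\[
\mathbb{E}_t\|x_{t+1}-x^*\|^2\le(1-\eta\mu)\|x_t-x^*\|^2+\eta^2\bigl(B_v^2\|x_t-x_0\|^2+b_v^2\bigr).
\]

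\textbf{Step 2: absorbing the BG-0 term (main obstacle).} Splitting $\|x_t-x_0\|^2$ as above gives a contribution $2\eta^2B_v^2\|x_t-x^*\|^2$. The condition $\eta\le\mu/(4B_v^2)$ makes this at most $\frac{\eta\mu}{2}\|x_t-x^*\|^2$, which yields
\[
r_{t+1}\le(1-\tfrac{\eta\mu}{2})r_t+\eta^2\bigl(2B_v^2\|x_0-x^*\|^2+b_v^2\bigr).
\]
This is precisely why the step-size condition $\mu/(4B_v^2)$ appears only in this theorem and not in the PL result. The hard part is done, because the residual is now a constant.

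\textbf{Step 3: unroll and set parameters.} Unrolling gives
\[
r_K\le e^{-\eta\mu K/2}r_0+\frac{2\eta}{\mu}\bigl(2B_v^2 r_0+b_v^2\bigr),
\]
with $r_0\le 2\Delta/\mu$. Multiplying by $L/2$ bounds $\mathbb{E}[f(x_K)-f^*]$ by three pieces, and we allot $\epsilon/3$ to each:
\begin{itemize}[leftmargin=*]
\item The bias piece $\frac{L\Delta}{\mu}e^{-\eta\mu K/2}$ is at most $\epsilon/3$ when $K\ge\frac{2}{\eta\mu}\log\frac{3L\Delta}{\mu\epsilon}$.
\item The $b_v$ piece $\frac{L\eta b_v^2}{\mu}$ is at most $\epsilon/3$ when $\eta\le\frac{\mu\epsilon}{3Lb_v^2}$.
\item The $B_v$ piece $\frac{4L\eta B_v^2\Delta}{\mu^2}$ is at most $\epsilon/3$ when $\eta\le\frac{\mu^2\epsilon}{12LB_v^2\Delta}$.
\end{itemize}
These match the stated choices, up to constants that I would tighten.

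\textbf{Complexity.} There is one oracle call per iteration, so the total is $K=\mathcal{O}(\eta^{-1}\mu^{-1}\log(1/\epsilon))$. Because $\eta^{-1}=\mathcal{O}(1/\epsilon)$, this gives $\mathcal{O}(\epsilon^{-1}\log\epsilon^{-1})$.
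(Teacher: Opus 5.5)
Your proposal is correct and follows the paper's proof essentially step for step. It uses the same distance expansion, drops $-2\eta(1-L\eta)(f(x_t)-f(x^*))$ via $\eta\le 1/L$, splits $\|x_t-x_0\|^2$ and absorbs it with $\eta\le\mu/(4B_v^2)$, unrolls with the $2/(\eta\mu)$ geometric sum, and converts through $f(x_K)-f(x^*)\le\frac{L}{2}\|x_K-x^*\|^2$ and $\|x_0-x^*\|^2\le\frac{2}{\mu}\Delta$; your constants match the stated $\eta$ and $K$ exactly.

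The only loose step, which the paper shares, is the quadratic-growth bound at $x_0$. You can make it rigorous by applying star-convexity at $x_\tau=x^*+\tau(x_0-x^*)$. With $h(\tau)=f(x_\tau)-f(x^*)$, this gives $\frac{d}{d\tau}\bigl(h(\tau)/\tau\bigr)\ge\frac{\mu}{2}\|x_0-x^*\|^2$ on $(0,1]$. Integrating from $0$, where $h(\tau)/\tau\to 0$, to $1$ yields the bound.
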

\subsection{Optimal Complexity under Weak Convexity}\label{sec:weak_convexity_upper}
We now address the weakly convex regime defined formally next. 
\begin{assumption}[Weak Convexity] \label{assump:weak_convexity}
A function $f: \mathbb{R}^p \to \mathbb{R}$ is  $\rho$-weakly convex for some constant $\rho \ge 0$ if $f(x) + \frac{\rho}{2}\|x\|^2$ is convex.
Note that when $\rho = 0$, the function is convex in the typical sense. Also, any smooth function is $L$-weakly convex.
\end{assumption}
Analysis of weakly convex functions requires imposing Lipschitzness. Note that one could avoid using the Lipschitzness assumption stated below for the ensuing analysis if, following \cite{neu2024dealing,alacaoglu2025towards}, the variance condition in Assumption \ref{assump:oracle} is strengthened to $\mathbb{E} [\| \widetilde{\nabla} f(x) \|^2 ]\le B_v^2\|x - x_0\|^2 + b_v^2$.
\begin{assumption}[Lipschitz Continuity] \label{assump:lipschitz}
A function $f: \mathbb{R}^p \to \mathbb{R}$ is $G$-Lipschitz continuous if $\|g\| \le G$ for all $x$ and $g \in \partial f(x)$.
\end{assumption}

Unlike the PL or star-convex cases, weakly convex functions, which generalize smooth functions, lack global curvature pointing toward the optimal set. Additionally, due to a lack of bounded curvature, the unanchored SGD approach utilized in preceding sections will fail. We thus use PASTA in its epoch-based form with a nonzero $\beta_t = \lambda \eta_t$. As we discussed in Section \ref{sec:pasta}, because $f$ is $\rho$-weakly convex, selecting the coupling parameter $\lambda > \rho$ ensures that the surrogate $F_\lambda(x) = f(x) + \frac{\lambda}{2}\|x - x_0\|^2$ is  $\mu$-strongly convex with coefficient $\mu = \lambda - \rho > 0$, a property akin to the curvature provided by the PL condition or star-convexity. However, minimizing the surrogate $F_\lambda(x)$ only yields the minimizer of the regularized subproblem, resulting in a biased stationary point of the true objective $f$. To remove this bias, PASTA employs an epoch-based structure. By running the inner loop for $K$ iterations and subsequently updating the anchor $x_s = x_t$, the outer loop effectively implements an inexact Stochastic Proximal Point algorithm. 

For weakly convex functions, following \cite{davis2019stochastic}, we employ the Moreau envelope as a continuous surrogate to quantify stationarity. We recall some definitions from \cite{davis2019stochastic}.

\begin{definition}[Proximal Operator]
For a proper, lower semicontinuous function $f: \mathbb{R}^p \to \mathbb{R} \cup \{\infty\}$ and a parameter $\lambda > 0$, the proximal mapping is defined by
\begin{equation}
\mathrm{prox}_{f/\lambda }(x) = \underset{y \in \mathbb{R}^p}{\mathrm{argmin}} \left\{ f(y) + \frac{\lambda}{2} \|y - x\|^2 \right\}.
\end{equation}
\end{definition}

\begin{definition}[Moreau Envelope]
Associated with the proximal operator, the Moreau envelope of $f$ with parameter $\lambda > 0$ is defined as
\begin{equation}
\varphi_{1/\lambda}(x) = \min_{y \in \mathbb{R}^p} \left\{ f(y) + \frac{\lambda}{2} \|y - x\|^2 \right\}.
\end{equation}
\end{definition}

The Moreau envelope acts as a regularized, smoothed approximation of $f$. When $f$ is $\rho$-weakly convex and $\lambda > \rho$, the objective of the proximal mapping is strongly convex, ensuring that $\mathrm{prox}_{f/\lambda}(x)$ is unique and well-defined everywhere. The following lemma connects the gradient of the Moreau envelope to the proximal step.

\begin{lemma} \label{lem:moreau_gradient}
Let $f$ be a $\rho$-weakly convex function and $\lambda > \rho$. Then, $\varphi_{1/\lambda}(x)$ is smooth with constant $L_{\varphi} = \max \bigl\{ \lambda, \frac{\lambda \rho}{\lambda - \rho} \bigr\}$, and its gradient is given by $\nabla \varphi_{1/\lambda}(x) = \lambda \bigl( x - \mathrm{prox}_{f/\lambda}(x) \bigr).$
Furthermore, $\|\nabla \varphi_{1/\lambda}(x)\|$ serves as a valid surrogate for stationarity. If $\|\nabla \varphi_{1/\lambda}(x)\| \le \epsilon$, then there exists a point $\hat{x} = \mathrm{prox}_{f/\lambda}(x)$ such that $\|x - \hat{x}\| \le \lambda^{-1} \epsilon$ and $\mathrm{dist}(0, \partial f(\hat{x})) \le \epsilon$. Finally, it holds that $\varphi_{1/\lambda}(x_0) - \min_x \varphi_{1/\lambda}(x) \le f(x_0) - \inf_x f(x)$ for all $x_0$ and $\lambda>0$.
\end{lemma}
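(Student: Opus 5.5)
The plan is to reduce everything to the standard properties of the proximal map of a strongly convex subproblem. Fix $x$ and set $\hat{x} = \mathrm{prox}_{f/\lambda}(x)$. Since $f$ is $\rho$-weakly convex and $\lambda > \rho$, the map $y \mapsto f(y) + \frac{\lambda}{2}\|y-x\|^2$ is $(\lambda-\rho)$-strongly convex, so $\hat{x}$ exists and is unique. Its optimality condition is $0 \in \partial f(\hat{x}) + \lambda(\hat{x} - x)$, that is, $\lambda(x - \hat{x}) \in \partial f(\hat{x})$.

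For the gradient formula, I would write
\begin{equation}
\varphi_{1/\lambda}(x) = \frac{\lambda}{2}\|x\|^2 - \sup_y \Bigl\{ \lambda\langle x, y\rangle - f(y) - \frac{\lambda}{2}\|y\|^2 \Bigr\}.
\end{equation}
The function $h(y) = f(y) + \frac{\lambda}{2}\|y\|^2$ is $(\lambda-\rho)$-strongly convex, so its conjugate $h^*$ is differentiable with $\frac{1}{\lambda-\rho}$-Lipschitz gradient. Moreover, $\nabla h^*(\lambda x)$ is exactly the maximizer, namely $\hat{x}$. Hence $\varphi_{1/\lambda}(x) = \frac{\lambda}{2}\|x\|^2 - h^*(\lambda x)$, and differentiating gives $\nabla\varphi_{1/\lambda}(x) = \lambda x - \lambda \hat{x}$, which is the claimed formula.

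For the smoothness constant, the same representation gives $\nabla \varphi_{1/\lambda} = \lambda I - \lambda\,\nabla h^*(\lambda\,\cdot)$. Here $\nabla h^*$ is monotone, and $\lambda\nabla h^*(\lambda \cdot)$ is Lipschitz with constant $\frac{\lambda^2}{\lambda-\rho}$. The main obstacle is obtaining the sharp constant $\max\{\lambda, \frac{\lambda\rho}{\lambda-\rho}\}$ rather than a crude sum of the two Lipschitz constants. I would handle this with a two-sided curvature argument. The term $\frac{\lambda}{2}\|x\|^2 - h^*(\lambda x)$ is upper-bounded in curvature by $\lambda$, because $h^*$ is convex. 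For the lower bound, the $\frac{\lambda^2}{\lambda-\rho}$-smoothness of $h^*(\lambda\cdot)$ gives curvature of $\varphi_{1/\lambda}$ at least $\lambda - \frac{\lambda^2}{\lambda-\rho} = -\frac{\lambda\rho}{\lambda-\rho}$. A function whose Hessian-type curvature lies in $[-a, b]$, in the sense that $\frac{b}{2}\|\cdot\|^2 - \varphi$ and $\varphi + \frac{a}{2}\|\cdot\|^2$ are both convex, has a $\max\{a,b\}$-Lipschitz gradient. This holds because $\nabla\varphi + aI$ and $bI - \nabla\varphi$ are monotone and $\varphi$ is $C^1$; the standard symmetric-operator argument via co-coercivity, applied to $\varphi + \frac{a}{2}\|\cdot\|^2$, which is convex and $(a+b)$-smooth, yields the result.

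For the stationarity surrogate, suppose $\|\nabla\varphi_{1/\lambda}(x)\|\le\epsilon$. Then $\|x-\hat{x}\| = \lambda^{-1}\|\nabla\varphi_{1/\lambda}(x)\| \le \lambda^{-1}\epsilon$. The optimality condition shows that $\nabla\varphi_{1/\lambda}(x) \in \partial f(\hat{x})$, so $\mathrm{dist}(0,\partial f(\hat{x}))\le\epsilon$.

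For the final inequality, take $y = x_0$ in the definition to get $\varphi_{1/\lambda}(x_0) \le f(x_0)$. In the other direction, for every $x$ we have $\varphi_{1/\lambda}(x) \ge \inf_y f(y)$, since the quadratic term is nonnegative; hence $\inf_x \varphi_{1/\lambda}(x) \ge \inf f$. Subtracting the two bounds gives the claim.
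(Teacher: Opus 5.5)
Your proof is correct, and it takes a different route from the paper only in the sense that the paper has no route: it does not prove this lemma at all. The lemma is recalled ``following \cite{davis2019stochastic},'' and no argument appears in the appendices.

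Your derivation is self-contained. You write $\varphi_{1/\lambda}(x) = \frac{\lambda}{2}\|x\|^2 - h^*(\lambda x)$ with $h = f + \frac{\lambda}{2}\|\cdot\|^2$ being $(\lambda-\rho)$-strongly convex. This gives existence of the prox, the $C^1$ property, and the gradient formula in one step. It also gives the sharp constant $\max\{\lambda, \frac{\lambda\rho}{\lambda-\rho}\}$ via the two-sided sandwich: $\frac{\lambda}{2}\|\cdot\|^2 - \varphi_{1/\lambda}$ is convex, and $\varphi_{1/\lambda} + \frac{\lambda\rho}{2(\lambda-\rho)}\|\cdot\|^2$ is convex.

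The sharp constant is where your argument adds something beyond the citation. The naive bound combines the $\frac{\lambda}{\lambda-\rho}$-Lipschitz continuity of $\mathrm{prox}_{f/\lambda}$ with the triangle inequality, and gives only $\lambda + \frac{\lambda^2}{\lambda-\rho}$. The $\max$ form stated in the lemma genuinely needs a curvature argument like yours.

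You only sketch the co-coercivity step, and it should be written out. Set $\psi = \varphi_{1/\lambda} + \frac{a}{2}\|\cdot\|^2$, $d = x - y$, and $g = \nabla\psi(x) - \nabla\psi(y)$. Co-coercivity gives $\|g\|^2 \le (a+b)\langle g, d\rangle$, hence $\|g - ad\|^2 \le (b-a)\langle g,d\rangle + a^2\|d\|^2$. Using $0 \le \langle g,d\rangle \le (a+b)\|d\|^2$, the right side is at most $\max\{a,b\}^2\|d\|^2$ in both cases $b \ge a$ and $b < a$. Since $g - ad = \nabla\varphi_{1/\lambda}(x) - \nabla\varphi_{1/\lambda}(y)$, this is exactly the claim.

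The stationarity and gap parts are the standard one-line arguments. Using $\inf$ rather than $\min$, as you do, is what makes the final inequality valid for every $\lambda > 0$, including $\lambda \le \rho$, where the minimum defining $\varphi_{1/\lambda}$ need not be attained.

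In short, the citation buys brevity, and your argument buys a complete, checkable proof of the exact constant the paper states.
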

Note that since any smooth function is weakly convex, the lower bound established in Theorem \ref{thm:bg0_lower_bound} is also applicable here, and the best complexity one could hope for is $\mathcal{O}(\epsilon^{-6})$.

Having defined our notation of stationarity, the following theorem formalizes the optimal $\mathcal{O}(\epsilon^{-6})$ complexity of PASTA, the proof of which is in Appendix \ref{sec:pasta_weak}.

\begin{theorem}[Convergence of PASTA under Weak Convexity]\label{thm:pasta_weak}
Let Assumptions \ref{assump:oracle}, \ref{assump:weak_convexity}, and \ref{assump:lipschitz} hold. Let the sequence $\{x_s\}$ be generated by PASTA running for $S$ epochs of length $K$. Assume the coupling parameter satisfies $\lambda > \rho$ with strong convexity coefficient $\mu = \lambda - \rho > 0$, and define the anchoring sequence as $\beta_t = \lambda \eta$. Set the step size $\eta$, epoch count $S$, and epoch length $K$ as
\begin{equation}
\resizebox{\linewidth}{!}{$
\eta = \min\left\{ \frac{\mu^3}{96 \lambda^2 (3\lambda^2 - \rho^2)}, \frac{ \mu^3 \epsilon^2}{256 G^2 \lambda^2(3\lambda^2 - \rho^2)} \right\}, \; S = \left\lceil \frac{32 \lambda^2 \Delta}{\mu \epsilon^2} \right\rceil, \; K = \left\lceil 1 + \frac{1}{\eta\mu} \log\left( \frac{12(3\lambda^2 - \rho^2)}{\mu^2} \right) \right\rceil.
$}
\end{equation}
Set the initial batch size $N_0$ at $t=0$ and the constant inner loop batch size $N_t = N$ for $t \ge 1$ as
\begin{equation}
N_0 = \left\lceil \max\left\{ \left(\frac{4}{3} + \frac{2\mu^2}{3(3\lambda^2-\rho^2)}\right) B_v^2 \eta^2 S^2, \frac{16 b_v^2 \eta^2}{3 \lambda^2 \epsilon^2} \right\} \right\rceil
\end{equation}
and
\begin{equation}
N = \left\lceil \max\left\{ \left( \frac{96(3\lambda^2-\rho^2)}{\mu^3} + \frac{48}{\mu} \right) B_v^2 \eta S^2, \frac{128 b_v^2 \eta (3\lambda^2-\rho^2)}{\mu^3 \lambda^2 \epsilon^2}, \frac{72 B_v^2 \eta (3\lambda^2-\rho^2)}{\mu^3} \right\} \right\rceil,
\end{equation}
where $\Delta \ge f(x_0) - \inf_x f(x)$. Then, $\frac{1}{S}\sum_{s=1}^S \mathbb{E}[\|\nabla \varphi_{1/\lambda}(x_{s-1})\|^2] \le \epsilon^2$ and the total stochastic oracle complexity evaluates to $S(N_0 + K N) = \mathcal{O}(\epsilon^{-6})$.
\end{theorem}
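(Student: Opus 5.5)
The plan is to treat each epoch as an inexact stochastic proximal point step and to proceed in three layers. First, the inner loop: show that it contracts toward $\hat x_s := \mathrm{prox}_{f/\lambda}(x_{s-1})$. Second, the outer loop: turn the inner-loop accuracy into a decrease of the Moreau envelope $\varphi_{1/\lambda}$. Third, the oracle count: control the BG-0 variance by bounding $\mathbb{E}\|x - x_0\|^2$ uniformly along the whole trajectory, which is what forces the factor $S^2$ in $N_0$ and $N$.

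\textbf{Inner loop.} With $\beta_t=\lambda\eta$, the update reads $x_{t+1} = x_t - \eta\bigl(g_t + \lambda(x_t - x_{s-1})\bigr)$. This is a stochastic subgradient step on $F_\lambda(x)=f(x)+\frac{\lambda}{2}\|x-x_{s-1}\|^2$, which is $\mu$-strongly convex, and $\hat x_s$ is its unique minimizer.

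\begin{enumerate}
\item Expand $\|x_{t+1}-\hat x_s\|^2$. Use strong monotonicity of $\partial F_\lambda$ for the cross term.
\item Bound the second moment of the search direction by $2G^2 + 2\lambda^2\|x_t-x_{s-1}\|^2 + \mathrm{Var}/N_t$, using Assumption~\ref{assump:lipschitz}.
\item Bound the variance via Assumption~\ref{assump:oracle}: $\mathrm{Var}\le B_v^2\|x_t-x_0\|^2 + b_v^2$, with $x_0$ the global initial point.
\item Split $\|x_t-x_{s-1}\|^2 \le 2\|x_t-\hat x_s\|^2 + 2\|\hat x_s - x_{s-1}\|^2$, and note $\|\hat x_s - x_{s-1}\| = \lambda^{-1}\|\nabla\varphi_{1/\lambda}(x_{s-1})\|$ by Lemma~\ref{lem:moreau_gradient}.
\end{enumerate}

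The step-size cap $\eta \lesssim \mu^3/(\lambda^2(3\lambda^2-\rho^2))$ should make the $\eta^2\lambda^2\|x_t-\hat x_s\|^2$ terms absorbable into the contraction. This should give a recursion
\[
\mathbb{E}\|x_{t+1}-\hat x_s\|^2 \le (1-\eta\mu)\,\mathbb{E}\|x_t-\hat x_s\|^2 + c\,\eta^2\lambda^2\|\hat x_s-x_{s-1}\|^2 + \eta^2\bigl(G^2 + (B_v^2 M + b_v^2)/N\bigr),
\]
where $M$ is a bound on $\mathbb{E}\|x_t-x_0\|^2$. Unrolling over $K$ steps, the choice of $K$ (via $(1-\eta\mu)^K \le \mu^2/(12(3\lambda^2-\rho^2))$) gives
\[
\mathbb{E}\|x_s-\hat x_s\|^2 \le \tfrac{1}{4\lambda^2}\,\mathbb{E}\|\nabla\varphi_{1/\lambda}(x_{s-1})\|^2 + O\bigl(\eta G^2/\mu + \eta (B_v^2M+b_v^2)/(\mu N)\bigr).
\]
The separate batch $N_0$ handles the $t=0$ step, where $x_t=x_{s-1}$ and the variance enters without the contraction factor.

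\textbf{Outer loop.} Use the standard inequality for weakly convex proximal point methods, $\varphi_{1/\lambda}(x_s) \le F_{\lambda}^{(s)}(x_s)$, i.e. the envelope is bounded by the regularized objective anchored at $x_{s-1}$. Combine it with strong convexity: $F^{(s)}_\lambda(x) - F^{(s)}_\lambda(\hat x_s) \ge \frac{\mu}{2}\|x-\hat x_s\|^2$. Alternatively, use the $L_\varphi$-smoothness of $\varphi_{1/\lambda}$ together with $\|x_s - \hat x_s\|$. Either route should give the descent
\[
\mathbb{E}\varphi_{1/\lambda}(x_s) \le \mathbb{E}\varphi_{1/\lambda}(x_{s-1}) - \frac{\mu}{c'\lambda^2}\,\mathbb{E}\|\nabla\varphi_{1/\lambda}(x_{s-1})\|^2 + \text{(error)},
\]
where the error is at most $\mu\epsilon^2/(c''\lambda^2)$ by the $\epsilon^2$-dependent branch of $\eta$ and the $b_v$-terms in $N_0$ and $N$. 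Telescoping over $S$ epochs and using $\varphi_{1/\lambda}(x_0)-\min\varphi_{1/\lambda}\le\Delta$ from Lemma~\ref{lem:moreau_gradient}, the choice $S=\lceil 32\lambda^2\Delta/(\mu\epsilon^2)\rceil$ gives the averaged bound $\frac{1}{S}\sum_{s}\mathbb{E}\|\nabla\varphi_{1/\lambda}(x_{s-1})\|^2\le\epsilon^2$.

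\textbf{Main obstacle: the distance bound $M$.} I expect this to be the hardest step, because the distance and the variance feed each other. I would prove by induction over epochs and inner steps that $\mathbb{E}\|x_t-x_0\|^2 \le C\,s^2$ times a constant. Each epoch moves the iterate by at most about $\|\hat x_s - x_{s-1}\| + \|x_s-\hat x_s\|$. The first term is at most $G/\lambda$, since $0\in\partial f(\hat x_s)+\lambda(\hat x_s-x_{s-1})$ and $\|\partial f\|\le G$. The second term is controlled by the inner recursion. Hence, after $s\le S$ epochs, $\sqrt{\mathbb{E}\|x-x_0\|^2}\lesssim S$. Feeding this back requires $B_v^2 M/N$ to be bounded by $O(\mu/\eta)$-scale quantities, which is exactly what $N\gtrsim B_v^2\eta S^2$ and $N_0\gtrsim B_v^2\eta^2S^2$ provide. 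This closes the induction.

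\textbf{Oracle count.} With $\eta=\Theta(\epsilon^2)$, we have $K=\Theta(\epsilon^{-2}\cdot\text{const})$ and $S=\Theta(\epsilon^{-2})$. Then $N=\Theta(\eta S^2 + \eta/\epsilon^2)=\Theta(\epsilon^{-2})$ and $N_0=O(\eta^2S^2)=O(1)$. The total $S(N_0+KN)=\Theta(\epsilon^{-2}\cdot\epsilon^{-2}\cdot\epsilon^{-2})=\mathcal{O}(\epsilon^{-6})$.
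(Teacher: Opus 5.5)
There is a genuine gap in your ``main obstacle'' step, and it is exactly where the $\mathcal{O}(\epsilon^{-6})$ rate is decided. You bound each epoch's displacement by $\|\hat x_s-x_{s-1}\|\le G/\lambda$ plus an $\mathcal{O}(1)$ inner error. This gives $M=\Theta(S^2)$ with an $\epsilon$-independent constant $C\gtrsim G^2/\lambda^2$.

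With the stated batch size $N\approx\frac{96(3\lambda^2-\rho^2)}{\mu^3}B_v^2\eta S^2$, the BG-0 part of your inner-loop floor is then $\frac{\eta B_v^2M}{\mu N}\approx\frac{C\mu^2}{96(3\lambda^2-\rho^2)}$, which is a constant. The $\epsilon^2$-branch of $\eta$ does not help, because $N$ is itself proportional to $\eta$, and the $b_v$-terms play no role here. The same happens at $t=0$: $(1-\eta\mu)^{K-1}\eta^2B_v^2M/N_0=\Theta(1)$, because your $K$ only makes $(1-\eta\mu)^{K-1}$ the constant $\mu^2/(12(3\lambda^2-\rho^2))$.

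After multiplying by $L_\varphi$ in the envelope descent and by $4\lambda^2/\mu$ in the telescoping, these terms leave an $\epsilon$-independent floor in $\frac1S\sum_s\mathbb{E}\|\nabla\varphi_{1/\lambda}(x_{s-1})\|^2$. So your claim that the per-epoch error is $\mathcal{O}(\mu\epsilon^2/\lambda^2)$ fails under your $M$. The condition $B_v^2M/N\lesssim\mu/\eta$ is enough to close the distance induction. But with $M=\Theta(S^2)$, making the BG-0 error $\mathcal{O}(\epsilon^2)$ would need $N\gtrsim B_v^2\eta S^2/\epsilon^2$, i.e. total complexity $\mathcal{O}(\epsilon^{-8})$.

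The missing idea is to charge the drift to the same energy budget that controls stationarity. This makes the squared drift $\mathcal{O}(S)$ on average rather than $\mathcal{O}(S^2)$. The paper keeps $\|x_{s-1}-x_0\|^2$ explicit (with $x_s^\star=\hat x_s$ and $e_s=x_s-\hat x_s$). It splits $\|x_t-x_0\|^2\le3\|x_t-x_s^\star\|^2+3\|x_s^\star-x_{s-1}\|^2+3\|x_{s-1}-x_0\|^2$ and absorbs the first piece into the contraction; that is what the third branch of $N$ is for. This yields $\frac{\mu}{4\lambda^2}\mathbb{E}\|\nabla\varphi_{1/\lambda}(x_{s-1})\|^2\le\mathbb{E}\varphi_{1/\lambda}(x_{s-1})-\mathbb{E}\varphi_{1/\lambda}(x_s)+\Gamma\,\mathbb{E}\|x_{s-1}-x_0\|^2+\Lambda$, with $\Gamma=\mathcal{O}(\mu/S^2)$ and $\Lambda=\mathcal{O}(\mu\epsilon^2/\lambda^2)$.

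The paper then uses $\sum_s\mathbb{E}\|x_{s-1}-x_0\|^2\le\frac{S^2}{2}\sum_s\mathbb{E}\|x_s-x_{s-1}\|^2$. It bounds $\sum_s\mathbb{E}\|x_s-x_{s-1}\|^2\le2\sum_s\bigl(\lambda^{-2}\mathbb{E}\|\nabla\varphi_{1/\lambda}(x_{s-1})\|^2+\mathbb{E}\|e_s\|^2\bigr)$ through that same descent inequality. The resulting self-referential bound closes because the batch sizes force $\bigl(\frac{16}{\mu}+\frac{8\mu}{3\lambda^2-\rho^2}\bigr)\Gamma S^2\le1$. This gives $\sum_s\mathbb{E}\|x_s-x_{s-1}\|^2=\mathcal{O}(\Delta+S\Lambda)=\mathcal{O}(1)$, and the $\Gamma$-term contributes only $\mathcal{O}(\Delta/S+\Lambda)=\mathcal{O}(\epsilon^2)$.

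You can repair your induction in this spirit. Take the hypothesis $\mathbb{E}\|x-x_0\|^2\lesssim S$ and prove it via the telescoped descent over earlier epochs plus Cauchy--Schwarz, rather than via $G/\lambda$.

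Separately, your first outer-loop route does not work. Combining $\varphi_{1/\lambda}(x_s)\le F^{(s)}_\lambda(x_s)$ with a \emph{lower} bound on $F^{(s)}_\lambda(x_s)-F^{(s)}_\lambda(\hat x_s)$ produces no decrease. Use $L_\varphi$-smoothness as the paper does, or $\varphi_{1/\lambda}(x_s)\le f(\hat x_s)+\frac{\lambda}{2}\|\hat x_s-x_s\|^2$.
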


\begin{remark}\label{rem:pasta}
    Traditional Halpern iteration relies on an uncoupled, decaying anchoring sequence (e.g., $\beta_t = \frac{1}{t+2}$). Analyzing such uncoupled dynamics yields harmonic summations that introduce a logarithmic dependence on $\epsilon^{-1}$ (see Theorem 3.6 in \cite{alacaoglu2025towards} for an example). By coupling $\beta_t = \lambda \eta$ and holding $\eta$ constant, the inner loop contracts linearly at a rate of $(1 - \eta \mu)$ without accumulating logarithmic artifacts. Further, since the subproblem is $\mu$-strongly convex, the distance to the exact subproblem minimizer decays exponentially. The inner loop length $K = \mathcal{O}(\epsilon^{-2})$ is selected to reduce the optimization error of the surrogate until it is dominated by the statistical floor.
Finally, to deal with the extra variance term $B_v^2\|x_t - x_0\|^2$, the batch sizes $N_0$ and $N$ scale with $S^2$. Intuitively, this makes sense as the maximum distance the iterates can traverse over $S$ epochs scales with $S$; hence, ensuring the extra variance term $B_v^2\|x_t - x_0\|^2$ remains bounded by a global constant requires batch sizes of $\mathcal{O}(S^2) = \mathcal{O}(\epsilon^{-4})$.
\end{remark}

\begin{remark}
By definition, any $L$-smooth function is $L$-weakly convex. To derive the convergence guarantees for the $L$-smooth functions using Theorem \ref{thm:pasta_weak}, we simply set $\rho = L$ and $\lambda = 2L$, which means $\mu = \lambda - \rho = L$ for the regularized surrogate. The proximal mapping $\hat{x} = \mathrm{prox}_{f/\lambda}(x)$ implies $\nabla f(\hat{x}) + \lambda(\hat{x} - x) = 0$, which by Lemma \ref{lem:moreau_gradient} means $\nabla \varphi_{1/\lambda}(x) = \lambda(x - \hat{x}) = \nabla f(\hat{x})$. Utilizing the $L$-smoothness of the objective function $f$ and the triangle inequality we have $\|\nabla f(x)\| \le \|\nabla f(x) - \nabla f(\hat{x})\| + \|\nabla f(\hat{x})\| \le L\|x - \hat{x}\| + \|\nabla \varphi_{1/\lambda}(x)\|$. Thus, $\|\nabla f(x)\| \le \frac{3}{2}\|\nabla \varphi_{1/2L}(x)\|$. Consequently, using $\tilde{\epsilon}^2 = \frac{4}{9}\epsilon^2$, Theorem \ref{thm:pasta_weak} implies the same $\mathcal{O}(\epsilon^{-6})$ for smooth functions which we established in Theorem \ref{thm:pasta_dynamic} using dynamic batching, but here using an epoch-based instantiation of PASTA.
\end{remark}
\section{Conclusion}
\label{sec:conclusion}
We resolved the oracle complexity of non-convex minimization under the unbounded variance characterized by the BG-0 condition, recently shown to be the weakest variance condition. By establishing  $\Omega(\epsilon^{-6})$ and $\Omega(\epsilon^{-4})$ lower bounds for smooth and mean-square smooth functions, we formally quantified the information-theoretic penalty incurred when going beyond (effectively) the bounded variance assumption. To bridge this gap, we studied PASTA and demonstrated that carefully coupling the anchoring parameters and step sizes induces a nice curvature that counteracts the unbounded noise allowed by BG-0 oracles. Our analyses showed PASTA, with the right instantiations, attains minimax optimal rates not only for smooth functions but also for weakly convex, star-convex, and Polyak-Lojasiewicz landscapes without relying on bounded domain assumptions.
\bibliographystyle{acm}
\bibliography{ref.bib}
\appendix
\section{Proofs of Lower Bounds in Section \ref{sec:lower_bounds}}\label{app:lower_bounds}
Before presenting the proofs of our lower bounds, we recall the framework of \cite{arjevani2023lower}.

\begin{definition}[Progress Function]
For any vector $y \in \mathbb{R}^T$ and a threshold $\alpha \ge 0$, the progress function $\textnormal{prog}_\alpha(y)$ identifies the highest coordinate index whose absolute magnitude exceeds the threshold. Formally, defining a dummy variable $y_0 \equiv \infty$, we have
\begin{equation}
    \textnormal{prog}_\alpha(y) := \max \left\{ i \in \{0, 1, \dots, T\} : |y_i| > \alpha \right\}.
\end{equation}
\end{definition}

\begin{definition}[Zero-Respecting Algorithms]
An optimization algorithm $\mathsf{A}$ is termed zero-respecting if it only explores the coordinate subspace spanned by the gradient feedback previously observed from the oracle. Formally, an algorithm interacting with an oracle $\mathsf{O}_F$ belongs to the class of zero-respecting algorithms, denoted $\mathcal{A}_{\textnormal{\textsf{zr}}}$, if for any generated query sequence $x^{(1)}, x^{(2)}, \dots$, the support  satisfies
\begin{equation}
\textnormal{supp}(x^{(t)}) \subseteq \bigcup_{i=1}^{t-1} \textnormal{supp}(\mathsf{O}_F(x^{(i)})) \quad \forall t \ge 1.
\end{equation}
\end{definition}

Following \cite{arjevani2023lower}, we choose the underlying deterministic function $F$ to be a hard zero-chain construction. For each dimension $T \in \mathbb{N}$, we define
\begin{equation}
\bar{F}_T(x) := -\Psi(1)\Phi(x_1) + \sum_{i=2}^{T}\left[\Psi(-x_{i-1})\Phi(-x_i) - \Psi(x_{i-1})\Phi(x_i)\right],\label{eq:f_unscaled}
\end{equation}
where the non-linear component functions $\Psi$ and $\Phi$ are constructed as
\begin{equation}
\Psi(t) = \left\{
\begin{array}{ll}
0,\quad&t\leq{}1/2,\\
\exp\left(1-\frac{1}{(2t-1)^{2}}\right),\quad&t>1/2.
\end{array}
\right.\quad\quad\text{and}\quad\quad\Phi(t) =
\sqrt{e}\int_{-\infty}^{t}e^{-\frac{1}{2}\eta^{2}}d\eta.\label{eq:psi_phi}
\end{equation}
The function $\bar{F}_T$ is a deterministic zero-chain. It inherently maintains a large gradient magnitude unless all coordinates have been activated and optimized to be sufficiently large (i.e., $\text{prog}_1(x)\geq{}T$). We formally enumerate its relevant geometric properties below.

The bottleneck of any zero-respecting algorithm operating on a probability-$p$ zero-chain is established by the expected hitting time to activate subsequent coordinates.

\begin{lemma}[Lemma 1 in \cite{arjevani2023lower}] \label{lem:prob-zero-chain}
Let $g(x,z)$ be a probability-$p$ zero-chain gradient estimator for $F:\mathbb{R}^{T}\to\mathbb{R}$, and let $\mathsf{O}$ be any oracle with $\mathsf{O}_{F}(x,z)=(F(x),g(x,z))$. Let $\{x^{(k)}_{\mathsf{A}}\}$ be the sequence of queries generated by any zero-respecting algorithm $\mathsf{A}\in\mathcal{A}_{\textnormal{\textsf{zr}}}$ interacting with $\mathsf{O}_F$. Then, with probability at least $1-\delta$, the  progress is  bounded by $\max_{k\in[K]}\textnormal{prog}_0( x^{(k)}_{\mathsf{A}} ) < T$ for any query budget $K \leq \frac{T-\log(1/\delta)}{2p}$.
\end{lemma}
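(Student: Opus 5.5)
The plan is to track a single scalar, the largest coordinate index the oracle has revealed so far, and to show it grows like a sum of independent Bernoulli$(p)$ increments. I recall the defining properties of a probability-$p$ zero-chain estimator from \cite{arjevani2023lower}, which are all the proof uses:
\begin{enumerate}[label=(\roman*)]
\item $\textnormal{prog}_0(g(x,z)) \le \textnormal{prog}_{1/4}(x)+1$ for every $x$ and $z$;
\item $\mathbb{P}_z\bigl(\textnormal{prog}_0(g(x,z)) = \textnormal{prog}_{1/4}(x)+1\bigr) \le p$ for every fixed $x$, with the seeds $z^{(k)}$ drawn i.i.d.\ and independently of the algorithm's own randomness.
\end{enumerate}
Define the revealed progress $\pi^{(k)} := \max_{i\le k}\textnormal{prog}_0\bigl(g(x^{(i)}_{\mathsf A},z^{(i)})\bigr)$, with $\pi^{(0)}=0$.

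\textbf{Step 1 (zero-respecting confines the queries).} Since $\mathsf A\in\mathcal A_{\textsf{zr}}$, the support of $x^{(k+1)}$ lies in the union of the supports of previously observed oracle outputs, which is contained in $\{1,\dots,\pi^{(k)}\}$. Hence
\[
\textnormal{prog}_{1/4}\bigl(x^{(k+1)}\bigr)\le \textnormal{prog}_0\bigl(x^{(k+1)}\bigr)\le \pi^{(k)}.
\]
Combining this with (i), the increment $\pi^{(k+1)}-\pi^{(k)}$ lies in $\{0,1\}$. Combining it with (ii), the increment equals $1$ with conditional probability at most $p$ given $\mathcal F_k$. Here $\mathcal F_k$ is the $\sigma$-field generated by the algorithm's internal randomness and $z^{(1)},\dots,z^{(k)}$. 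This works because $x^{(k+1)}$ is $\mathcal F_k$-measurable, while $z^{(k+1)}$ is independent of $\mathcal F_k$.

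\textbf{Step 2 (exponential supermartingale).} The main obstacle is that the queries are chosen adaptively, so the increments are not literally independent Bernoulli variables. Rather than building an explicit coupling, I would show directly that $M_k := \exp\bigl(\pi^{(k)} - (e-1)pk\bigr)$ is a supermartingale. Indeed, from Step 1,
\[
\mathbb E\bigl[e^{\pi^{(k+1)}-\pi^{(k)}}\mid\mathcal F_k\bigr]\le 1+p(e-1)\le e^{(e-1)p}.
\]
Therefore $\mathbb E\bigl[e^{\pi^{(K)}}\bigr]\le e^{(e-1)pK}\le e^{2pK}$.

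\textbf{Step 3 (Markov's inequality).} By Markov's inequality, $\mathbb P\bigl(\pi^{(K)}\ge T\bigr)\le e^{2pK-T}$. This is at most $\delta$ exactly when $K\le \frac{T-\log(1/\delta)}{2p}$.

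\textbf{Step 4 (conclusion).} By Step 1, every query satisfies $\textnormal{prog}_0\bigl(x^{(k)}_{\mathsf A}\bigr)\le \pi^{(k-1)}\le\pi^{(K)}$ for $k\le K$. Hence, with probability at least $1-\delta$, $\max_{k\in[K]}\textnormal{prog}_0\bigl(x^{(k)}_{\mathsf A}\bigr)<T$, which is the claim.
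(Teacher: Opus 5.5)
Your proof is correct and follows essentially the same argument as \cite{arjevani2023lower}, which the paper cites without reproducing. That argument tracks the largest coordinate index the oracle has revealed, uses the zero-respecting property and the zero-chain conditions to show its increments lie in $\{0,1\}$ with conditional success probability at most $p$, and closes with the Chernoff-type bound $\mathbb{P}(\pi^{(K)}\ge T)\le e^{(e-1)pK-T}\le e^{2pK-T}$; your exponential supermartingale is just a self-contained derivation of that bound for adaptively chosen queries, and your per-$x$ condition (ii) is implied by the uniform (``there exists $x$'') definition used there.
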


\begin{lemma}[Lemma 2 in \cite{arjevani2023lower}] \label{lem:deterministic-construction}
The function $\bar{F}_T$ satisfies the following properties. (1) $\bar{F}_T(0) - \inf_{x}\bar{F}_T(x) \leq \Delta_0\cdot T$, where $\Delta_0 = 12$. (2) The gradient of $\bar{F}_T$ is $\ell_1$-Lipschitz continuous, where $\ell_1=152$. (3) For all $x\in\mathbb{R}^{T}$, $\|\nabla\bar{F}_T(x)\|_{\infty}\leq\gamma_\infty$, where $\gamma_\infty = 23$. (4) For all $x\in\mathbb{R}^T$, $\text{prog}_0(\nabla\bar{F}_T(x))\leq\text{prog}_{1/2}(x)+1$. (5) For all $x\in\mathbb{R}^T$, if $\text{prog}_{1}(x)<T$ then $\|\nabla\bar{F}_T(x)\| \ge |\nabla_{\text{prog}_{1}(x)+1}\bar{F}_T(x)| > 1$.
\end{lemma}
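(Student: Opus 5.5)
The statement above is a list of properties of the explicit function $\bar F_T$ in \eqref{eq:f_unscaled}. I would verify all five directly from elementary facts about $\Psi$ and $\Phi$ in \eqref{eq:psi_phi}. First I would record these facts.
\begin{itemize}
\item Both functions are $C^\infty$, and $\Psi,\Psi',\Phi,\Phi'\ge 0$.
\item $0\le\Psi<e$ and $0<\Phi<\sqrt{2\pi e}$.
\item $\Psi(t)=\Psi'(t)=0$ for $t\le 1/2$, and $\Psi(1)=1$.
\item $\Phi'(t)=\sqrt e\,e^{-t^2/2}$, hence $\Phi'(t)>1$ whenever $|t|<1$.
\item $\Psi',\Psi''$ are bounded on $\mathbb{R}$ (by direct computation and numerical maximization), and so are $\Phi'\le\sqrt e$ and $|\Phi''(t)|=|t|\Phi'(t)\le 1$.
\end{itemize}

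For (1), evaluating at $0$ gives $\bar F_T(0)=-\Psi(1)\Phi(0)<0$, because every summand with $i\ge2$ contains $\Psi(0)=0$. Each term of the sum is bounded below by $-\Psi(x_{i-1})\Phi(x_i)>-e\sqrt{2\pi e}$, and the first term is bounded below by $-\sqrt{2\pi e}$. Hence $\inf\bar F_T\ge -T\,e\sqrt{2\pi e}\approx -11.2\,T$, which gives $\bar F_T(0)-\inf\bar F_T\le 12T$.

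For (3), I would write out the partial derivative in coordinate $i$:
\[
\partial_i\bar F_T=-\Psi(-x_{i-1})\Phi'(-x_i)-\Psi(x_{i-1})\Phi'(x_i)-\Psi'(-x_i)\Phi(-x_{i+1})-\Psi'(x_i)\Phi(x_{i+1}),
\]
with the conventions $\Psi(\pm x_0)\to\Psi(1)$, $\Psi(-x_0)\to 0$, and the last two terms absent for $i=T$. Since $\Psi(t)\Psi(-t)=0$ and $\Psi'(t)\Psi'(-t)=0$, at most one term of each pair is nonzero. This gives $|\partial_i\bar F_T|\le \sup\Psi\sup\Phi'+\sup\Psi'\sup\Phi$, which I would evaluate numerically to get below $23$.

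For (2), the Hessian is tridiagonal. Its diagonal entries involve $\Psi\Phi''$ and $\Psi''\Phi$, and its off-diagonal entries involve $\Psi'\Phi'$. I would bound the operator norm by the maximal absolute row sum using the same sup bounds. This is the step I expect to be the main obstacle, purely because obtaining the specific constant $152$ requires careful numerical bounds on $\sup|\Psi'|$ and $\sup|\Psi''|$.

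For (4), let $k=\textnormal{prog}_{1/2}(x)$, so that $|x_j|\le 1/2$ for all $j>k$. Take $i>k+1$. Then $x_{i-1}$ and $x_i$ both lie in $[-1/2,1/2]$, so $\Psi(\pm x_{i-1})=0$ and $\Psi'(\pm x_i)=0$, and the formula above gives $\partial_i\bar F_T(x)=0$. Hence $\textnormal{prog}_0(\nabla\bar F_T(x))\le k+1$.

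For (5), set $i=\textnormal{prog}_1(x)+1\le T$. Then $|x_i|<1$, and either $i=1$ (where the dummy factor is $\Psi(1)=1$) or $|x_{i-1}|>1$. All four terms in $\partial_i\bar F_T$ are nonpositive, so their absolute values add. One of $\Psi(\pm x_{i-1})$ equals $\Psi(|x_{i-1}|)\ge\Psi(1)=1$, because $\Psi$ is increasing. Therefore
\[
|\partial_i\bar F_T(x)|\ge \Psi(|x_{i-1}|)\,\Phi'(\pm x_i)>1\cdot 1 .
\]
The bound $\|\nabla\bar F_T(x)\|\ge|\partial_i\bar F_T(x)|$ is immediate.
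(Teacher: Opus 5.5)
Your overall route is the standard one. The paper gives no proof of its own and simply cites Arjevani et al., who take (1)--(3) from the explicit sup-bound computations of Carmon et al.\ and check (4)--(5) directly from the formula for $\partial_i\bar F_T$, as you do. Your arguments for (1)--(4) are sound, and the numerics you defer do close:
\begin{itemize}
\item $\sup\Psi'=\sqrt{54/e}$, so the bound in (3) is $e^{3/2}+\sqrt{108\pi}\approx 22.9<23$.
\item $\sup|\Psi''|\approx 32.4$, so the row-sum bound in (2) is $e+32.4\sqrt{2\pi e}+2\sqrt{54}\approx 151.3\le 152$.
\end{itemize}

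The gap is in (5). Because $\textnormal{prog}_1$ is defined with the strict inequality $|x_j|>1$, setting $i=\textnormal{prog}_1(x)+1$ only gives $|x_i|\le 1$, not $|x_i|<1$. You therefore only have $\Phi'(\pm x_i)\ge 1$, and your strict inequality, which came from $\Phi'>1$, is lost.

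For $i\ge 2$ the repair is easy: move the strictness to the other factor. Since $|x_{i-1}|>1$ strictly and $\Psi$ is strictly increasing on $(1/2,\infty)$, you get $\Psi(|x_{i-1}|)>\Psi(1)=1$, so the product exceeds $1$.

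For $i=1$ the dummy factor is exactly $\Psi(1)=1$. So when $|x_1|=1$ you must use the extra term $\Psi'(|x_1|)\Phi(\pm x_2)=4\Phi(\pm x_2)>0$, which exists only when $T\ge 2$. For $T=1$ the claim is in fact false: $\bar F_1(x)=-\Phi(x_1)$, and at $x_1=\pm 1$ one has $\textnormal{prog}_1(x)=0<1$ but $|\nabla\bar F_1(x)|=\Phi'(1)=1$.

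So your step as written ``proves'' a false statement in this edge case. With the repair it establishes (5) for all $T\ge 2$, which is all the paper needs, since it explicitly enforces $T\ge 2$.
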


To simulate the difficulty of stochastic optimization, the deterministic chain is evaluated using a stochastic oracle that masks the true gradient.

A simple extension to Lemma~\ref{lem:deterministic-construction} utilizing the structure of $\bar{F}_T(x)$ in \eqref{eq:f_unscaled} to provide a total variation bound is as follows.

\begin{remark}[Total Variation Bound]
    \label{rem:total_variation}
    Let $\bar{F}_T$ be defined as in \eqref{eq:f_unscaled}, then for all $x \in \R^T$, we have $|\bar{F}_T(x)| \le e\sqrt{2\pi e}\,T < 12T$. Consequently, $\sup_{x}\bar{F}_T(x) \le 12T$ and $\inf_{x}\bar{F}_T(x) \ge -12T$, and hence $\sup_{x}\bar{F}_T(x)-\inf_{x}\bar{F}_T(x)\le 24T = 2\Delta_0\cdot T.$
\end{remark}

\begin{proof}[Proof of Remark~\ref{rem:total_variation}]
    Recall the definition of $\Psi(t)$ and $\Phi(t)$ from \eqref{eq:psi_phi}. It then immediately follows that $0 \le \Psi(t) \le e$ and $0 \le \Phi(t) \le \sqrt{e}\int_{-\infty}^{\infty} e^{-\eta^2/2}\,d\eta = \sqrt{2\pi e}$. Let $M:=\sqrt{2\pi e}$. Now, we focus on $\bar{F}_T(x)$. We bound the first term first.

    Since $\Psi(1) = 1$, we have 
    $$\left|-\Psi(1)\Phi(x_1)\right| = |\Phi(x_1)| \le M.$$
    Next, fix any $i \in \{2,\dots,T\}$ and consider the summand $\Psi(-x_{i-1})\Phi(-x_i)-\Psi(x_{i-1})\Phi(x_i)$. Because $\Psi(t)$ is nonzero only when $t>\frac{1}{2}$, at most one of $\Psi(a)$ and $\Psi(-a)$ can be nonzero for any $a \in \mathbb{R}$. Indeed, if both were nonzero, then one would simultaneously have $a>\frac{1}{2}$ and $-a>\frac{1}{2}$, which is impossible. Therefore, for each $i \ge 2$, at most one of the two terms in the summand is active. It follows that
    $$\left| \Psi(-x_{i-1})\Phi(-x_i)-\Psi(x_{i-1})\Phi(x_i) \right| \le eM.$$

    Summing this all, we get $|\bar{F}_T(x)| \leq M + (T-1)eM \leq TeM \leq 12T.$ The rest just follows as presented in the remark above.
\end{proof}

\begin{lemma}[Lemma 3 in \cite{arjevani2023lower}] \label{lem:pzc-basic}
The stochastic gradient estimator $\bar{g}_T$ is a probability-$p$ zero-chain, is unbiased for $\nabla \bar{F}_T$, and has variance $\mathbb{E} [\|\bar{g}_T(x,z) - \nabla \bar{F}_T(x)\|^2] \le \varsigma^2 \cdot \frac{1-p}{p}$ for all $x\in\mathbb{R}^T$, where $\varsigma=23$.
\end{lemma}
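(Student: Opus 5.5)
Throughout, I take $\bar{g}_T$ to be the estimator of \cite{arjevani2023lower}: for $z\sim\mathrm{Ber}(p)$,
\[
[\bar g_T(x,z)]_i=\nabla_i\bar F_T(x)\Bigl(1+\mathbb{1}\{i>\textnormal{prog}_{1/4}(x)\}\bigl(\tfrac{z}{p}-1\bigr)\Bigr).
\]
In words, every gradient coordinate beyond the current $\tfrac14$-progress is either revealed (scaled by $1/p$) with probability $p$, or masked to zero with probability $1-p$. The plan verifies three properties of this estimator in turn: the probability-$p$ zero-chain property, unbiasedness, and the variance bound. All three reduce to one structural fact about which coordinates of $\nabla\bar F_T$ can be nonzero.

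\textbf{Step 1 (structural fact).} I will show that $\nabla_i\bar F_T(x)=0$ whenever $i>\textnormal{prog}_{1/4}(x)+1$. Differentiating \eqref{eq:f_unscaled} coordinatewise gives
\[
\nabla_i\bar F_T(x)=-\Psi(-x_{i-1})\Phi'(-x_i)-\Psi(x_{i-1})\Phi'(x_i)-\Psi'(-x_i)\Phi(-x_{i+1})-\Psi'(x_i)\Phi(x_{i+1}),
\]
with the appropriate boundary conventions at $i=1$ and $i=T$. Since $\Psi$ and $\Psi'$ vanish on $(-\infty,\tfrac12]$, each term vanishes once $|x_{i-1}|\le\tfrac12$ and $|x_i|\le\tfrac12$. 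If $i>\textnormal{prog}_{1/4}(x)+1$, then both $i-1$ and $i$ exceed $\textnormal{prog}_{1/4}(x)$, so $|x_{i-1}|,|x_i|\le\tfrac14$ and the claim follows. This is a sharpening of property (4) of Lemma~\ref{lem:deterministic-construction}. Consequently, the only coordinate that the mask can affect nontrivially is $j:=\textnormal{prog}_{1/4}(x)+1$.

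\textbf{Step 2 (zero-chain property and unbiasedness).} By Step 1, $\textnormal{prog}_0(\bar g_T(x,z))\le\textnormal{prog}_{1/4}(x)+1$ holds for every $z$. When $z=0$, which happens with probability $1-p$, coordinate $j$ is set to zero, so $\textnormal{prog}_0(\bar g_T(x,z))\le\textnormal{prog}_{1/4}(x)$. This is exactly the probability-$p$ zero-chain definition. Unbiasedness is immediate from $\mathbb{E}[z/p]=1$: each coordinate's multiplier has expectation $1$.

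\textbf{Step 3 (variance).} The coordinates $i\le\textnormal{prog}_{1/4}(x)$ are deterministic, and those with $i>j$ vanish by Step 1. Hence
\[
\mathbb{E}\|\bar g_T-\nabla\bar F_T\|^2=|\nabla_j\bar F_T(x)|^2\,\mathbb{E}\bigl(\tfrac zp-1\bigr)^2=|\nabla_j\bar F_T(x)|^2\,\tfrac{1-p}{p}.
\]
Property (3) of Lemma~\ref{lem:deterministic-construction} gives $|\nabla_j\bar F_T(x)|\le\gamma_\infty=23$, which yields the bound with $\varsigma=23$.

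\textbf{Main obstacle.} The only delicate point is Step 1. One has to check the boundary cases $i=1$ and $i=T$, and confirm that the threshold $\tfrac14$ in the mask is compatible with the $\tfrac12$ support threshold of $\Psi$. Since $\tfrac14<\tfrac12$ this comparison is direct, and the remaining steps are routine.
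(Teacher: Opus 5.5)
Your proposal is correct and follows the same argument as the proof of Lemma~3 in \cite{arjevani2023lower}, which the paper cites without reproducing. That argument gets unbiasedness from $\mathbb{E}[z/p]=1$, observes that $\bar g_T(x,z)$ and $\nabla\bar F_T(x)$ can differ only in coordinate $\textnormal{prog}_{1/4}(x)+1$, and bounds the variance by $|\nabla_j\bar F_T(x)|^2\,\tfrac{1-p}{p}\le 23^2\,\tfrac{1-p}{p}$. One small correction: your Step~1 is not a sharpening of property~(4) of Lemma~\ref{lem:deterministic-construction} but a direct consequence of it, since $\textnormal{prog}_{1/2}(x)\le\textnormal{prog}_{1/4}(x)$, so the coordinatewise differentiation can be replaced by that one-line citation.
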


To establish limits under the MSS condition, we rely on an advanced zero-chain estimator that reduces the variance of gradient differences.

\begin{lemma}[Lemma 4 in \cite{arjevani2023lower}] \label{lem:pzc-mss}
There exists a stochastic gradient estimator $\bar{g}_T(x, z)$ for $\bar{F}_T(x)$ that is a probability-$p$ zero-chain, is unbiased for $\nabla \bar{F}_T$, and satisfies the variance bound $\mathbb{E} [\|\bar{g}_T(x,z) - \nabla \bar{F}_T(x)\|^2] \le \varsigma^2 \frac{1-p}{p}$ with $\varsigma=23$, and the mean-square smoothness property $\mathbb{E} [\|\bar{g}_T(x,z) - \bar{g}_T(y,z)\|^2] \le \frac{\bar{\ell}_1^2}{p} \|x - y\|^2$ for all $x, y \in \mathbb{R}^T$, where $\bar{\ell}_1 = 328$.
\end{lemma}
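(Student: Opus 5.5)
Since this is Lemma 4 of \cite{arjevani2023lower}, the plan is to reproduce its construction. The estimator in Lemma \ref{lem:pzc-basic} reveals coordinate $\text{prog}_{1/4}(x)+1$ only with probability $p$ through an indicator. That indicator is discontinuous in $x$, so it cannot satisfy Assumption \ref{assump:ms-smooth}. I would replace it by a smooth ``soft progress'' mask. Fix a smooth nondecreasing cutoff $\Gamma:\mathbb{R}\to[0,1]$ with $\Gamma(t)=0$ for $t\le 1/4$, $\Gamma(t)=1$ for $t\ge 1/2$, and bounded derivative. Define
\begin{equation*}
\Theta_i(x) := \Gamma\Bigl(1-\Bigl(\textstyle\sum_{k\ge i}\Gamma(|x_k|)^2\Bigr)^{1/2}\Bigr),
\qquad
[\bar g_T(x,z)]_i := \nabla_i\bar F_T(x)\Bigl(1+\Theta_i(x)\bigl(\tfrac{z}{p}-1\bigr)\Bigr),
\end{equation*}
with $z\sim\mathrm{Ber}(p)$. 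The intent is that $\Theta_i(x)=1$ whenever $i>\text{prog}_{1/2}(x)$ and $\Theta_i(x)=0$ whenever $i\le\text{prog}_{1/4}(x)$. Hence the mask is an indicator away from the transition band and smooth across it.

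\textbf{Zero-chain, unbiasedness and variance.} I would verify these in order.
\begin{itemize}
\item \emph{Unbiasedness} is immediate from $\mathbb{E}[z/p]=1$.
\item \emph{Zero-chain property.} By Lemma \ref{lem:deterministic-construction}(4), $\nabla_i\bar F_T(x)=0$ for $i>\text{prog}_{1/2}(x)+1$. At $i=\text{prog}_{1/2}(x)+1$ we have $\Theta_i=1$, so the coordinate equals $\nabla_i\bar F_T(x)\,z/p$, which vanishes when $z=0$. Thus the estimator never reveals progress beyond $\text{prog}_{1/2}(x)$ with probability $1-p$.
\item \emph{Variance.} It equals $\tfrac{1-p}{p}\sum_i \Theta_i(x)^2(\nabla_i\bar F_T(x))^2$. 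I would show that the nonzero terms are confined to indices in $(\text{prog}_{1/4}(x),\text{prog}_{1/2}(x)+1]$. I would then use the chain structure of $\bar F_T$: for such $i$, $|x_{i-1}|<1/2$ forces $\Psi(\pm x_{i-1})=0$ whenever $i-1>\text{prog}_{1/2}$. This leaves essentially a single active coordinate, whose size is bounded by $\gamma_\infty=23=\varsigma$.
\end{itemize}

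\textbf{Mean-square smoothness.} I would write
\begin{equation*}
\bar g_T(x,z)-\bar g_T(y,z)=\bigl(\nabla\bar F_T(x)-\nabla\bar F_T(y)\bigr)+\bigl(\tfrac zp-1\bigr)\bigl(\Theta(x)\odot\nabla\bar F_T(x)-\Theta(y)\odot\nabla\bar F_T(y)\bigr).
\end{equation*}
Using $\mathbb{E}(z/p-1)^2=(1-p)/p$, the squared norm is bounded by $2\ell_1^2\|x-y\|^2+\tfrac{2(1-p)}{p}\|\Theta(x)\odot\nabla\bar F_T(x)-\Theta(y)\odot\nabla\bar F_T(y)\|^2$. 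I would split the product difference as $\Theta(x)\odot(\nabla\bar F_T(x)-\nabla\bar F_T(y))+(\Theta(x)-\Theta(y))\odot\nabla\bar F_T(y)$. The first piece is at most $\ell_1\|x-y\|$ since $0\le\Theta_i\le 1$. The second is at most $\gamma_\infty\|\Theta(x)-\Theta(y)\|$, because the sup-norm bound in Lemma \ref{lem:deterministic-construction}(3) controls each coordinate of $\nabla\bar F_T$.

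\textbf{Main obstacle.} The key and hardest step is a dimension-free Lipschitz bound $\|\Theta(x)-\Theta(y)\|\le c\|x-y\|$ for the vector map $\Theta$. A naive coordinate-wise estimate sums over $T$ tail sums and picks up a factor of $T$. To avoid this, I would exploit that $\Theta_i$ depends on $x$ only through the tail norm $r_i(x)=\|(\Gamma(|x_k|))_{k\ge i}\|$. The map $x\mapsto r_i$ is $\|\Gamma'\|_\infty$-Lipschitz, and $\Theta_i$ changes only on the band where $r_i\in[1/2,3/4]$. I would then argue that only $O(1)$ indices $i$ can lie in this band for both $x$ and $y$ simultaneously, or else bound $\sum_i(\Theta_i(x)-\Theta_i(y))^2$ by a telescoping argument over the monotone sequence $r_i$. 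Collecting the constants ($\ell_1=152$, $\gamma_\infty=23$, $\|\Gamma'\|_\infty$) and using $p\le 1$ should yield $\mathbb{E}\|\bar g_T(x,z)-\bar g_T(y,z)\|^2\le\bar\ell_1^2\|x-y\|^2/p$ with $\bar\ell_1=328$.
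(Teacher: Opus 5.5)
The paper gives no proof of this statement; it imports Lemma 4 of \cite{arjevani2023lower}, and the estimator you write down is the one used there. Your argument, however, has a genuine gap at the step you flag as the main obstacle. The dimension-free bound $\|\Theta(x)-\Theta(y)\|\le c\|x-y\|$ for the full mask vector is false, so neither the ``$O(1)$ indices in the band'' route nor a telescoping argument can give it.

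Here is a counterexample. Take $x=t_0\mathbf{1}\in\mathbb{R}^T$ with $\Gamma(t_0)=\delta$ small, and choose $T\approx 9/(16\delta^2)$ so that $r_1(x)$ sits just below $3/4$. Then $r_i(x)=\delta\sqrt{T-i+1}$ sweeps the band $(1/2,3/4)$ over order $\delta^{-2}$ consecutive indices. Let $y=(t_0+h)\mathbf{1}$ with $\Gamma(t_0+h)=(1+\eta)\delta$. Then $h\approx\eta\delta/\Gamma'(t_0)$, and every $r_i$ is multiplied by $1+\eta$.

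On a constant fraction of the band indices, $|\Theta_i(x)-\Theta_i(y)|\gtrsim\eta$. Hence $\|\Theta(x)-\Theta(y)\|^2\gtrsim\eta^2\delta^{-2}$, while $\|x-y\|^2=Th^2\approx\eta^2/\Gamma'(t_0)^2$. The ratio is of order $(\Gamma'(t_0)/\Gamma(t_0))^2$. This is unbounded as $\Gamma(t_0)\downarrow 0$ for every $C^1$ cutoff vanishing on $(-\infty,1/4]$, since $\Gamma'\le C\Gamma$ near the point where $\Gamma$ leaves zero would force $\Gamma\equiv 0$ by Gr\"onwall.

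At these same points all $|y_k|<1/2$, so $\nabla\bar F_T(y)$ is supported on the first coordinate only. The quantity you actually need is therefore $O(\|x-y\|)$. The loss comes entirely from the estimate $\|(\Theta(x)-\Theta(y))\odot\nabla\bar F_T(y)\|\le\gamma_\infty\|\Theta(x)-\Theta(y)\|$. That estimate throws away the fact that mask and gradient are simultaneously active at only one index.

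The missing idea is to keep mask and gradient together in $h(x):=\Theta(x)\odot\nabla\bar F_T(x)$. The terms of $\nabla_i\bar F_T$ containing $\Psi'(\pm x_i)$ need $|x_i|>1/2$, where $r_i\ge 1$ and so $\Theta_i\equiv 0$. Hence $h_i(x)=-\Theta_i(x)\bigl[\Psi(-x_{i-1})\Phi'(-x_i)+\Psi(x_{i-1})\Phi'(x_i)\bigr]$. Since $\Psi,\Psi',\Gamma,\Gamma'$ vanish at the edges of their supports, both $h_i$ and $\nabla h_i$ vanish unless $|x_{i-1}|>1/2$ and $r_i(x)<3/4$, that is, unless $i=\text{prog}_{1/2}(x)+1$.

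So at every $x$ the Jacobian of $h$ has at most one nonzero row. Its norm is bounded by absolute constants via the per-index estimate $\|\nabla\Theta_i\|\le\|\Gamma'\|_\infty^2$, which is true, as you observed for $r_i$. Integrating along the segment gives $\|h(x)-h(y)\|\le L_h\|x-y\|$. Then the identity $\mathbb{E}\|\bar g_T(x,z)-\bar g_T(y,z)\|^2=\|\nabla\bar F_T(x)-\nabla\bar F_T(y)\|^2+\frac{1-p}{p}\|h(x)-h(y)\|^2$ (the cross term has mean zero) yields the bound $\max\{\ell_1^2,L_h^2\}\|x-y\|^2/p$.

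Separately, you have the mask's properties backwards. In fact $\Theta_i(x)=1$ for $i>\text{prog}_{1/4}(x)$ and $\Theta_i(x)=0$ for $i\le\text{prog}_{1/2}(x)$; your pair contradicts itself whenever $\text{prog}_{1/4}(x)>\text{prog}_{1/2}(x)$. Consequently your zero-chain step ``$\Theta_i=1$ at $i=\text{prog}_{1/2}(x)+1$'' fails. If $|x_1|>1/2$, $|x_2|<1/2$ with $\Gamma(|x_2|)\ge 3/4$, and $x_k=0$ for $k\ge 3$, then $\Theta_2(x)=0$ and coordinate $2=\text{prog}_{1/2}(x)+1$ is revealed with probability one.

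The zero-chain property holds only relative to $\text{prog}_{1/4}$, as in the definition of \cite{arjevani2023lower}. For $z=0$ every coordinate beyond $\text{prog}_{1/4}(x)$ vanishes, and always $\text{prog}_0(\bar g_T(x,z))\le\text{prog}_{1/2}(x)+1\le\text{prog}_{1/4}(x)+1$. With the corrected properties, the variance sum is supported on the single index $\text{prog}_{1/2}(x)+1$, so your variance bound $\varsigma^2(1-p)/p$ still holds.
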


Finally, we state a result which is an extension of classical mean estimation \cite{lecam1973convergence,yu1997assouad} that we need for the MS lower bound.
\begin{lemma}[Lemma 10 in \cite{arjevani2023lower}] \label{lem:mean_estimation}
There exists a number $c_0>0$ such that for any dimension $T$ and $\epsilon \leq \sqrt{\bar{L} \Delta}$, extracting an $\epsilon$-stationary mean vector from an oracle with variance $\sigma^2$ requires
$\geq c_0 \sigma^2 \epsilon^{-2}$ for some function with initial suboptimality $\Delta$ and smoothness constant $\bar{L}$.
\end{lemma}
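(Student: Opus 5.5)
We prove the lemma with a two-point (Le Cam) argument. The idea is to build two functions that differ only in where their $\epsilon$-stationary points lie, and to let the oracle reveal which one is active only through additive Gaussian noise. The dimension $T$ plays no role, so we work in the first coordinate and let all other coordinates be flat.

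\textbf{Hard pair.} For $s\in\{-1,+1\}$ and a shift $a>0$ to be fixed, define
\[
f_s(x)=\frac{\bar{L}}{2}(x_1-s a)^2 .
\]
Each $f_s$ is $\bar{L}$-smooth, and $f_s(0)-\inf f_s=\bar{L}a^2/2$. We take $a=c_1\epsilon/\bar{L}$ for a numerical constant $c_1$ (say $c_1=4$). Then the suboptimality $c_1^2\epsilon^2/(2\bar{L})$ is at most $\Delta$ once $\epsilon\le\sqrt{\bar{L}\Delta}$, up to adjusting constants.

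The two hypotheses are far apart in gradient. Since $\nabla f_s(x)=\bar{L}(x_1-sa)e_1$, we have for every $x$
\[
\|\nabla f_{+1}(x)\|+\|\nabla f_{-1}(x)\|\ge 2\bar{L}a=2c_1\epsilon .
\]
Consequently, any point $x$ is $2\epsilon$-stationary for at most one of the two functions.

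\textbf{Oracle.} The oracle returns $\nabla f_s(x)+\xi e_1$ with $\xi\sim\mathcal{N}(0,\sigma^2)$ drawn independently at each query. This estimator is unbiased with variance $\sigma^2$, and it satisfies mean-square smoothness with constant $\bar{L}$ because the noise cancels in differences. At every query point the two hypotheses induce observation laws with the same variance and means differing by $2\bar{L}a$. The per-query KL divergence is therefore
\[
\frac{(2\bar{L}a)^2}{2\sigma^2}=\frac{2c_1^2\epsilon^2}{\sigma^2},
\]
uniformly in $x$.

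\textbf{Accumulating information.} The algorithm may choose queries adaptively and may randomize internally. We handle both by the chain rule for KL divergence, conditioning on the past and on the algorithm's internal randomness. After $n$ queries, the KL divergence between the full transcripts under $s=+1$ and $s=-1$ is at most $2c_1^2 n\epsilon^2/\sigma^2$. Pinsker's inequality then shows that if $n\le c_0\sigma^2\epsilon^{-2}$ with $c_0$ small, the total variation between the two transcript laws, and hence between the laws of the output $\hat x$, is at most $1/4$.

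\textbf{Converting to the expectation criterion.} Draw $s$ uniformly at random. Combining the separation above with the TV bound gives
\[
\tfrac12\Bigl(\mathbb{E}_{+1}\|\nabla f_{+1}(\hat x)\|+\mathbb{E}_{-1}\|\nabla f_{-1}(\hat x)\|\Bigr)\ \ge\ \tfrac12\cdot 2c_1\epsilon\cdot\bigl(1-\mathrm{TV}\bigr)\ \ge\ \tfrac{3c_1}{4}\epsilon .
\]
This exceeds $\epsilon$ for $c_1\ge 2$. So for at least one $s$, the algorithm's output satisfies $\mathbb{E}\|\nabla f_s(\hat x)\|>\epsilon$. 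Hence at least $c_0\sigma^2\epsilon^{-2}$ queries are needed.

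\textbf{Main obstacle.} The delicate step is the last one: turning an indistinguishability (probability) statement into the expectation criterion $\mathbb{E}\|\nabla F(x)\|\le\epsilon$, while keeping $a$ small enough that the suboptimality stays within $\Delta$. This is exactly where the constraint $\epsilon\lesssim\sqrt{\bar{L}\Delta}$ enters. The workable order is to choose $c_1$ first, from the gap needed in the averaging inequality, and then read off the admissible range of $\epsilon$.
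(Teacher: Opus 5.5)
Your Le Cam two-point argument with a gradient-only, additive-Gaussian-noise oracle is correct. It follows the same classical mean-estimation route the paper relies on; the paper gives no proof of its own and defers to Lemma~10 of Arjevani et al.\ and to Le Cam/Yu.

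One small tightening recovers exactly the stated range rather than a constant fraction of it. Take $c_1=\sqrt{2}$: then $c_1\bigl(1-\tfrac14\bigr)>1$ still holds, $f_s(0)-\inf f_s=\epsilon^2/\bar{L}\le\Delta$ precisely when $\epsilon\le\sqrt{\bar{L}\Delta}$, and Pinsker gives $c_0=1/32$.
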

\subsection{Proof of Theorem \ref{thm:bg0_lower_bound}}\label{sec:bg0_lower_bound}
\begin{proof}[Proof of Theorem \ref{thm:bg0_lower_bound}]
Let $D = \frac{\Delta}{4\epsilon}$. We define a coordinate function $f_0 \colon \mathbb{R} \to \mathbb{R}$ satisfying 
\begin{equation}\label{eq:travel}
f_0(0) = 0,\qquad f_0'(u) = 
\begin{cases} 
-2\epsilon, & u \le D \\ 
-2\epsilon + \frac{L}{2}(u - D), & u \in (D, D + \frac{4\epsilon}{L}) \\ 
0, & u \ge D + \frac{4\epsilon}{L}. 
\end{cases}
\end{equation}
Integrating this derivative establishes $\inf_u f_0(u) \ge -\frac{\Delta}{2} - \frac{4\epsilon^2}{L}$. We define a twice-differentiable activation function $\phi \colon \mathbb{R} \to [0, 1]$ such that 
\begin{equation}\label{eq:activation}
  \phi(u) =  
\begin{cases}  
0& u \le D/2 \\ 
1& u \le D \\
10t^3 - 15t^4 + 6t^5& D/2 \le u \le D,
\end{cases}
\qquad t = \frac{8\epsilon u}{\Delta} - 1.
\end{equation}
Evaluating the global extrema over this polynomial yields the derivative bounds $\|\phi'\|_{\infty} = \frac{15\epsilon}{\Delta}$ and $\|\phi''\|_{\infty} = \frac{640\epsilon^2}{\sqrt{3}\Delta^2}$.

Following Lemma \ref{lem:deterministic-construction}, we construct a scaled and globally shifted high-dimensional zero-chain $F_{\text{scaled}} \colon \mathbb{R}^T \to \mathbb{R}$
\begin{equation}\label{eq:f_scaled}
F_{\text{scaled}}(y) = \frac{L \lambda^2}{2 \ell_1} \left( \bar{F}_T\left(\frac{y}{\lambda}\right) - \sup_{x \in \mathbb{R}^T} \bar{F}_T(x) \right), \quad \lambda = \frac{4 \ell_1 \epsilon}{L}, \quad T = \left\lfloor \frac{L \Delta}{768 \ell_1 \epsilon^2} \right\rfloor
\end{equation}
where $\ell_1 = 152$. By Remark~\ref{rem:total_variation}, the total variation of the base zero-chain is bounded by $2\Delta_0 \cdot T$. Applying the restriction on $T$ ensures $-\frac{\Delta}{4} \le F_{\text{scaled}}(y) \le 0$ globally. The unscaled gradient satisfies $\|\nabla \bar{F}_T(x)\|_{\infty} > 1$ for $\text{prog}_1(x) < T$. The scaled gradient expands to $\nabla F_{\text{scaled}}(y) = 2\epsilon \nabla \bar{F}_T(y/\lambda)$, thereby preserving $\|\nabla F_{\text{scaled}}(y)\|_{\infty} > 2\epsilon$ for $\text{prog}_1(y/\lambda) < T$.

We define a composite objective function $F \colon \mathbb{R} \times \mathbb{R}^T \to \mathbb{R}$ that couples the coordinate function with the zero-chain using the activation function according to
\begin{equation}
F(u, y) = f_0(u) + \phi(u) F_{\text{scaled}}(y).
\end{equation}
The initialization is set to $x_0 = (0, 0)$. It is then immediate that $\inf_{u,y} F(u,y) \ge -\frac{3\Delta}{4} - \frac{4\epsilon^2}{L}$. Imposing  $\epsilon \le \sqrt{\frac{L\Delta}{1536\ell_1}}$ ensures $\frac{4\epsilon^2}{L} \le \frac{\Delta}{384\ell_1} \le \frac{\Delta}{4}$. Thus, $F(x_0) - \inf_{u,y} F(u,y) \le \Delta$.

The composite Hessian matrix $\nabla^2 F(u, y)$ decomposes into a block-diagonal operator $H_{\text{diag}}$ and an off-diagonal operator $H_{\text{off}}$. The spectral norm of $H_{\text{diag}}$ is bounded by $\frac{L}{2} + \|\phi''\|_{\infty} |F_{\text{scaled}}(y)|$. Substituting the polynomial and chain bounds yields $\|H_{\text{diag}}\|_2 \le \frac{L}{2} + \frac{160\epsilon^2}{\sqrt{3}\Delta}$. The spectral norm of the symmetric off-diagonal block $H_{\text{off}}$ equates to the Euclidean norm of $\phi'(u) \nabla F_{\text{scaled}}(y)$. By Lemma \ref{lem:deterministic-construction}, $\|\nabla \bar{F}_T(x)\|_{\infty} \le 23$, and consequently using a norm bound $\|\nabla \bar{F}_T(x)\|_2 \le 23\sqrt{T}$. Consequently, $\|\nabla F_{\text{scaled}}(y)\|_2 \le 46\epsilon\sqrt{T} \le \frac{46}{\sqrt{768\ell_1}}\sqrt{L\Delta}$. Thus, $\|H_{\text{off}}\|_2 \le \frac{690\epsilon}{\sqrt{768\ell_1\Delta}}\sqrt{L}$. By Weyl's inequality
\begin{equation}
\|\nabla^2 F(u, y)\|_2 \le \frac{L}{2} + \frac{160\epsilon^2}{\sqrt{3}\Delta} + \frac{690\epsilon}{\sqrt{768\ell_1\Delta}}\sqrt{L}.
\end{equation}
Since we impose $\epsilon \le \sqrt{\frac{L\Delta}{1536\ell_1}}$ we have $\|\nabla^2 F(u, y)\|_2 \le L$, certifying global $L$-smoothness.

The expanded composite gradient vector evaluates to
\begin{equation}
\nabla F(u, y) = \begin{pmatrix} f_0'(u) + \phi'(u) F_{\text{scaled}}(y) \\ \phi(u) \nabla F_{\text{scaled}}(y) \end{pmatrix}.
\end{equation}
For any $u \le D$, $f_0'(u) = -2\epsilon$. The strict non-negativity of $\phi'(u)$ combined with the globally non-positive zero-chain $F_{\text{scaled}}(y) \le 0$ enforces $\phi'(u) F_{\text{scaled}}(y) \le 0$. The first gradient coordinate satisfies $f_0'(u) + \phi'(u) F_{\text{scaled}}(y) \le -2\epsilon$, implying $\|\nabla F(u, y)\| \ge 2\epsilon > \epsilon$. Identifying any $\epsilon$-stationary point then requires progressing to $u > D/2$. At these locations
\begin{equation}
\|x - x_0\|^2 \ge u^2 > \frac{\Delta^2}{64\epsilon^2}.
\end{equation}
At these locations, under Assumption \ref{assump:oracle}, the BG-0 stochastic oracle variance evaluates to
\begin{equation}
\tilde{\sigma}^2 = B_v^2 \frac{\Delta^2}{64\epsilon^2} + b_v^2.
\end{equation}

We deploy a composite stochastic gradient estimator $G(u, y, z)$ incorporating the zero-chain estimator $\bar{g}_T$ from Lemma \ref{lem:pzc-basic}
\begin{equation}
G(u, y, z) = \begin{pmatrix} f_0'(u) + \phi'(u) F_{\text{scaled}}(y) \\ \phi(u) 2\epsilon \bar{g}_T(y/\lambda, z) \end{pmatrix}.
\end{equation}
The variance of this estimator satisfies $\mathbb{E} [\|G(u, y, z) - \nabla F(u, y)\|^2] \le \frac{4 \varsigma^2 \epsilon^2 (1-p)}{p}$. Setting this upper bound equal to $\tilde{\sigma}^2$ yields
\begin{equation}\label{eq:def_p}
p = \frac{256 \varsigma^2 \epsilon^4}{B_v^2 \Delta^2 + 64 b_v^2 \epsilon^2 + 256 \varsigma^2 \epsilon^4}.
\end{equation}

By Lemma \ref{lem:prob-zero-chain}, determining a coordinate satisfying $\text{prog}_1(y/\lambda) \ge T$ demands an expected query complexity bounded strictly below by $\frac{T-1}{2p}$. The constraint $\epsilon \le \sqrt{\frac{L\Delta}{1536\ell_1}}$ along with the definition of $T$ ensures $T \ge 2$, such that  $\frac{T-1}{2p} \ge \frac{T}{4p}$. Substituting the definition of $p$ yields
\begin{equation}
\frac{T}{4p} = \frac{T}{4} \left( 1 + \frac{B_v^2 \Delta^2 + 64 b_v^2 \epsilon^2}{256 \varsigma^2 \epsilon^4} \right) \ge \frac{T(B_v^2 \Delta^2 + 64 b_v^2 \epsilon^2)}{1024 \varsigma^2 \epsilon^4}.
\end{equation}
Applying the dimensional bound $T \ge \frac{L \Delta}{1536 \ell_1 \epsilon^2}$ leads to the final expected query complexity
\begin{equation}
\frac{T}{4p} \ge \frac{B_v^2 L \Delta^3}{1572864 \ell_1 \varsigma^2 \epsilon^6} + \frac{b_v^2 L \Delta}{24576 \ell_1 \varsigma^2 \epsilon^4}.
\end{equation}

To apply this bound for all randomized algorithms, following \cite{arjevani2023lower}, we embed the construction within a randomized orthogonal transformation $V = \text{diag}(1, U)$. Operating on the rotated objective $F(V^\top x)$ preserves Euclidean geometry entirely, directly extending the zero-respecting sequence bound, thereby furnishing the proof.
\end{proof}
\subsection{Proof of Theorem \ref{thm:bg0_mss_lower_bound}}\label{sec:bg0_mss_lower_bound}
\begin{proof}[Proof of Theorem \ref{thm:bg0_mss_lower_bound}]
The proof is almost identical to that of Theorem \ref{thm:bg0_lower_bound}, but we have to use Lemma \ref{lem:pzc-mss} and a suitable reparameterization of the deterministic smooth constant as done in \cite{arjevani2023lower}. Additionally, we note that the restriction on $B_v$ is essential to ensure $T \ge 2$ holds, something we need in our construction. We rewrite the near-identical calculation here for the ease of the reader.

Let $D = \frac{\Delta}{4\epsilon}$. We define a coordinate function $f_0 \colon \mathbb{R} \to \mathbb{R}$ satisfying $f_0(0) = 0$ with the derivative
\begin{equation}
f_0'(u) = 
\begin{cases} 
-2\epsilon, & u \le D \\ 
-2\epsilon + \frac{L}{2}(u - D), & u \in (D, D + \frac{4\epsilon}{L}) \\ 
0, & u \ge D + \frac{4\epsilon}{L}. 
\end{cases}
\end{equation}
Integrating this derivative establishes $\inf_u f_0(u) \ge -\frac{\Delta}{2} - \frac{4\epsilon^2}{L}$. We define a twice-differentiable activation function $\phi \colon \mathbb{R} \to [0, 1]$ such that $\phi(u) = 0$ for $u \le D/2$ and $\phi(u) = 1$ for $u \ge D$. Over the transition interval $u \in (D/2, D)$, we define the normalization $t = \frac{8\epsilon u}{\Delta} - 1$ and set $\phi(u) = 10t^3 - 15t^4 + 6t^5$. Evaluating the global extrema over this polynomial yields the derivative bounds $\|\phi'\|_{\infty} = \frac{15\epsilon}{\Delta}$ and $\|\phi''\|_{\infty} = \frac{640\epsilon^2}{\sqrt{3}\Delta^2}$.

At locations such that $u > D/2$ we have
\begin{equation}
\|x - x_0\|^2 \ge u^2 > \frac{\Delta^2}{64\epsilon^2}.
\end{equation}
At these locations, under Assumption \ref{assump:oracle}, the BG-0 stochastic oracle variance evaluates to
\begin{equation}
\tilde{\sigma}^2 = B_v^2 \frac{\Delta^2}{64\epsilon^2} + b_v^2.
\end{equation}

We deploy a stochastic gradient estimator $G(u, y, z)$ incorporating the estimator $\bar{g}_T$ from Lemma \ref{lem:pzc-mss}
\begin{equation}
G(u, y, z) = \begin{pmatrix} f_0'(u) + \phi'(u) F_{\text{scaled}}(y) \\ \phi(u) 2\epsilon \bar{g}_T(y/\lambda, z) \end{pmatrix}.
\end{equation}
The variance of the coupled stochastic gradient $G(u, y, z)$ is bounded by $\frac{4 \varsigma^2 \epsilon^2 (1-p)}{p}$. Setting this variance to $\tilde{\sigma}^2$ yields
\begin{equation} \label{eq:prob_p_mss}
p = \frac{256 \varsigma^2 \epsilon^4}{B_v^2 \Delta^2 + 64 b_v^2 \epsilon^2 + 256 \varsigma^2 \epsilon^4}.
\end{equation}

To satisfy the global mean-squared smoothness condition parameterized by $\bar{L}$ in Assumption \ref{assump:ms-smooth}, we parameterize the effective deterministic smoothness $L$  in Lemma \ref{lem:pzc-mss} according to $L = \frac{\ell_1}{\bar{\ell}_1} \bar{L} \sqrt{p}.$
Following Lemma \ref{lem:deterministic-construction}, we construct the scaled and globally shifted zero-chain $F_{\text{scaled}} \colon \mathbb{R}^T \to \mathbb{R}$ using this modified deterministic smoothness $L$
\begin{equation} \label{eq:f_scaled_mss}
F_{\text{scaled}}(y) = \frac{L \lambda^2}{2 \ell_1} \left( \bar{F}_T\left(\frac{y}{\lambda}\right) - \sup_{x \in \mathbb{R}^T} \bar{F}_T(x) \right), \quad \lambda = \frac{4 \ell_1 \epsilon}{L}, \quad T = \left\lfloor \frac{\bar{L} \Delta \sqrt{p}}{768 \bar{\ell}_1 \epsilon^2} \right\rfloor.
\end{equation}
By Remark~\ref{rem:total_variation}, the total variation of the base zero-chain is bounded by $2\Delta_0 \cdot T$. The definition of  $T$ ensures $-\frac{\Delta}{4} \le F_{\text{scaled}}(y) \le 0$  globally. For the unscaled gradient we have $\|\nabla \bar{F}_T(x)\|_{\infty} > 1$ for $\text{prog}_1(x) < T$. Thus, $\nabla F_{\text{scaled}}(y) = 2\epsilon \nabla \bar{F}_T(y/\lambda)$, preserving $\|\nabla F_{\text{scaled}}(y)\|_{\infty} > 2\epsilon$ for $\text{prog}_1(y/\lambda) < T$.

The composite objective function $F \colon \mathbb{R} \times \mathbb{R}^T \to \mathbb{R}$ couples the coordinate sequence with the zero-chain
\begin{equation}
F(u, y) = f_0(u) + \phi(u) F_{\text{scaled}}(y).
\end{equation}
The initialization is set to $x_0 = (0, 0)$. Also, $\inf_{u,y} F(u,y) \ge -\frac{3\Delta}{4} - \frac{4\epsilon^2}{L}$. The  bound on $\epsilon$ leads to $\epsilon^2 \le \frac{L\Delta}{1536\ell_1}$ such that along with the constraint on $T$ we have $\frac{4\epsilon^2}{L} \le \frac{\Delta}{384\ell_1} \le \frac{\Delta}{4}$. Thus, $F(x_0) - \inf_{u,y} F(u,y) \le \Delta$. Furthermore, substituting $\epsilon^2 \le \frac{L\Delta}{1536\ell_1}$ bounds the spectral perturbations of the composite Hessian exactly as in Theorem \ref{thm:bg0_lower_bound}, ensuring $\|\nabla^2 F(u, y)\|_2 \le L$ and deterministic $L$-smoothness.

The expanded composite gradient vector evaluates to
\begin{equation}
\nabla F(u, y) = \begin{pmatrix} f_0'(u) + \phi'(u) F_{\text{scaled}}(y) \\ \phi(u) \nabla F_{\text{scaled}}(y) \end{pmatrix}.
\end{equation}
For any $u \le D$, $f_0'(u) = -2\epsilon$. The strict non-negativity of $\phi'(u)$ combined with the globally non-positive zero-chain $F_{\text{scaled}}(y) \le 0$ enforces $\phi'(u) F_{\text{scaled}}(y) \le 0$. The first gradient coordinate satisfies $f_0'(u) + \phi'(u) F_{\text{scaled}}(y) \le -2\epsilon$, implying $\|\nabla F(u, y)\| \ge 2\epsilon > \epsilon$. Identifying any $\epsilon$-stationary point, therefore, mathematically necessitates progressing to $u > D/2$.

By Lemma \ref{lem:prob-zero-chain}, determining a coordinate satisfying $\text{prog}_1(y/\lambda) \ge T$ requires an expected query complexity bounded strictly below by $\frac{T-1}{2p}$. The restriction on $\epsilon$ ensures $T \ge 2$, establishing  $\frac{T-1}{2p} \ge \frac{T}{4p}$. Substituting the dimension $T$ yields
\begin{equation} \label{eq:complexity_mss_expansion}
\frac{T}{4p} = \frac{1}{4p} \left\lfloor \frac{\bar{L} \Delta \sqrt{p}}{768 \bar{\ell}_1 \epsilon^2} \right\rfloor \ge \frac{\bar{L} \Delta}{6144 \bar{\ell}_1 \epsilon^2 \sqrt{p}}.
\end{equation}
Noting $\frac{1}{\sqrt{p}} \ge \frac{\sqrt{B_v^2 \Delta^2 + 64 b_v^2 \epsilon^2}}{16 \varsigma \epsilon^2}$ and substituting it into \eqref{eq:complexity_mss_expansion} yields
\begin{equation} \label{eq:complexity_mss_final}
\frac{T}{4p} \ge \frac{\bar{L} \Delta \sqrt{B_v^2 \Delta^2 + 64 b_v^2 \epsilon^2}}{98304 \bar{\ell}_1 \varsigma \epsilon^4} \ge \frac{B_v \bar{L} \Delta^2}{98304 \sqrt{2} \bar{\ell}_1 \varsigma \epsilon^4} + \frac{b_v \bar{L} \Delta}{12288 \sqrt{2} \bar{\ell}_1 \varsigma \epsilon^3}
\end{equation}
where we used $\sqrt{A+B} \ge \frac{1}{\sqrt{2}}(\sqrt{A} + \sqrt{B})$.

By Lemma \ref{lem:mean_estimation}, extracting an $\epsilon$-stationary mean vector from an oracle with variance $\tilde{\sigma}^2$ requires $c_0\frac{\tilde{\sigma}^2}{\epsilon^2}$ independent queries as long as $\epsilon$ satisfies the imposed upper bound. Thus, when  $u > D/2$, this evaluates to
\begin{equation} \label{eq:stat_complexity}
\frac{\tilde{\sigma}^2}{\epsilon^2} = \frac{B_v^2 \Delta^2}{64 \epsilon^4} + \frac{b_v^2}{\epsilon^2}.
\end{equation}

Integrating the last bound with the expected query complexity lower bound on $\frac{T}{4p}$, in a near-identical fashion as done in \cite{arjevani2023lower} leads to the lower bound. Lifting this complexity to the universal class of all randomized algorithms is done exactly as explained in the proof of Theorem \ref{thm:bg0_lower_bound} by following the randomized rotation idea of \cite{arjevani2023lower}.
\end{proof}
\section{Proofs of Upper Bounds in Section \ref{sec:upper_bounds}}
\subsection{Proof of Theorem \ref{thm:pasta_dynamic}}\label{sec:pasta_dynamic}
\begin{proof}[Proof of Theorem \ref{thm:pasta_dynamic}]
Setting $S=1$, $\beta_t = 0$, with $p_t = 1$ simplifies  PASTA to $x_{t+1} = x_t - \eta g_t$. The mini-batch estimator $g_t$ is evaluated utilizing a dynamic batch size $N_t$
\begin{equation}
N_t = \max\left\{1,\left\lceil \frac{B_v^2 \|x_t - x_0\|^2 + b_v^2}{\sigma^2} \right\rceil\right\}
\end{equation}
where $\sigma^2 > 0$ is some target variance independent of $\epsilon$. By the independence of the stochastic queries conditioned on the filtration at $t$ and the $\mathcal{F}_{t-1}$-measurability of $x_t$
\begin{equation}
\mathbb{E}_t[\|g_t - \nabla f(x_t)\|^2] \le \frac{B_v^2 \|x_t - x_0\|^2 + b_v^2}{N_t} \le \sigma^2.
\end{equation}
The next few steps are standard. Applying $L$-smoothness
\begin{equation}
f(x_{t+1}) \le f(x_t) - \eta \langle \nabla f(x_t), g_t \rangle + \frac{L \eta^2}{2} \|g_t\|^2.
\end{equation}
Taking the conditional expectation and invoking unbiasedness
\begin{equation}
\mathbb{E}_t[f(x_{t+1})] \le f(x_t) - \eta \|\nabla f(x_t)\|^2 + \frac{L \eta^2}{2} \mathbb{E}_t\bigl[\|g_t\|^2\bigr].
\end{equation}
Expanding the second moment and utilizing the variance bound above,
\begin{equation}
\mathbb{E}_t[f(x_{t+1})] \le f(x_t) - \eta \left( 1 - \frac{L \eta}{2} \right) \|\nabla f(x_t)\|^2 + \frac{L \eta^2 \sigma^2}{2}.
\end{equation}
Since $\eta \le \frac{1}{L}$, we have $1 - \frac{L \eta}{2} \ge \frac{1}{2}$. Taking the total expectation, recursively unrolling from $t=0$ to $K-1$, and rearranging
\begin{equation} \label{eq:grad_sum_bound}
\sum_{t=0}^{K-1} \mathbb{E}[\|\nabla f(x_t)\|^2] \le \frac{2(f(x_0) - \mathbb{E}[f(x_{K})])}{\eta} + L \eta K\sigma^2 \le \frac{2\Delta}{\eta} + K L \eta \sigma^2.
\end{equation}
Recall that $\eta \leq \frac{\epsilon^2}{2 L \sigma^2}$ and $K = \left\lceil \frac{4 \Delta}{\eta \epsilon^2} \right\rceil$. 

To find the total oracle complexity, we need to estimate $\|x_t - x_0\|$, which determines the batch size $N_t$.
Using PASTA's update rule
\begin{equation}
x_t - x_0 = -\eta \sum_{j=0}^{t-1} \nabla f(x_j) - \eta \sum_{j=0}^{t-1} (g_j - \nabla f(x_j)).
\end{equation}
Using $\|a + b\|^2 \le 2\|a\|^2 + 2\|b\|^2$ and taking the expectation
\begin{equation}
\mathbb{E}[\|x_t - x_0\|^2] \le 2 \eta^2 \mathbb{E}\Bigl[ \bigl\| \sum_{j=0}^{t-1} \nabla f(x_j) \bigr\|^2\Bigr] + 2 \eta^2 \mathbb{E} \Bigl[ \bigl\| \sum_{j=0}^{t-1} (g_j - \nabla f(x_j)) \bigr\|^2\Bigr].
\end{equation}
The first term is bounded by $t \sum_{j=0}^{t-1} \|\nabla f(x_j)\|^2$. Substituting the bound from \eqref{eq:grad_sum_bound}
\begin{equation}
2 \eta^2 \mathbb{E} \Bigl[ \bigl\| \sum_{j=0}^{t-1} \nabla f(x_j) \bigr\|^2\Bigr] \le 2 \eta^2 K \sum_{j=0}^{K-1} \mathbb{E}\bigl[\|\nabla f(x_j)\|^2\bigr] \le 4 \eta K \Delta + 2 K^2 L \eta^3 \sigma^2.
\end{equation}
On the other hand, by the unbiasedness of the oracle, the second term can be bounded by
\begin{equation}
2 \eta^2 \mathbb{E} \Bigl[ \bigl\| \sum_{j=0}^{t-1} (g_j - \nabla f(x_j)) \bigr\|^2\Bigr] = 2 \eta^2 \sum_{j=0}^{t-1} \mathbb{E}\bigl[\|g_j - \nabla f(x_j)\|^2\bigr] \le 2 \eta^2 K \sigma^2.
\end{equation}
Aggregating these two  bounds yields
\begin{equation} \label{eq:_drift_bound}
\mathbb{E}\bigl[\|x_t - x_0\|^2\bigr] \le 4 \eta K \Delta + 2 K^2 L \eta^3 \sigma^2 + 2 \eta^2 K \sigma^2.
\end{equation}
To find an $\epsilon$-stationary solution, note from \eqref{eq:grad_sum_bound} 
\begin{equation}
\frac{1}{K} \sum_{t=0}^{K-1} \mathbb{E}\bigl[\|\nabla f(x_t)\|^2\bigr] \le \frac{2\Delta}{K \eta} + L \eta \sigma^2 \le \epsilon^2.
\end{equation}
It remains to bound the total stochastic oracle complexity. We consider the two possible branches of the stepsize.

\noindent\textbf{Case 1:} $\eta = \frac{\epsilon^2}{2L\sigma^2}$, equivalently $\epsilon^2 \le 2\sigma^2$.

In this case, since $K = \left\lceil \frac{4\Delta}{\eta\epsilon^2}\right\rceil$, the bound \eqref{eq:_drift_bound} simplifies to
\begin{equation}
\mathbb{E}\bigl[\|x_t - x_0\|^2\bigr]
\le
\frac{32 \Delta^2}{\epsilon^2} + \frac{4 \Delta}{L} + \frac{6 \Delta \epsilon^2}{L \sigma^2} + \frac{\epsilon^4}{2 L^2 \sigma^2} + \frac{\epsilon^6}{4 L^2 \sigma^4}.
\end{equation}

The total stochastic oracle complexity integrates the dynamic batch sizes across the entire optimization path. Taking the total expectation,
\begin{equation}
\mathbb{E}[N_{total}]
=
\sum_{t=0}^{K-1} \mathbb{E}[N_t]
\le
\sum_{t=0}^{K-1} \left( 1 + \frac{B_v^2 \mathbb{E}\bigl[\|x_t - x_0\|^2\bigr] + b_v^2}{\sigma^2} \right).
\end{equation}
Substituting the bound on $\mathbb{E}\bigl[\|x_t - x_0\|^2\bigr]$ found above yields
\begin{equation}\label{eq:exact_bound_dynamic}
    \begin{aligned}
        \mathbb{E}[N_{total}] &\le \frac{256 B_v^2 \Delta^3 L}{\epsilon^6} + \frac{80 B_v^2 \Delta^2}{\epsilon^2 \sigma^2} + \frac{32 B_v^2 \Delta^2}{\epsilon^4} + \frac{8 B_v^2 \Delta \epsilon^2}{L \sigma^4} + \frac{8 B_v^2 \Delta}{L \sigma^2} 
        \\
        &\quad + \frac{B_v^2 \epsilon^6}{4 L^2 \sigma^6} + \frac{B_v^2 \epsilon^4}{2 L^2 \sigma^4} + \frac{8 \Delta L b_v^2}{\epsilon^4} + \frac{8 \Delta L \sigma^2}{\epsilon^4} + \frac{b_v^2}{\sigma^2} + 1,
    \end{aligned}
\end{equation}
and therefore $\mathbb{E}[N_{\mathrm{total}}] = \mathcal{O}(\epsilon^{-6})$.

\noindent\textbf{Case 2:} $\eta = \frac{1}{L}$, equivalently $\epsilon^2 > 2\sigma^2$.
In this case, $ K = \left\lceil \frac{4\Delta L}{\epsilon^2} \right\rceil = \mathcal{O}(\epsilon^{-2}).$
Using \eqref{eq:_drift_bound},
\begin{equation}
    \mathbb{E}\bigl[\|x_t - x_0\|^2\bigr] \le \frac{4K\Delta}{L} + \frac{2K^2\sigma^2}{L^2} + \frac{2K\sigma^2}{L^2}
    = \mathcal{O}(\epsilon^{-4}).
\end{equation}

Therefore,
\begin{equation}
    \begin{aligned}
        \mathbb{E}[N_{total}] &=  \sum_{t=0}^{K-1} \mathbb{E}[N_t]
        \le \sum_{t=0}^{K-1} \left( 1 + \frac{B_v^2 \mathbb{E}\bigl[\|x_t - x_0\|^2\bigr] + b_v^2}{\sigma^2} \right) 
        \\
        &\le K\left(1 + \frac{b_v^2}{\sigma^2}\right) + \frac{B_v^2}{\sigma^2} \sum_{t=0}^{K-1} \mathbb{E}\bigl[\|x_t - x_0\|^2\bigr] 
        = \mathcal{O}(\epsilon^{-2}) + \mathcal{O}(\epsilon^{-2})\mathcal{O}(\epsilon^{-4})
        = \mathcal{O}(\epsilon^{-6}).
    \end{aligned}
\end{equation}

Combining the two cases, we conclude that
$
\mathbb{E}[N_{total}] = \mathcal{O}(\epsilon^{-6}),
$
which finishes the proof.
\end{proof}
\subsection{Proof of Theorem \ref{thm:pasta_dynamic2}}\label{sec:pasta_dynamic_cvx}
\begin{proof}[Proof of Theorem \ref{thm:pasta_dynamic2}]
Under the parameterization $S=1$, $p_t=1$, and $\beta_t=0$, PASTA reduces to $x_{t+1} = x_t - \eta g_t$. The dynamic batch size $N_t$ and the independence of the oracle queries conditioned on $\mathcal{F}_t$ ensure
\begin{equation}
\mathbb{E}_t\bigl[\|g_t - \nabla f(x_t)\|^2\bigr] \le \sigma^2.
\end{equation}
Expanding the distance to the optimum and taking the conditional expectation yields
\begin{equation}
\mathbb{E}_t\bigl[\|x_{t+1} - x^*\|^2\bigr] = \|x_t - x^*\|^2 - 2\eta\langle \nabla f(x_t), x_t - x^* \rangle + \eta^2\mathbb{E}_t\bigl[\|g_t\|^2\bigr].
\end{equation}
The convexity of $f$ means $\langle \nabla f(x_t), x_t - x^* \rangle \ge f(x_t) - f(x^*)$. Also, by $L$-smoothness we have $\|\nabla f(x_t)\|^2 \le 2L(f(x_t) - f(x^*))$. Therefore,
\begin{equation}
\mathbb{E}_t\bigl[\|g_t\|^2\bigr] \le 2L(f(x_t) - f(x^*)) + \sigma^2.
\end{equation}
Substituting these relations into the distance expansion yields
\begin{equation}\label{eq:cvx-smooth}
\mathbb{E}_t\bigl[\|x_{t+1} - x^*\|^2\bigr] \le \|x_t - x^*\|^2 - 2\eta(1 - L\eta)(f(x_t) - f(x^*)) + \eta^2\sigma^2.
\end{equation}
Since $\eta \le \frac{1}{2L}$,  we have $1 - L\eta \ge \frac{1}{2}$. Taking the total expectation and rearranging
\begin{equation}
\eta\mathbb{E}[f(x_t) - f(x^*)] \le \mathbb{E}\bigl[\|x_t - x^*\|^2\bigr] - \mathbb{E}\bigl[\|x_{t+1} - x^*\|^2\bigr] + \eta^2\sigma^2.
\end{equation}
Summing from $t=0$ to $K-1$ and discarding the non-positive terms yields
\begin{equation}
\sum_{t=0}^{K-1} \mathbb{E}[f(x_t) - f(x^*)] \le \frac{\|x_0 - x^*\|^2}{\eta} + K\eta\sigma^2.
\end{equation}
Dividing by $K$, by convexity, and noting the definition of the ergodic average $\bar{x}_K$
\begin{equation}
\mathbb{E}[f(\bar{x}_K) - f(x^*)] \le \frac{\|x_0 - x^*\|^2}{K\eta} + \eta\sigma^2.
\end{equation}
Since $\eta \leq \frac{\epsilon}{2\sigma^2}$ and $K = \left\lceil \frac{2\|x_0 - x^*\|^2}{\eta \epsilon} \right\rceil$ guarantees $\mathbb{E}[f(\bar{x}_K) - f(x^*)] \le \epsilon$.

To determine the expected total stochastic oracle complexity, recall \eqref{eq:cvx-smooth} and discard the non-positive suboptimality term to bound $\mathbb{E}\bigl[\|x_{t+1} - x^*\|^2\bigr] \le \mathbb{E}\bigl[\|x_t - x^*\|^2\bigr] + \eta^2\sigma^2$. Unrolling this  establishes $\mathbb{E}\bigl[\|x_t - x^*\|^2\bigr] \le \|x_0 - x^*\|^2 + t\eta^2\sigma^2$. Applying the inequality $\|a + b\|^2 \le 2\|a\|^2 + 2\|b\|^2$ bounds
\begin{equation}
\mathbb{E}\bigl[\|x_t - x_0\|^2\bigr] \le 4\|x_0 - x^*\|^2 + 2t\eta^2\sigma^2.
\end{equation}
The total expected stochastic oracle complexity amounts to
\begin{equation}
\mathbb{E}[N_{total}] = \sum_{t=0}^{K-1} \mathbb{E}[N_t] \le \sum_{t=0}^{K-1} \left( 1 + \frac{B_v^2\mathbb{E}\bigl[\|x_t - x_0\|^2\bigr] + b_v^2}{\sigma^2} \right).
\end{equation}
Substituting $\mathbb{E}\bigl[\|x_t - x^*\|^2\bigr] \le \|x_0 - x^*\|^2 + t\eta^2\sigma^2$ and utilizing $\sum_{t=0}^{K-1} t \le \frac{K^2}{2}$ yields
\begin{equation}
\label{eq:n_total}
    \mathbb{E}[N_{total}] \le K\left( 1 + \frac{b_v^2 + 4B_v^2\|x_0 - x^*\|^2}{\sigma^2} \right) + B_v^2 K^2\eta^2.
\end{equation}

It remains to bound the last expression. We consider the two possible branches of the stepsize.

\noindent\textbf{Case 1:} $\eta = \frac{\epsilon}{2\sigma^2}$, equivalently $\epsilon \le \frac{\sigma^2}{L}$. In this case,
$$K = \left\lceil \frac{2\|x_0 - x^*\|^2}{\eta\epsilon} \right\rceil
=
\left\lceil \frac{4\sigma^2\|x_0 - x^*\|^2}{\epsilon^2} \right\rceil
=
\mathcal{O}(\epsilon^{-2}),$$
and
$$K^2\eta^2
=
\mathcal{O}(\epsilon^{-4})\cdot \mathcal{O}(\epsilon^2)
=
\mathcal{O}(\epsilon^{-2}).$$
Therefore, \eqref{eq:n_total} implies $ \mathbb{E}[N_{total}] = \mathcal{O}(\epsilon^{-2}).$

\noindent\textbf{Case 2:} $\eta = \frac{1}{2L}$, equivalently $\epsilon \ge \frac{\sigma^2}{L}$.
In this case,
$$K = \left\lceil \frac{2\|x_0 - x^*\|^2}{\eta\epsilon} \right\rceil
=
\left\lceil \frac{4L\|x_0 - x^*\|^2}{\epsilon} \right\rceil
=
\mathcal{O}(\epsilon^{-1}),$$
and 
$$K^2\eta^2
=
\mathcal{O}(\epsilon^{-2}).$$
Therefore, \eqref{eq:n_total} again implies $ \mathbb{E}[N_{total}] = \mathcal{O}(\epsilon^{-2}).$ Combining the two cases, furnishes the proof.
\end{proof}
\subsection{Proof of Theorem \ref{thm:pasta_lipschitz_convex}}\label{sec:pasta_lip_cvx}
\begin{proof}[Proof of Theorem \ref{thm:pasta_lipschitz_convex}]
The parameterization $S=1$, $p_t=1$, and $\beta_t=0$ reduces the algorithm to $x_{t+1} = x_t - \eta g_t$. By dynamic batch size definition and the conditional independence of the oracle queries
\begin{equation}
\mathbb{E}_t\bigl[\|g_t - \bar{g}_t\|^2\bigr] \le \sigma^2
\end{equation}
where $\bar{g}_t \in \partial f(x_t)$ represents the true subgradient. Expanding yields
\begin{equation}
\mathbb{E}_t\bigl[\|x_{t+1} - x^*\|^2\bigr] = \|x_t - x^*\|^2 - 2\eta\langle \bar{g}_t, x_t - x^* \rangle + \eta^2\mathbb{E}_t\bigl[\|g_t\|^2\bigr].
\end{equation}
By convexity $-\langle \bar{g}_t, x_t - x^* \rangle \le -(f(x_t) - f(x^*))$ while by $G$-Lipschitz $\|\bar{g}_t\|^2 \le G^2$ such that
\begin{equation}
\mathbb{E}_t\bigl[\|g_t\|^2\bigr] = \|\bar{g}_t\|^2 + \mathbb{E}_t\bigl[\|g_t - \bar{g}_t\|^2\bigr] \le G^2 + \sigma^2.
\end{equation}
Integrating these establishes
\begin{equation}\label{eq:cvx-lip}
\mathbb{E}_t\bigl[\|x_{t+1} - x^*\|^2\bigr] \le \|x_t - x^*\|^2 - 2\eta(f(x_t) - f(x^*)) + \eta^2(G^2 + \sigma^2).
\end{equation}
Taking the total expectation and rearranging
\begin{equation}
\mathbb{E}[f(x_t) - f(x^*)] \le \frac{\mathbb{E}\bigl[\|x_t - x^*\|^2\bigr] - \mathbb{E}\bigl[\|x_{t+1} - x^*\|^2\bigr]}{2\eta} + \frac{\eta(G^2 + \sigma^2)}{2}.
\end{equation}
Summing from $t=0$ to $K-1$ and discarding the non-positive term $\mathbb{E}\bigl[\|x_K - x^*\|^2\bigr]$ yields
\begin{equation}
\sum_{t=0}^{K-1} \mathbb{E}[f(x_t) - f(x^*)] \le \frac{\|x_0 - x^*\|^2}{2\eta} + \frac{K\eta(G^2 + \sigma^2)}{2}.
\end{equation}
Dividing  by  $K$, using convexity, and the definition of the ergodic average $\bar{x}_K$ establishes
\begin{equation}
\mathbb{E}[f(\bar{x}_K) - f(x^*)] \le \frac{\|x_0 - x^*\|^2}{2K\eta} + \frac{\eta(G^2 + \sigma^2)}{2}.
\end{equation}
Substituting  $\eta = \frac{\epsilon}{G^2 + \sigma^2}$ and $K = \left\lceil \frac{(G^2 + \sigma^2)\|x_0 - x^*\|^2}{\epsilon^2} \right\rceil$ ensures $\mathbb{E}[f(\bar{x}_K) - f(x^*)] \le \epsilon$.

To determine the total expected stochastic oracle complexity, recall \eqref{eq:cvx-lip} and remove the non-positive suboptimality term to obtain $\mathbb{E}\bigl[\|x_{t+1} - x^*\|^2\bigr] \le \mathbb{E}\bigl[\|x_t - x^*\|^2\bigr] + \eta^2(G^2 + \sigma^2)$. Unrolling this yields
\begin{equation}
\mathbb{E}\bigl[\|x_t - x^*\|^2\bigr] \le \|x_0 - x^*\|^2 + t\eta^2(G^2 + \sigma^2).
\end{equation}
Applying  $\|a + b\|^2 \le 2\|a\|^2 + 2\|b\|^2$ we get
\begin{equation}
\mathbb{E}\bigl[\|x_t - x_0\|^2\bigr] \le 4\|x_0 - x^*\|^2 + 2t\eta^2(G^2 + \sigma^2).
\end{equation}
The expected total stochastic oracle complexity amounts to
\begin{equation}
\mathbb{E}[N_{total}] = \sum_{t=0}^{K-1} \mathbb{E}[N_t] \le \sum_{t=0}^{K-1} \left( 1 + \frac{B_v^2\mathbb{E}\bigl[\|x_t - x_0\|^2\bigr] + b_v^2}{\sigma^2} \right).
\end{equation}
Substituting the bound on $\mathbb{E}\bigl[\|x_t - x_0\|^2\bigr]$ and utilizing $\sum_{t=0}^{K-1} t \le \frac{K^2}{2}$
\begin{equation}
\mathbb{E}[N_{total}] \le K\left( 1 + \frac{b_v^2 + 4B_v^2\|x_0 - x^*\|^2}{\sigma^2} \right) + \frac{B_v^2}{\sigma^2} K^2\eta^2(G^2 + \sigma^2).
\end{equation}
Substituting the definitions of $K$ and $\eta$ yields
\begin{equation}\label{eq:exact_bound_lip_cvx}
\mathbb{E}[N_{total}] \le
\left\lceil \tfrac{(G^2 + \sigma^2)\|x_0 - x^*\|^2}{\epsilon^2} \right\rceil
\left( 1 + \tfrac{b_v^2 + 4 B_v^2 \|x_0 - x^*\|^2}{\sigma^2} \right)
+
\tfrac{B_v^2\epsilon^2}{\sigma^2(G^2 + \sigma^2)}
\left\lceil \tfrac{(G^2 + \sigma^2)\|x_0 - x^*\|^2}{\epsilon^2}\right\rceil^2,
\end{equation}
which is evidently $\mathcal{O}(\epsilon^{-2})$.
\end{proof}
\subsection{Proof of Theorem \ref{thm:uncertainty_dynamic}}\label{sec:u_p}
\begin{proof}[Proof of Theorem \ref{thm:uncertainty_dynamic}]
Let $\bar{g}_t = \nabla f(x_t)$ and $g_t = \bar{g}_t + \xi_t$ for an  unbiased noise term $\xi_t$. Expanding yields
\begin{equation}
\|x_{t+1} - x^*\|^2 = \|x_t - x^* - \eta\bar{g}_t - \beta_t(x_t - x_0) - \eta\xi_t\|^2
\end{equation}
Taking the conditional expectation
\begin{equation}
\mathbb{E}_t\bigl[\|x_{t+1} - x^*\|^2\bigr] = \|x_t - x^* - \eta\bar{g}_t - \beta_t(x_t - x_0)\|^2 + \eta^2\mathbb{E}_t\bigl[\|\xi_t\|^2\bigr]
\end{equation}
Expanding establishes
\begin{equation}
\mathbb{E}_t\bigl[\|x_{t+1} - x^*\|^2\bigr] = \|x_t - x^*\|^2 - 2\langle \eta\bar{g}_t + \beta_t(x_t - x_0), x_t - x^* \rangle + \|\eta\bar{g}_t + \beta_t(x_t - x_0)\|^2 + \eta^2\mathbb{E}_t\bigl[\|\xi_t\|^2\bigr]
\end{equation}
By convexity $-2\eta\langle \bar{g}_t, x_t - x^* \rangle \le 0$. Applying the polarization identity $-2\langle a, b \rangle = \|a-b\|^2 - \|a\|^2 - \|b\|^2$ yields 
\begin{equation}
  -2\beta_t\langle x_t - x_0, x_t - x^* \rangle = -\beta_t\|x_t - x_0\|^2 - \beta_t\|x_t - x^*\|^2 + \beta_t\|x_0 - x^*\|^2.  
\end{equation}
Using $\|a + b\|^2 \le 2\|a\|^2 + 2\|b\|^2$ and the consequence of $L$-smoothness $\|\bar{g}_t\|^2 \le L^2\|x_t - x^*\|^2$ we have 
\begin{equation}
    \|\eta\bar{g}_t + \beta_t(x_t - x_0)\|^2 \le 2\eta^2 L^2\|x_t - x^*\|^2 + 2\beta_t^2\|x_t - x_0\|^2.
\end{equation}
Using BG-0 oracle and a dynamic mini-batch size $N_t$
\begin{equation}
\eta^2\mathbb{E}_t\bigl[\|\xi_t\|^2\bigr] \le \frac{\eta^2 B_v^2}{N_t}\|x_t - x_0\|^2 + \frac{\eta^2 b_v^2}{N_t}
\end{equation}
Combining these bounds yields
\begin{equation}
\begin{aligned}
\mathbb{E}_t\bigl[\|x_{t+1} - x^*\|^2\bigr] \le &\left( 1 - \beta_t + 2\eta^2 L^2 \right)\|x_t - x^*\|^2 \\
&+ \left( -\beta_t + 2\beta_t^2 + \frac{\eta^2 B_v^2}{N_t} \right)\|x_t - x_0\|^2 + \beta_t\|x_0 - x^*\|^2 + \frac{\eta^2 b_v^2}{N_t}
\end{aligned}
\end{equation}
Imposing  $\eta \le \frac{\sqrt{\beta_t}}{2 L}$  ensures the contraction coefficient $1 - \beta_t + 2\eta^2 L^2$ is bounded from above by $1 - \frac{\beta_t}{2}$. To guarantee convergence, the coefficient of $\|x_t - x_0\|^2$ must remain non-positive. We thus set $-\beta_t + 2\beta_t^2 + \frac{\eta^2 B_v^2}{N_t} \le 0$ to get $\beta_t(1 - 2\beta_t)N_t \ge \eta^2 B_v^2$, thereby leading to the residual error $\mathcal{E}_t = \beta_t\|x_0 - x^*\|^2 + \frac{\eta^2 b_v^2}{N_t}$, and concluding the proof.
\end{proof}
\subsection{Proof of Theorem \ref{thm:page_bg0_upper_bound}}\label{sec:page_bg0_upper_bound}
\begin{proof}[Proof of Theorem \ref{thm:page_bg0_upper_bound}]
Define a reference large batch size $N_{\text{ref}}$, a small batch size $b$, a fixed large-batch probability $p$, and a fixed step size $\eta$ as follows
\begin{equation} \label{eq:page_params}
    N_{\text{ref}} = \max\left\{1, \left\lceil \frac{8 B_v^2 \Delta^2}{\epsilon^4} + \frac{2 b_v^2}{\epsilon^2} \right\rceil\right\}, \quad b = \left\lceil \sqrt{N_{\text{ref}}} \right\rceil, \quad p = \frac{b}{N_{\text{ref}}}, \quad \eta = \frac{1}{4\bar{L}}.
\end{equation}
At initialization, we compute $g_0 = \frac{1}{N_0} \sum_{i=1}^{N_0} \widetilde{\nabla} f(x_0; \xi_0^{(i)})$ using $N_0 = \max\left\{1, \lceil 2 b_v^2 \epsilon^{-2} \rceil\right\}$. We set $\beta=0$ and $S=1$, thereby using the update rule $x_{t+1} = x_t - \eta g_t$. 

At each iteration $t \ge 1$, the dynamic large batch size is set to
\begin{equation}
    N_t = \max \left\{ N_{\text{ref}}, \left\lceil \frac{2(B_v^2 \|x_t - x_0\|^2 + b_v^2)}{\epsilon^2} \right\rceil \right\}.
\end{equation}
With probability $p$, the algorithm performs a reset and uses $g_t = \frac{1}{N_t} \sum_{i=1}^{N_t} \widetilde{\nabla} f(x_t; \xi_t^{(i)})$, whereas with probability $1-p$, the estimator employs the small batch size and computes $g_t = g_{t-1} + \frac{1}{b} \sum_{i=1}^b \bigl( \widetilde{\nabla} f(x_t; \zeta_t^{(i)}) - \widetilde{\nabla} f(x_{t-1}; \zeta_t^{(i)}) \bigr)$.

Define the random estimation error $V_t = \|g_t - \nabla f(x_t)\|^2$. Since $x_t$ is $\mathcal{F}_{t-1}$-measurable, the variance of the large batch is bounded by $\frac{B_v^2 \|x_t - x_0\|^2 + b_v^2}{N_t} \leq \frac{\epsilon^2}{2}$ using the definition of $N_t$. Leveraging the MSS condition and Lemma \ref{lem:page_variance}
\begin{equation} \label{eq:variance_recurrence}
    \mathbb{E}_t[V_t] \le p \frac{\epsilon^2}{2} + (1-p) \left( V_{t-1} + \frac{\bar{L}^2 \eta^2}{b} \|g_{t-1}\|^2 \right).
\end{equation}
Note any MSS function is $\bar{L}$-smoothness, such that using the descent lemma for smooth functions and the  polarization identity $-2\langle a, b \rangle = \|a-b\|^2 - \|a\|^2 - \|b\|^2$ 
\begin{equation}
    f(x_{t+1}) \le f(x_t) - \frac{\eta}{2} \|\nabla f(x_t)\|^2 - \frac{\eta}{2} (1 - \bar{L} \eta) \|g_t\|^2 + \frac{\eta}{2} V_t.
\end{equation}
We define the Lyapunov function $\Phi_t = f(x_t) - \inf_x f(x) + \frac{\eta}{2p} V_t$. Evaluating the conditional expectation of $\Phi_{t+1}$ and integrating the above descent bound with \eqref{eq:variance_recurrence}
\begin{equation} \label{eq:lyapunov_descent}
    \mathbb{E}_{t+1}[\Phi_{t+1}] \le f(x_t) - \inf_x f(x) - \frac{\eta}{2} \|\nabla f(x_t)\|^2 - \frac{\eta}{2} \left( 1 - \bar{L} \eta - \frac{(1-p)\bar{L}^2 \eta^2}{p b} \right) \|g_t\|^2 + \frac{\eta}{2p}V_t+ \frac{\eta \epsilon^2}{4}.
\end{equation}
By setting $\eta = \frac{1}{4\bar{L}}$ and utilizing $p b = \frac{b^2}{N_{\text{ref}}} \ge 1$, the coefficient of $\|g_t\|^2$ satisfies $1 - \frac{1}{4} - \frac{1-p}{16 p b} \ge \frac{3}{4} - \frac{1}{16} = \frac{11}{16} \ge \frac{1}{2}$. Taking the full unconditional expectation over all random variables yields
\begin{equation}
    \mathbb{E}[\Phi_{t+1}] \le \mathbb{E}[\Phi_t] - \frac{\eta}{2} \mathbb{E}[\|\nabla f(x_t)\|^2] - \frac{\eta}{4} \mathbb{E}[\|g_t\|^2] + \frac{\eta \epsilon^2}{4}.
\end{equation}
Telescoping this inequality  $K$ and rearranging 
\begin{equation} \label{eq:telescoping_bounds}
    \sum_{t=0}^{K-1} \mathbb{E}[\|\nabla f(x_t)\|^2] +\frac{1}{2} \sum_{t=0}^{K-1} \mathbb{E}[\|g_t\|^2]\le \frac{2 \E[\Phi_0]}{\eta} + \frac{K \epsilon^2}{2}.
\end{equation}
At initialization, we use $N_0$ oracle calls which means $\E[V_0] \le \frac{\epsilon^2}{2}$, thereby we have $\E[\Phi_0] \le \Delta + \frac{\eta \epsilon^2}{4p}$. Using $K = \lceil \frac{16 \Delta \bar{L}}{\epsilon^2} + \frac{1}{p} \rceil$ in \eqref{eq:telescoping_bounds} we have
\begin{equation} \label{eq:stationarity_bound}
    \frac{1}{K} \sum_{t=0}^{K-1} \mathbb{E}[\|\nabla f(x_t)\|^2] \le \frac{2 \Delta}{K \eta} + \frac{\epsilon^2}{2 p K} + \frac{\epsilon^2}{2} \le \frac{4 \Delta + \frac{\eta \epsilon^2}{p}}{2 \eta \left( \frac{4 \Delta}{\eta \epsilon^2} + \frac{1}{p} \right)} + \frac{\epsilon^2}{2} = \epsilon^2.
\end{equation}

To bound the dynamic batch size, using \eqref{eq:telescoping_bounds} we can write
\begin{equation}
    \sum_{t=0}^{K-1} \mathbb{E}[\|g_t\|^2] \le \frac{4 \Delta}{\eta} + \frac{\epsilon^2}{p} + K \epsilon^2 \le 16 \Delta \bar{L} + \frac{\epsilon^2}{p} + \left( \frac{16 \Delta \bar{L}}{\epsilon^2} + \frac{1}{p} + 1 \right) \epsilon^2 = 32 \Delta \bar{L} + \frac{2\epsilon^2}{p} + \epsilon^2.
\end{equation}
As we did in the proof of Theorem \ref{thm:pasta_dynamic},
\begin{equation} \label{eq:_drift}
    \mathbb{E}[\|x_t - x_0\|^2] \le t \eta^2 \sum_{j=0}^{t-1} \mathbb{E}[\|g_j\|^2] \le K \eta^2 (32 \Delta \bar{L} + \frac{2\epsilon^2}{p} + \epsilon^2).
\end{equation}
The expected computational cost at step $t \ge 1$ is $\mathbb{E}[C_t] = p \mathbb{E}[N_t] + (1-p) b \le p \mathbb{E}[N_t] + b$, which amounts to the following using the definition of our parameters
\begin{equation}
\begin{aligned}
    \mathbb{E}[C_t] &\le p \left( N_{\text{ref}} + 1 + \frac{2 B_v^2 K \eta^2 \left( 32 \Delta \bar{L} + \frac{2\epsilon^2}{p} + \epsilon^2 \right) + 2 b_v^2}{\epsilon^2} \right) + b \\
    &= 2b + p + p \frac{2 b_v^2}{\epsilon^2} + p \frac{2 B_v^2}{\epsilon^2} K \eta^2 \left( 32 \Delta \bar{L} + \frac{2\epsilon^2}{p} + \epsilon^2 \right).
\end{aligned}
\end{equation}
The total expected stochastic oracle complexity can then be evaluated as follows
\begin{equation}
    \mathbb{E}[N_{total}] \le N_0 + \sum_{t=1}^{K-1} \mathbb{E}[C_t] \le 2 b_v^2 \epsilon^{-2} + 1 + K \left( 2b + p + p \frac{2 b_v^2}{\epsilon^2} + p \frac{2 B_v^2}{\epsilon^2} K \eta^2 \left( 32 \Delta \bar{L} + \frac{2\epsilon^2}{p} + \epsilon^2 \right) \right).
\end{equation}
which, using the definition of our parameters, leads to
\begin{equation}\label{eq:exact_bound_mss}
\begin{aligned}
        \mathbb{E}[N_{total}] &\le \frac{2 b_v^2}{\epsilon^2} + 1 + \left( \frac{16 \Delta \bar{L}}{\epsilon^2} + \sqrt{\frac{8 B_v^2 \Delta^2}{\epsilon^4} + \frac{2 b_v^2}{\epsilon^2} + 1} + 1 \right) \times \\
        &\Bigg[ 2b + p + p \frac{2 b_v^2}{\epsilon^2} +  p \frac{2 B_v^2}{16 \bar{L}^2 \epsilon^2} \left( \frac{16 \Delta \bar{L}}{\epsilon^2} + \sqrt{\frac{8 B_v^2 \Delta^2}{\epsilon^4} + \frac{2 b_v^2}{\epsilon^2} + 1} + 1 \right) \left( 32 \Delta \bar{L} + \frac{2\epsilon^2}{p} + \epsilon^2 \right) \Bigg].
\end{aligned}
\end{equation}
Readily, the expected oracle complexity is $\mathcal{O}(\epsilon^{-4})$.
\end{proof}
\subsection{Proof of Theorem \ref{thm:pasta_pl}}\label{sec:pasta_pl}
\begin{proof}[Proof of Theorem \ref{thm:pasta_pl}]
We set $S=1$, $N_t = 1$, $p_t = 1$, and $\lambda = 0$ such that $x_{t+1} = x_t - \eta g_t$. By the $L$-smoothness of the objective $f$, the standard descent lemma yields
\begin{equation}
f(x_{t+1}) \le f(x_t) - \eta \langle \nabla f(x_t), g_t \rangle + \frac{L\eta^2}{2} \|g_t\|^2.
\end{equation}
Take the expectation conditioned on the current iterate $x_t$ and by the unbiasedness of the stochastic gradient oracle
\begin{equation}
\mathbb{E}_t[f(x_{t+1})] \le f(x_t) - \eta \|\nabla f(x_t)\|^2 + \frac{L\eta^2}{2} \mathbb{E}_t\bigl[\|g_t\|^2\bigr].
\end{equation}
Using the BG-0 variance condition  and unbiasedness
\begin{equation}
\mathbb{E}_t\bigl[\|g_t\|^2\bigr] \le \|\nabla f(x_t)\|^2 + B_v^2\|x_t - x_0\|^2 + b_v^2.
\end{equation}
Substituting this bound into the descent inequality
\begin{equation}
\mathbb{E}_t[f(x_{t+1})] \le f(x_t) - \eta\left(1 - \frac{L\eta}{2}\right) \|\nabla f(x_t)\|^2 + \frac{L\eta^2 B_v^2}{2}\|x_t - x_0\|^2 + \frac{L\eta^2 b_v^2}{2}.
\end{equation}
We impose $\eta \le \frac{1}{L}$ to ensure $1 - \frac{L\eta}{2} \ge \frac{1}{2}$. Subtracting $f(x^*)$ from both sides and using PL condition $-\|\nabla f(x_t)\|^2 \le -2\mu (f(x_t) - f(x^*))$ establishes
\begin{equation}
\mathbb{E}_t[f(x_{t+1}) - f(x^*)] \le (1 - \eta\mu)(f(x_t) - f(x^*)) + \frac{L\eta^2 B_v^2}{2} \|x_t - x_0\|^2 + \frac{L\eta^2 b_v^2}{2}.
\end{equation}
As we discussed in the main paper, for an $L$-smooth function, the PL condition implies QG \cite{karimi2016linear}, i.e. $\|x - x^*\|^2 \le \frac{2}{\mu}(f(x) - f(x^*))$. Expanding $\|x_t - x_0\|^2 \le 2\|x_t - x^*\|^2 + 2\|x_0 - x^*\|^2$ yields
\begin{equation}
\|x_t - x_0\|^2 \le \frac{4}{\mu}(f(x_t) - f(x^*)) + \frac{4}{\mu}(f(x_0) - f(x^*)).
\end{equation}
Substituting this bound into our previous result yields
\begin{equation}
\mathbb{E}_t[f(x_{t+1}) - f(x^*)] \le \left( 1 - \eta\mu + \frac{2L\eta^2 B_v^2}{\mu} \right)(f(x_t) - f(x^*)) + \frac{2L\eta^2 B_v^2}{\mu}(f(x_0) - f(x^*)) + \frac{L\eta^2 b_v^2}{2}.
\end{equation}
We impose $\frac{2L\eta^2 B_v^2}{\mu} \le \frac{\eta\mu}{2}$ to ensure the contraction coefficient is bounded by $1 - \frac{\eta\mu}{2}$. Unfolding recursively from $t=0$ to $K$ and applying the geometric series sum bound $\sum_{j=0}^{K-1} (1 - \frac{\eta\mu}{2})^j \le \frac{2}{\eta\mu}$ yields
\begin{equation}
\mathbb{E}[f(x_K) - f(x^*)] \le \left(1 - \frac{\eta\mu}{2}\right)^K (f(x_0) - f(x^*)) + \frac{4 L\eta B_v^2}{\mu^2}(f(x_0) - f(x^*)) + \frac{L\eta b_v^2}{\mu}.
\end{equation}
To guarantee the final suboptimality is less than $\epsilon$, we impose conditions such that each of the three terms in the bound are less than $\frac{\epsilon}{3}$. This requires $\frac{4 L\eta B_v^2}{\mu^2}(f(x_0) - f(x^*)) \le \frac{\epsilon}{3}$, $\frac{L\eta b_v^2}{\mu} \le \frac{\epsilon}{3}$, and $K \ge \frac{2}{\eta\mu} \log(\frac{3(f(x_0) - f(x^*))}{\epsilon})$. Since $S=1$ and $N_t=1$, the stochastic oracle complexity is  equivalent to $K$ and evaluates to
\begin{equation}\label{eq:exact_bound_pl}
    K = \left\lceil \max\left( \frac{2 L}{\mu} \log\left(\frac{3\Delta}{\epsilon}\right), \frac{6 L b_v^2}{\mu^2 \epsilon} \log\left(\frac{3\Delta}{\epsilon}\right), \frac{24 L B_v^2 \Delta}{\mu^3 \epsilon} \log\left(\frac{3\Delta}{\epsilon}\right) \right) \right\rceil,
\end{equation}
which is evidently $\mathcal{O}(\epsilon^{-1} \log(\epsilon^{-1}))$.
\end{proof}
\subsection{Proof of Theorem \ref{thm:pasta_star}}\label{sec:pasta_star}
\begin{proof}[Proof of Theorem \ref{thm:pasta_star}]
We set $S=1$, $N_t = 1$, $p_t = 1$, and $\lambda = 0$ such that $x_{t+1} = x_t - \eta g_t$. Expanding $\|x_{t+1} - x^*\|^2$ yields
\begin{equation}
\mathbb{E}_t\bigl[\|x_{t+1} - x^*\|^2\bigr] = \|x_t - x^*\|^2 - 2\eta\langle \nabla f(x_t), x_t - x^* \rangle + \eta^2\mathbb{E}_t\bigl[\|g_t\|^2\bigr].
\end{equation}
Applying the  definition of $\mu$-star-convexity directly bounds
\begin{equation}
-2\eta\langle \nabla f(x_t), x_t - x^* \rangle \le -2\eta(f(x_t) - f(x^*)) - \eta\mu\|x_t - x^*\|^2.
\end{equation}
Using BG-0 variance condition, unbiasedness, and $L$-smoothness implication $\|\nabla f(x_t)\|^2 \le 2L(f(x_t) - f(x^*))$ leads to
\begin{equation}
\mathbb{E}_t\bigl[\|g_t\|^2\bigr] \le 2L(f(x_t) - f(x^*)) + B_v^2\|x_t - x_0\|^2 + b_v^2.
\end{equation}
Substituting these bounds into the distance recurrence above, we have
\begin{equation}
\mathbb{E}_t\bigl[\|x_{t+1} - x^*\|^2\bigr] \le (1 - \eta\mu)\|x_t - x^*\|^2 - 2\eta\left(1 - L\eta\right)(f(x_t) - f(x^*)) + \eta^2 B_v^2\|x_t - x_0\|^2 + \eta^2 b_v^2.
\end{equation}
We impose $\eta \le \frac{1}{L}$  such that $1 - L\eta \ge 0$. Using $\|x_t - x_0\|^2 \le 2\|x_t - x^*\|^2 + 2\|x_0 - x^*\|^2$ yields
\begin{equation}
\mathbb{E}_t\bigl[\|x_{t+1} - x^*\|^2\bigr] \le \left(1 - \eta\mu + 2\eta^2 B_v^2\right)\|x_t - x^*\|^2 + 2\eta^2 B_v^2\|x_0 - x^*\|^2 + \eta^2 b_v^2.
\end{equation}
We impose $\eta \le \frac{\mu}{4 B_v^2}$ such that $2\eta^2 B_v^2 \le \frac{\eta\mu}{2}$. Thus the contraction multiplier is  bounded by $1 - \frac{\eta\mu}{2}$. Recursively unrolling from $t=0$ to $K$ and using the geometric series sum bound $\sum_{j=0}^{K-1} \left(1 - \frac{\eta\mu}{2}\right)^j \le \frac{2}{\eta\mu}$ establishes
\begin{equation}
\mathbb{E}\bigl[\|x_K - x^*\|^2\bigr] \le \left(1 - \frac{\eta\mu}{2}\right)^K \|x_0 - x^*\|^2 + \frac{4\eta B_v^2}{\mu}\|x_0 - x^*\|^2 + \frac{2\eta b_v^2}{\mu}.
\end{equation}
Using $L$-smoothness implication $f(x_t) - f(x^*) \le \frac{L}{2}\|x_t - x^*\|^2$ and $\mu$-star-convexity at  initialization $\|x_0 - x^*\|^2 \le \frac{2}{\mu}(f(x_0) - f(x^*))$ we have
\begin{equation}
\mathbb{E}[f(x_K) - f(x^*)] \le \frac{L}{\mu}\left(1 - \frac{\eta\mu}{2}\right)^K (f(x_0) - f(x^*)) + \frac{4 L\eta B_v^2}{\mu^2}(f(x_0) - f(x^*)) + \frac{L\eta b_v^2}{\mu}.
\end{equation}
To guarantee the last suboptimality is less than $\epsilon$, we ensure each term in the bound is less than $\frac{\epsilon}{3}$, i.e., $\frac{4 L\eta B_v^2}{\mu^2}(f(x_0) - f(x^*)) \le \frac{\epsilon}{3}$, $\frac{L\eta b_v^2}{\mu} \le \frac{\epsilon}{3}$, and $K \ge \frac{2}{\eta\mu} \log\left( \frac{3 L (f(x_0) - f(x^*))}{\mu \epsilon} \right)$. Since $S=1$ and $N_t=1$, the oracle complexity matches the iteration count $K$ evaluating to
\begin{equation}\label{eq:exact_bound_star}
K = \left\lceil \max\left( \frac{2 L}{\mu} \log\left( \frac{3 L \Delta}{\mu \epsilon} \right), \frac{8 B_v^2}{\mu^2} \log\left( \frac{3 L \Delta}{\mu \epsilon} \right), \frac{24 L B_v^2 \Delta}{\mu^3 \epsilon} \log\left( \frac{3 L \Delta}{\mu \epsilon} \right), \frac{6 L b_v^2}{\mu^2 \epsilon} \log\left( \frac{3 L \Delta}{\mu \epsilon} \right) \right) \right\rceil,
\end{equation}
which is readily $\mathcal{O}(\epsilon^{-1} \log(\epsilon^{-1}))$.
\end{proof}
\subsection{Proof of Theorem \ref{thm:pasta_weak}}\label{sec:pasta_weak}
\begin{proof}[Proof of Theorem \ref{thm:pasta_weak}]
Recall the strongly convex Tikhonov-regularized surrogate for epoch $s$ as $F_\lambda(x) = f(x) + \frac{\lambda}{2}\|x - x_{s-1}\|^2$ with unique minimizer $x_s^\star$. By setting $\beta_t = \lambda \eta$, the inner loop update evaluates to $x_{t+1} = x_t - \eta (g_t + \lambda(x_t - x_{s-1}))$. Applying the strong convexity of $F_\lambda$,
\begin{equation} \label{eq:inner_contraction}
\mathbb{E}_t\bigl[\|x_{t+1} - x_s^\star\|^2\bigr] \le (1 - 2\eta\mu)\|x_t - x_s^\star\|^2 + \eta^2 \mathbb{E}_t\bigl[\|g_t - \bar{g}_t\|^2\bigr] + 2\eta^2 \|\bar{g}_t\|^2 + 2\eta^2\lambda^2\|x_t - x_{s-1}\|^2
\end{equation}
where $\bar{g}_t = \mathbb{E}_t[g_t] \in \partial f(x_t)$. For $t=0$ we have $x_0 = x_{s-1}$. Substituting the variance bound from Assumption \ref{assump:oracle} and the Lipschitz bound $\|\bar{g}_t\|^2 \le G^2$ yields
\begin{equation} \label{eq:initial_contraction}
\mathbb{E}_0\bigl[\|x_1 - x_s^\star\|^2\bigr] \le (1 - 2\eta\mu)\|x_{s-1} - x_s^\star\|^2 + \frac{\eta^2}{N_0}\left(B_v^2\|x_{s-1} - x_0\|^2 + b_v^2\right) + 2\eta^2 G^2.
\end{equation}
For $t \ge 1$, expanding $\|x_t - x_0\|^2 \le 3\|x_t - x_s^\star\|^2 + 3\|x_s^\star - x_{s-1}\|^2 + 3\|x_{s-1} - x_0\|^2$ and $\|x_t - x_{s-1}\|^2 \le 2\|x_t - x_s^\star\|^2 + 2\|x_s^\star - x_{s-1}\|^2$ results in
\begin{equation} \label{eq:expanded_contraction}
\begin{aligned}
    \mathbb{E}_t\bigl[\|x_{t+1} - x_s^\star\|^2\bigr] &\le \left(1 - 2\eta\mu + \frac{3\eta^2 B_v^2}{N} + 4\eta^2\lambda^2\right)\|x_t - x_s^\star\|^2 + \left(\frac{3\eta^2 B_v^2}{N} + 4\eta^2\lambda^2\right)\|x_s^\star - x_{s-1}\|^2 \\&+ \frac{3\eta^2 B_v^2}{N}\|x_{s-1} - x_0\|^2 + \frac{\eta^2 b_v^2}{N} + 2\eta^2 G^2.
\end{aligned}
\end{equation}
The condition on $\eta$ ensures that the contraction coefficient is strictly bounded by $1 - \frac{\eta\mu}{2}$. Unrolling this recurrence from $t=1$ to $K$ for the final distance $e_s = x_K - x_s^\star$ yields
\begin{equation} \label{eq:epoch_error}
\begin{gathered}
\mathbb{E}_s\bigl[\|e_s\|^2\bigr] \le (1 - \eta\mu)^{K-1} \mathbb{E}_0\bigl[\|x_1 - x_s^\star\|^2 \bigr]+ \left(\frac{6\eta B_v^2}{\mu N} + \frac{8\eta\lambda^2}{\mu}\right)\|x_s^\star - x_{s-1}\|^2 \\ + \frac{6\eta B_v^2}{\mu N}\|x_{s-1} - x_0\|^2 + \frac{2\eta b_v^2}{\mu N} + \frac{4\eta G^2}{\mu}.
\end{gathered}
\end{equation}
Recall Lemma \ref{lem:moreau_gradient} which posits that the Moreau envelope is smooth such that using the descent lemma for it at $x_s = x_s^\star + e_s$ yields
\begin{equation} \label{eq:moreau_descent}
\mathbb{E}[\varphi_{1/\lambda}(x_s)] \le \mathbb{E}[\varphi_{1/\lambda}(x_{s-1})] - \frac{\mu}{4}\|x_{s-1} - x_s^\star\|^2 + \frac{3\lambda^2 - \rho^2}{2\mu} \mathbb{E}_s\bigl[\|e_s\|^2\bigr].
\end{equation}
Using the definitions for $K$ and $N$ alongside the substitution of \eqref{eq:epoch_error} into \eqref{eq:moreau_descent}, and applying the Moreau envelope property $\|x_{s-1} - x_s^\star\|^2 = \lambda^{-2} \|\nabla \varphi_{1/\lambda}(x_{s-1})\|^2$ results in
\begin{equation} \label{eq:stationarity_step}
\frac{\mu}{4\lambda^2} \mathbb{E}\bigl[\|\nabla \varphi_{1/\lambda}(x_{s-1})\|^2\bigr] \le \mathbb{E}[\varphi_{1/\lambda}(x_{s-1})] - \mathbb{E}[\varphi_{1/\lambda}(x_s)] + \Gamma \|x_{s-1} - x_0\|^2 + \Lambda
\end{equation}
where
\begin{equation} \label{eq:gamma_def}
\Gamma = \frac{3\lambda^2 - \rho^2}{2\mu} B_v^2 \left[ (1-\eta\mu)^{K-1} \frac{\eta^2}{N_0} + \frac{6\eta}{\mu N} \right]
\end{equation}
and
\begin{equation} \label{eq:lambda_def}
\Lambda = \frac{3\lambda^2 - \rho^2}{2\mu} \left[ (1-\eta\mu)^{K-1} \left( \frac{\eta^2 b_v^2}{N_0} + 2\eta^2 G^2 \right) + \frac{2\eta b_v^2}{\mu N} + \frac{4\eta G^2}{\mu} \right].
\end{equation}
Using a telescoping sum, we have
\begin{equation} \label{eq:telescoping_sum}
    x_{s-1} - x_0 = \sum_{i=1}^{s-1} (x_i - x_{i-1}).
\end{equation}
Using $\|\sum_{i=1}^n v_i\|^2 \le n \sum_{i=1}^n \|v_i\|^2$, yields
\begin{equation} \label{eq:cauchy_schwarz_apply}
    \|x_{s-1} - x_0\|^2 = \left\| \sum_{i=1}^{s-1} (x_i - x_{i-1}) \right\|^2 \le (s-1) \sum_{i=1}^{s-1} \|x_i - x_{i-1}\|^2.
\end{equation}
Taking the total expectation of \eqref{eq:cauchy_schwarz_apply} and summing over the epoch index $s$ from $1$ to $S$ yields
\begin{equation} \label{eq:summed_expectation}
    \sum_{s=1}^S \mathbb{E}\bigl[\|x_{s-1} - x_0\|^2\bigr] \le \sum_{s=1}^S \left( (s-1) \sum_{i=1}^{s-1} \mathbb{E}\bigl[\|x_i - x_{i-1}\|^2\bigr] \right).
\end{equation}
Using $\sum_{i=1}^{s-1} \mathbb{E}\bigl[\|x_i - x_{i-1}\|^2\bigr] \le \sum_{i=1}^S \mathbb{E}\bigl[\|x_i - x_{i-1}\|^2\bigr]$ and $ \sum_{s=1}^S (s-1) = \frac{S(S-1)}{2}$ we have
\begin{equation} \label{eq:final_cauchy_bound}
    \sum_{s=1}^S \mathbb{E}\bigl[\|x_{s-1} - x_0\|^2\bigr] \le \frac{S^2}{2} \sum_{i=1}^S \mathbb{E}\bigl[\|x_i - x_{i-1}\|^2\bigr] = \frac{S^2}{2} \sum_{s=1}^S \mathbb{E}\bigl[\|x_s - x_{s-1}\|^2\bigr].
\end{equation}
by changing the dummy variable index in the upper bound from $i$ to $s$.
Using the bound $\|x_s - x_{s-1}\|^2 \le 2\|x_{s-1} - x_s^\star\|^2 + 2\|e_s\|^2$ and summing the relation established in \eqref{eq:epoch_error} over $S$ epochs
\begin{equation} \label{eq:step_sum_bound}
\begin{aligned}
    \sum_{s=1}^S \mathbb{E}\bigl[\|x_s - x_{s-1}\|^2\bigr] &\le 2\left(1 + (1-\eta\mu)^{K-1} + \frac{6\eta B_v^2}{\mu N} + \frac{8\eta\lambda^2}{\mu}\right) \sum_{s=1}^S \mathbb{E}\bigl[\|x_{s-1} - x_s^\star\|^2\bigr] \\&+ \frac{4\mu}{3\lambda^2 - \rho^2} \Gamma \sum_{s=1}^S \mathbb{E}\bigl[\|x_{s-1} - x_0\|^2\bigr] + \frac{4\mu S}{3\lambda^2 - \rho^2} \Lambda.
\end{aligned}
\end{equation}
Substituting the sum derived from rearranging \eqref{eq:moreau_descent} leads to
\begin{equation} \label{eq:step_sum_recursive}
\begin{aligned}
    \sum_{s=1}^S \mathbb{E}\bigl[\|x_s - x_{s-1}\|^2\bigr] &\le \left( \frac{16}{\mu} + \frac{8\mu}{3\lambda^2 - \rho^2} \right) \Delta \\&+ \left( \frac{16}{\mu} + \frac{8\mu}{3\lambda^2 - \rho^2} \right) \Gamma \frac{S^2}{2} \sum_{s=1}^S \mathbb{E}\bigl[\|x_s - x_{s-1}\|^2\bigr] + \left( \frac{16}{\mu} + \frac{8\mu}{3\lambda^2 - \rho^2} \right) \Lambda S.
\end{aligned}
\end{equation}
Our batch size definitions ensure $\left( \frac{16}{\mu} + \frac{8\mu}{3\lambda^2 - \rho^2} \right) \Gamma S^2 \le 1$ such that
\begin{equation} \label{eq:isolated_bounds}
\sum_{s=1}^S \mathbb{E}\bigl[\|x_s - x_{s-1}\|^2\bigr] \le 2 \left( \frac{16}{\mu} + \frac{8\mu}{3\lambda^2 - \rho^2} \right) \Delta + 2 \left( \frac{16}{\mu} + \frac{8\mu}{3\lambda^2 - \rho^2} \right) \Lambda S
\end{equation}
which yields 
\begin{equation}
    \sum_{s=1}^S \mathbb{E}\bigl[\|x_{s-1} - x_0\|^2\bigr] \le S^2 \left( \frac{16}{\mu} + \frac{8\mu}{3\lambda^2 - \rho^2} \right) \Delta + S^3 \left( \frac{16}{\mu} + \frac{8\mu}{3\lambda^2 - \rho^2} \right) \Lambda.
\end{equation}
Substituting this back into \eqref{eq:stationarity_step} yields
\begin{equation} \label{eq:final_suboptimality}
\begin{aligned}
    \frac{1}{S}\sum_{s=1}^S \mathbb{E}\bigl[\|\nabla \varphi_{1/\lambda}(x_{s-1})\|^2\bigr] &\le \frac{4\lambda^2}{\mu}\left[ \frac{\Delta}{S} + \Gamma S \left( \frac{16}{\mu} + \frac{8\mu}{3\lambda^2 - \rho^2} \right) \Delta + \Gamma S^2 \left( \frac{16}{\mu} + \frac{8\mu}{3\lambda^2 - \rho^2} \right) \Lambda + \Lambda \right] \\&\le \frac{4\lambda^2}{\mu}\left( \frac{2\Delta}{S} + 2\Lambda \right).
\end{aligned}
\end{equation}
By our parameter selection we ensure $\frac{8\lambda^2\Delta}{\mu S} \le \frac{\epsilon^2}{2}$ and $\frac{8 \lambda^2 \Lambda}{\mu} \le \frac{\epsilon^2}{2}$, which establishes we have an $\epsilon$-stationary solution, i.e. $\frac{1}{S}\sum_{s=1}^S \mathbb{E}\bigl[\|\nabla \varphi_{1/\lambda}(x_{s-1})\|^2\bigr] \le \epsilon^2$. The total  stochastic oracle complexity can be found according to
\begin{equation} \label{eq:total_complexity}
S(N_0 + K N) \le \left( \frac{16 \lambda^2 \Delta}{\mu \epsilon^2} + 1 \right) \left[ N_0 + \left( 2 + \frac{1}{\eta\mu} \log\left( \frac{12(3\lambda^2 - \rho^2)}{\mu^2} \right) \right) N \right].
\end{equation}
Substituting the definitions for $\eta$, $N_0$, and $N$ leads to a precise, yet ugly expression (arguably, even uglier than \eqref{eq:exact_bound_mss}). Nonetheless, the oracle complexity is readily $\mathcal{O}(\epsilon^{-6})$.
\end{proof}
\end{document}